\DeclareMathOperator*{\minimize}{minimize}
\DeclareMathOperator*{\maximize}{maximize}
\newcommand{\Rho}{\raisebox{0pt}{\scalebox{1}{$P$}}}
\newcommand{\Eta}{\raisebox{0pt}{\scalebox{1}{$H$}}}
\newtheorem{theorem}{Theorem}
\newtheorem{lemma}{Lemma}
\newtheorem{definition}{Definition}
\newtheorem{corollary}{Corollary}
\newtheorem{assumption}{Assumption}
\newcommand{\vast}{\bBigg@{4}}
\newcommand{\Vast}{\bBigg@{5}}
\begin{document}

\title{\huge Latency Optimization for Blockchain-Empowered Federated Learning in Multi-Server Edge Computing}
	
	\author{Dinh C. Nguyen,~\IEEEmembership{Member,~IEEE,} Seyyedali Hosseinalipour,~\IEEEmembership{Member,~IEEE,}
	David J. Love,~\IEEEmembership{Fellow,~IEEE,}
	\\
		Pubudu N. Pathirana,~\IEEEmembership{Senior Member,~IEEE,} and 
	Christopher G. Brinton,~\IEEEmembership{Senior Member,~IEEE}
		
		\thanks{Dinh C. Nguyen, Seyyedali Hosseinalipour,  David J. Love, and Christopher G. Brinton are with the Elmore Family School of Electrical and Computer Engineering, Purdue University, USA (e-mails: \{nguye772, hosseina, djlove, cgb\}@purdue.edu)}
		\thanks{Pubudu N. Pathirana is with the School of Engineering, Deakin University, Waurn Ponds, VIC 3216, Australia (e-mail: pubudu.pathirana@deakin.edu.au).}}
	
	\markboth{Accepted to IEEE Journal on Selected Areas in Communications}%
	{}

	\maketitle
	
	\begin{abstract}
	In this paper, we study a new latency optimization problem for blockchain-based federated learning (BFL) in multi-server edge computing. In this system model, distributed mobile devices (MDs) communicate with a set of edge servers (ESs) to handle both machine learning (ML) model training and block mining simultaneously. To assist the ML model training for resource-constrained MDs, we develop an offloading strategy that enables MDs to transmit their data to one of the associated ESs. We then propose a new decentralized ML model aggregation solution at the edge layer based on a consensus mechanism to build a global ML model via peer-to-peer (P2P)-based blockchain communications. \textcolor{black}{Blockchain builds trust among MDs and ESs to facilitate reliable ML model sharing and cooperative consensus formation, and enables rapid elimination of manipulated models caused by poisoning attacks.} We formulate latency-aware BFL as an optimization aiming to minimize the system latency via joint consideration of the data offloading decisions, MDs' transmit power, channel bandwidth allocation for MDs' data offloading, MDs' computational allocation, and hash power allocation. Given the mixed action space of discrete offloading and continuous allocation variables, we propose a novel deep reinforcement learning scheme with a parameterized advantage actor critic algorithm. We theoretically characterize the convergence properties of BFL in terms of the aggregation delay, mini-batch size, and number of P2P communication rounds. \textcolor{black}{Our numerical evaluation demonstrates the superiority of our proposed scheme over baselines in terms of model training efficiency, convergence rate, system latency, and robustness against model poisoning attacks.}
	\end{abstract}
	
	\begin{IEEEkeywords}
		Federated learning, Blockchain, edge computing, actor-critic learning, network optimization. 
	\end{IEEEkeywords}
	
	\IEEEpeerreviewmaketitle

\section{Introduction}
\label{introduce}
In recent years, the demand for deploying machine learning (ML) in wireless networks and Internet of Things (IoT) applications has increased dramatically. However, due to growing concerns associated with data privacy, it is infeasible to transmit all collected data from IoT edge devices to a central location (e.g., a datacenter) for model training. Federated learning (FL) has emerged as a popular approach for distributed ML which allows for model training without requiring data sharing \cite{2,3}. \textcolor{black}{Under FL, devices operate as workers to train local ML models using their own datasets, and exchange their model updates with an aggregator, e.g., an edge server (ES), over multiple communication rounds to converge on a global model.} While FL distributes the data processing step across devices, the model aggregation step is still often carried out at a single location, which imposes security issues including single-point-of-failure and server malfunction. Additionally, this poses scalability restrictions for ML training processes, especially as the number of IoT devices involved and their geographic reach continue to expand \cite{4}. Therefore, it is desirable to develop a more decentralized FL architecture for realizing scalable model training while preserving security in next-generation intelligent networks. 

In this context, blockchain is a promising technology for enabling reliable decentralized FL via its peer-to-peer (P2P) networking topology empowered by an immutable, transparent and tamper-proof data ledger \cite{5}. The use of blockchain in FL can mitigate the single-point-of-failure issue, and builds trust between devices and multiple servers for secure ML model training \cite{11}. Given these benefits, blockchain-based federated learning (BFL) has been recently investigated in different domains, such as vehicular communications \cite{7} and mobile crowdsensing \cite{8}. To implement BFL systems, devices need to interface with decentralized servers in settings where communication and computation resources are limited, which will impact the ML model training quality. \textcolor{black}{Moreover, each device  exhibits interest in participating in block mining to further gain blockchain rewards, e.g., cryptocurrency tokens, which in turn enhances the reliability and security for FL.} This leads to the concept of \textit{mobile mining} which has been adopted in practical BFL environments \cite{9,10}. The concurrence of both ML model training and block mining introduces new challenges to network service latency and resource management, motivating a holistic optimization architecture for efficient BFL in wireless networks. 

\subsection{Related Works}
We summarize related works in latency optimization and resource allocation for standard FL and decentralized BFL. 

\subsubsection{Standard FL} 
Latency optimization has recently received significant attention in FL research. The work in \cite{12} proposed a joint device scheduling and bandwidth allocation framework for wireless FL to improve the convergence rate of ML model training.  Another study in \cite{13} investigated semi-asynchronous FL, focusing on the convergence analysis of model training under edge heterogeneity and non-independent and identically distributed (non-IID) data distributions across the edge devices. An asynchronous FL scheme was considered in \cite{14} for unmanned aerial vehicles (UAVs)-assisted  networks to minimize the model exchange latency and ML training loss via deep reinforcement learning (DRL). To mitigate  straggler effects caused by resource-limited clients, the authors in \cite{26}  presented a partial offloading-assisted FL scheme using game theory. A partial offloading-based FL solution was also proposed in \cite{15} for edge computing, focusing on the delay analysis of the data offloading and model update. Similarly, a convex optimization approach was applied in \cite{16} to minimize the energy consumption of model updating and sharing.

\subsubsection{Decentralized and Blockchain-based FL}
Several recent works have considered techniques for decentralizing FL aggregation schemes. The authors in \cite{17} introduced a decentralized FL scheme based on model segmentation with a gossip protocol for client sampling in each model aggregation round. In \cite{18}, a decentralized FL solution was proposed using the device-to-device (D2D) concept in serverless edge networks, with a graph-coloring based scheduling policy to characterize the data communications between two devices. Our recent works in \cite{19,20} developed D2D-based semi-decentralized and hybrid FL frameworks, where we jointly modeled communication efficiency and statistical heterogeneity of data and obtained new convergence bounds for distributed ML.

Recently, the integration of blockchain into FL has been investigated. The study in \cite{21} analyzed end-to-end latency for ML model training, update transmission and block mining in a BFL system. \cite{22} analyzed the communication latency and consensus delays for BFL-based vehicular networks, deriving an optimal block arrival rate for vehicles based on system dynamics. The authors in \cite{23} focused on developing a bandwidth allocation and device scheduling solution for digital twin-enabled BFL. Moreover, the work in \cite{24} proposed a BFL-based privacy-preserving UAV network to optimize the energy consumption of UAVs, vehicular device service coverage and a composite service hit ratio. In  recent work \cite{25}, a dynamic resource allocation framework for BFL was proposed with a focus on maximizing the training data size with respect to energy usage constraints. 

\begin{table}
	\scriptsize
	\centering
    \caption{\textcolor{black}{Comparison of methodology design features between our paper and related works in latency optimization and resource allocation for FL.}}
	\textcolor{black}{\begin{tabular}{|p{2.7cm}||p{0.2cm}|p{0.2cm}|p{0.23cm}|p{0.23cm}|p{0.23cm}|p{0.23cm}|p{0.23cm}|p{0.4cm}|c|}
		\hline
		\centering \multirow{2}{*}{\textbf{Features}} 
		& 	\cite{12}  &	\cite{26} &	\cite{19,20}&	\cite{23} &	\cite{ganguly2022multi}&	\cite{hosseinalipour2022parallel}&\cite{31}, \cite{37} &	Our work\\
		\hline
		FL design in multi-edge servers&	&\checkmark		&	&\checkmark	&	\checkmark&&	& 	\checkmark 
		\\ \hline
		Data offloading-assisted FL&	&\checkmark		&	&		&\checkmark		&\checkmark&&	 	\checkmark
		\\ \hline
        P2P consensus-based model aggregation &		&	&\checkmark		&&	&	 	&&	\checkmark
        \\ \hline
        Blockchain-based FL design &		&	&	&\checkmark	&& 	&&	\checkmark
        \\ \hline
        DRL-based resource allocation&	&	&	&	\checkmark	&&	&\checkmark&	 \checkmark
        \\ \hline
        Parameterized A2C design&	&	&		&	&	&	&& 	\checkmark
        \\ \hline
        FL latency optimization&\checkmark	&	&	&	&	&	\checkmark&&	 	\checkmark
        \\ \hline
	\end{tabular}}
	\label{table:FeatureComparisons}
	\vspace{-0.1in}
\end{table}

\subsection{\textcolor{black}{Motivations and Key Contributions}}
\label{SectionI-motivate}
Despite such research efforts, several limitations still exist in current BFL works, which are highlighted below:
\begin{itemize}
	\item Most current standard FL \cite{12,13}, \cite{15} and decentralized FL frameworks \cite{19} still rely on a single ES to coordinate model aggregations.  In these architectures, single-point-of-failure bottlenecks may disrupt the entire FL system if the server is attacked. Only the work in \cite{23} has considered a multi-server edge computing model for BFL, but its model aggregation still follows traditional FL.
	\item End-to-end latency optimization, i.e., for both model training and block mining, remains understudied in current BFL systems \cite{21,22,23}. Existing works mostly aim to characterize, rather than optimize, BFL latency. \textcolor{black}{Moreover, the benefits of blockchain to support robust BFL training against model attacks have not been yet investigated.}
	\item The potential benefits of edge computing have not been well exploited in existing BFL schemes. Most works \cite{14,16,25} have not considered practical resource constraints of mobile IoT devices and the potential for ESs to mitigate resulting straggler effects. Only \cite{26,15} have considered such a scenario, with the focus instead on partial offloading for traditional FL.
	\item None of the existing works have analyzed the convergence properties of BFL in a multi-server system. A comprehensive theoretical analysis will provide insights into BFL operations, leading to potential optimization techniques.
\end{itemize}
\textcolor{black}{Motivated by the aforementioned limitations, we propose a novel consensus-based BFL model for efficient and robust ML model training via blockchain. Specifically,  we develop a new cooperative offloading-assisted model learning and resource trading-assisted block mining framework for FL. We then propose a partial model aggregation solution for facilitating global model aggregations at the edge layer using blockchain-enabled P2P communications. Blockchain is important for our methodology in two key ways: (i) it builds trust among MDs and ESs to facilitate reliable ML model sharing and cooperative consensus formation for our federated learning approach; and (ii) it allows for rapid elimination of manipulated models from compromised ESs caused by poisoning attacks, thereby enhancing the robustness of global model training.  The system latency is subsequently formulated by considering both model learning latency and block mining latency, which is then optimized by a  parameterized actor-critic algorithm.} The comparison of our paper with related works in terms of several key design features is summarized in Table~\ref{table:FeatureComparisons}. In summary, the unique contributions of this paper are:
\begin{enumerate}
	\item We propose a multi-server-assisted BFL architecture, where geo-distributed mobile devices (MDs) communicate with a set of ESs for ML model training and block mining simultaneously. To mitigate straggler effects caused by resource-constrained MDs, an offloading strategy is proposed that enables MD data transmission to an ES for ML model training. Moreover, we develop a resource trading strategy to alleviate block mining latency from resource-limited MDs.
	\item We provide a holistic convergence analysis of BFL. In doing so, we consider a new partial model aggregation solution for facilitating global model aggregations at the edge layer via P2P-based blockchain communications. Our resulting bound reveals the impact of the aggregation delay, mini-batch size, and number of P2P communication rounds on the convergence rate.
	\item We formulate a new system latency minimization problem, taking into account both offloading-assisted ML model training latency, model consensus latency  and block mining latency. This optimization couples data offloading decisions, MD transmit powers, and the allocation of channel bandwidth, MD computation, and hash power resources to minimize latency with a model quality constraint.
	\item To solve the resulting optimization over a mixed discrete and continuous solution space, we propose a novel DRL method based on a parameterized advantage actor critic (A2C) algorithm. We provide a holistic design of the actor, including offloading and allocation policies empowered by trust region policy optimization (TRPO), along with a critic for the state-value training. 
	\item We conduct numerical experiments for both our consensus-based BFL and parameterized  A2C schemes. The results reveal that our BFL scheme outperforms existing FL approaches in terms of model loss and accuracy convergence in both IID and non-IID data settings. Our proposed parameterized A2C scheme also helps lower the system latency by up to 38\% compared with state-of-the-art DRL schemes. \textcolor{black}{Moreover, our blockchain-empowered BFL scheme shows  high robustness against model poisoning attacks. }
\end{enumerate}

\textcolor{black}{\subsection{Paper Organization}
The remainder of this paper is organized as follows. Section~\ref{Section:SystemModel} presents the BFL architecture and its different components including the ML training and model aggregation procedures. Moreover, we conduct an analysis of the BFL model training to characterize the impact of different system parameters on learning convergence. In Section~\ref{Section:Latency}, we formulate the corresponding system latency minimization problem, taking into account offloading-assisted ML model training latency, model consensus latency, and block mining latency. To solve the formulated problem, we propose a DRL method based on a parameterized advantage actor critic (A2C) algorithm in Section~\ref{Section:DRL}, where the utility measure captures the latency objective and the action space is restricted by the learning and resource constraints. We present experiments comparing latency and accuracy obtained by our methodology against several baselines  in Section~\ref{Section:Simulate}. Finally, Section~\ref{Section:Conclude} concludes the paper. The key acronyms and notations used in this paper are summarized in Table~\ref{Table:listofAcronyms} and Table~\ref{Table:listofNotation}, respectively. }

\begin{table}
	\caption{\textcolor{black}{List of key acronyms.}}
	\label{Table:listofAcronyms}
	\scriptsize
	\centering
	\captionsetup{font=scriptsize}
	\setlength{\tabcolsep}{5pt}
	\textcolor{black}{\begin{tabular}{|p{1cm}|p{2.6cm}|p{1cm}|p{2.6cm}|}
		\hline
		\textbf{Acronym}& 
		\textbf{Definition}&
		\textbf{Acronym}& 
		\textbf{Definition}
		\\
		\hline
		FL &	Federated learning & ML &Machine learning
		\\
		\hline
        BFL & Blockchain-based federated learning &MD &Mobile device
        \\
        \hline
        ES &Edge server & P2P &Peer-to-peer
        \\
        \hline
         UAV &Unmanned aerial vehicle & IID & Independent and identically distributed
        \\
        \hline
         DRL &Deep reinforcement learning & A2C &Advantage actor critic
        \\
        \hline
         TRPO &Trust region policy optimization & SGD &Stochastic gradient descent 
        \\
        \hline
         DNN &Deep neural network &   DDPG &Deep deterministic policy gradient
        \\
		\hline
		MSPBE &Mean Squared Projected Bellman Error &   KL&Kullback–Leibler
        \\
		\hline
	\end{tabular}}
	\label{tab1}
	\vspace{-0.15in}
\end{table}


\begin{table}
	\caption{\textcolor{black}{List of key  notations.}}
	\label{Table:listofNotation}
	\scriptsize
	\centering
	\captionsetup{font=scriptsize}
	\setlength{\tabcolsep}{5pt}
	\textcolor{black}{\begin{tabular}{|p{1cm}|p{2.6cm}|p{1cm}|p{2.6cm}|}
		\hline
		\textbf{Notation}& 
		\textbf{Definition}&
		\textbf{Notation}& 
		\textbf{Definition}
		\\
		\hline
		$M$& Number of ESs & $N$  & Number of MDs
		\\
		\hline
       $K$ & Number of sub-channels  & $D_n$  &   MD's Data size
        \\
        \hline
        $f^{\mathsf{\ell}}_n$ &  MD's CPU workload  & $f_m$ &  ES's CPU workload 
        \\
        \hline
        $p_n$ & MDs' transmit power & $b_{n,g}$ & MD's bandwidth
        \\
        \hline
         $\vartheta$ & MD's model size & $\Psi_n$ & MD's hash rate
        \\
        \hline
         $\hbar$ & Hash amount of a block  & $b_{m'}$ & ES's bandwidth
        \\
        \hline
         $p_{m'}$ & ES's transmit power &     $g$&Wireless channel
         \\
        \hline
        $x_{n,m,g}$&MD's offloading decision &  $k$ &Global aggregation round
        \\
        \hline
         $\boldsymbol{w}$&ML model parameter & $\textbf{x}_{m}$&ES's gradient
        \\
        \hline
        $T_{n,m}^{\mathsf{off},{(k)}}$ &MD's  offloading latency &        $T^{\mathsf{exe},(k)}_m$&ES's   execution latency
        \\
		\hline
		$T_n^{\mathsf{loc},(k)}$ &MD's local data processing latency & $T_{n,m}^{\mathsf{up},(k)}$ &MD's model uploading latency
		\\
		\hline
		$T_m^{\mathsf{update},{(k)}}$ &ES's model updating latency & $T^{\mathsf{learn},(k)}$ &Total learning latency
		\\
		\hline
		$T^{\mathsf{cons},(k)}$ &Total model consensus latency &   $T_n^{\mathsf{mine},(k)}$ &Total mining latency
		\\
		\hline
	\end{tabular}}
	\label{tab1}
	\vspace{-0.15in}
\end{table}

\section{System Model}
\label{Section:SystemModel}
\begin{figure}
	\centering
	\includegraphics [width=0.99\linewidth]{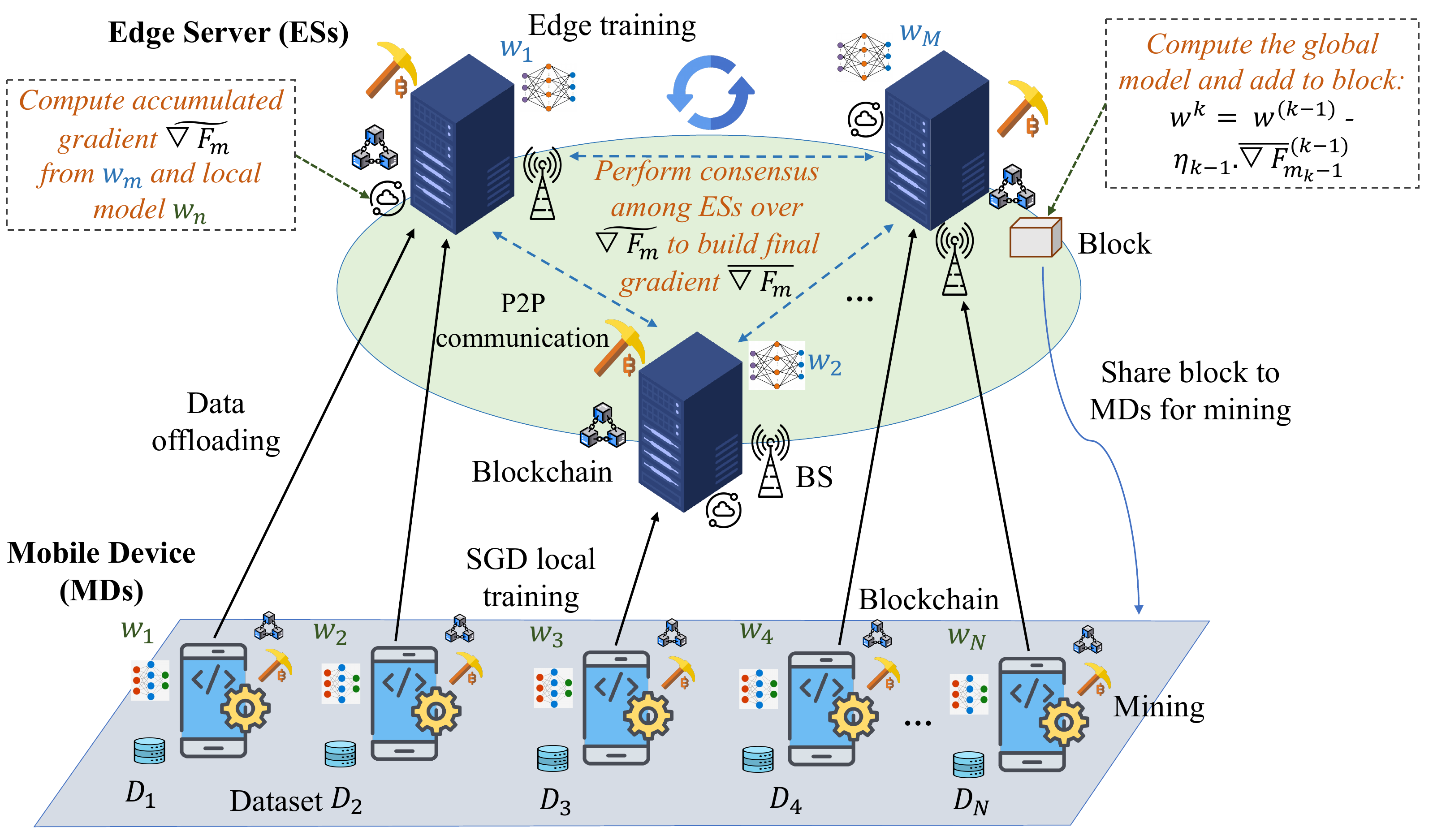}
	\caption{{Our proposed BFL architecture in multi-server edge computing.}}
	\label{Fig:Overview}
	\vspace{-0.2in}
\end{figure}

In this section, we describe our BFL system and detail the BFL task model, and then analyze its convergence properties. 
\subsection{Overall System Architecture}
Our proposed BFL architecture is illustrated in Fig.~\ref{Fig:Overview}. Each ES is located at a base station (BS) to provide computation services for multiple MDs concurrently. We consider $N$ MDs gathered in the set $\mathcal{N}$, $N=|\mathcal{N}|$, which are connected to $M$ ESs collected in the set $\mathcal{M}$, $M=|\mathcal{M}|$, in a multi-server edge computing network. The goal of the system is to train an ML model, with training proceeding through a series of global aggregation rounds collected via set $\mathcal{K}$. In each round $k\in\mathcal{K}$, each MD $n$ possesses a dataset $\mathcal{D}_n^{(k)}$, which may vary from one round to the next, with size $D_n^{(k)} = |\mathcal{D}_n^{(k)}|$. Each dataset $\mathcal{D}_n^{(k)}$ contains multiple data samples, each consisting of an input feature vector and (e.g., supervised learning) a label. The MDs employ these datasets for local training in round $k$, formalized in Section~\ref{subsection:BFL_model}. The total dataset is given by the set $\mathcal{D}^{(k)} = \cup_{n \in \mathcal{N}}\mathcal{D}_n^{(k)}$ with size $D^{(k)} = \sum_{n \in \mathcal{N}} D_n^{(k)}$.

The interactions between MDs, ESs, and blockchain components of our BFL system are summarized as follows:
\begin{itemize}
    \item \textit{MDs} participate in ML model training in a federated manner to serve their intelligence applications (e.g., object detection). Moreover, they work as blockchain nodes to mine blocks containing global models in each communication round to support BFL model sharing.
    \item \textit{ESs} assist in MD model training by providing computation resources through the offloading process. They also coordinate the model aggregation process to build the global model that is shared with MDs via blockchain.
    \item \textit{Blockchain} \textcolor{black}{allows MDs to securely transmit their computed local model to ESs via blockchain. It also facilitates the model consensus process between ESs with a traceable data ledger. Moreover, blockchain enables secure global model sharing from ESs to MDs.}  \textcolor{black}{After the model aggregation process completes, the global model is added to the blockchain, where mining is executed for secure model sharing. Specifically, an ES will be randomly sampled to function as a leader node and build an unverified block which contains the global model. This block is then shared with other ESs and MDs for mining. Subsequently, each MD downloads the verified block from the blockchain to extract the global model which is used for the next training round.}   As illustrated in Fig.~\ref{Fig:Blockchain_Arch}, each block includes (i) a header, with a hash and cryptographic nonce, and (ii) the data part. To construct a block, an ES generates a \textit{transaction} using its aggregated model and \textit{hashes} it, resulting in an output of a fixed length. The block is then shared with the MDs for model training.
\end{itemize}
\begin{figure}
	\centering
	\includegraphics [width=0.95\linewidth]{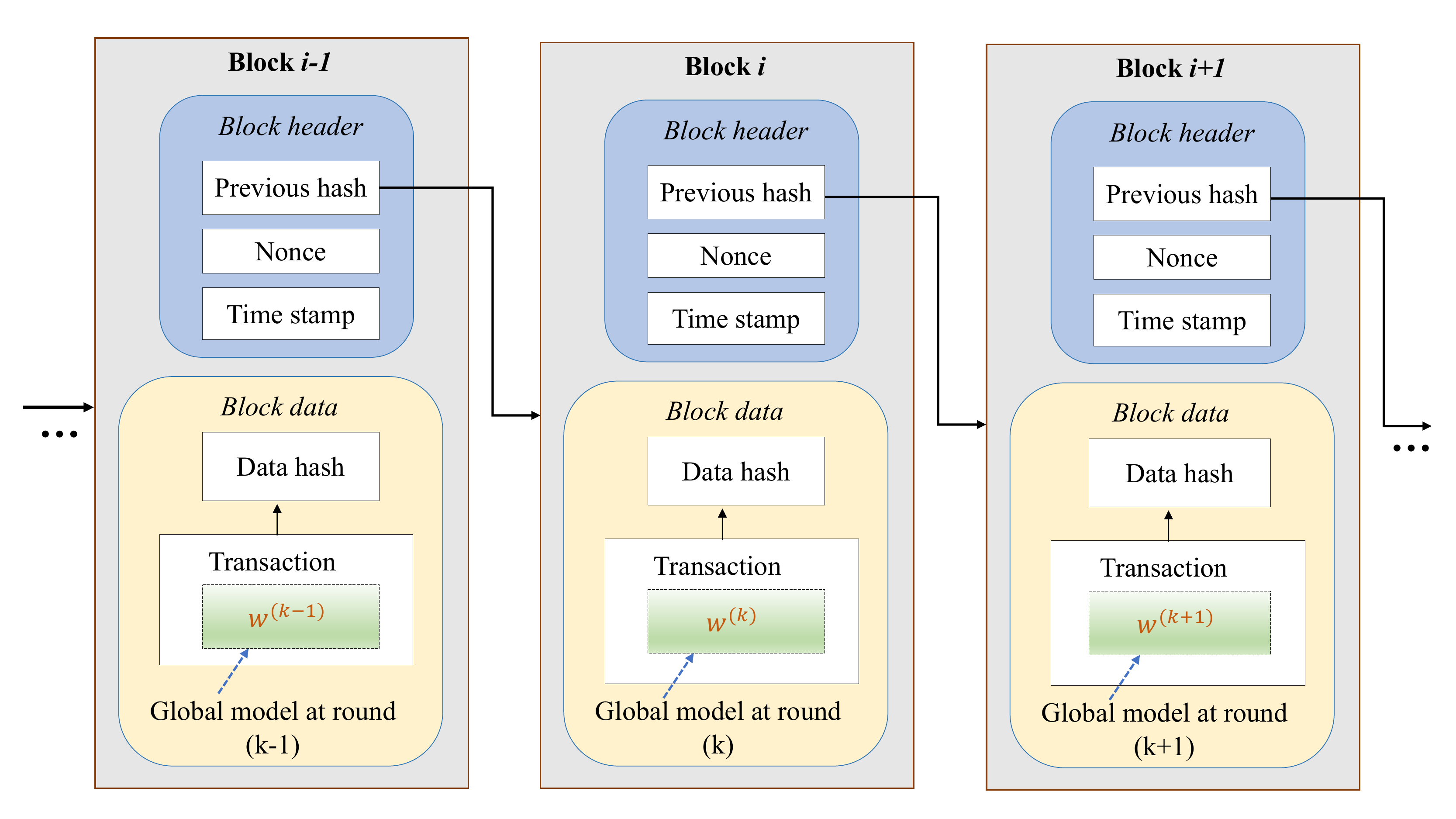}
	\caption{\textcolor{black}{A block architecture in our blockchain for global model sharing, where a global model $\boldsymbol{w}^{(k)}$ is embedded into the transaction of a block. The transaction is also used to transfer the local gradient $ {\nabla {F}}_n^{\mathsf{C},(k)}$ of MDs and edge gradient $\textbf{x}_{m}^{(l)}$ of ESs in the model uploading and model consensus processes, respectively. }}
	\label{Fig:Blockchain_Arch}
	\vspace{-0.2in}
\end{figure}

We let $\mathcal{G}$, $G=|\mathcal{G}|$, denote the set of available sub-channels for communication at a BS. We assume that uplink communications between MDs and ESs follow an OFDMA-based protocol, where each MD is assigned to a sub-channel $g \in \mathcal{G}$ to offload its data in each time slot, and thus each ES can serve at most $G$ MDs in every offloading period. We define the data offloading policy, which incorporates the uplink sub-channel scheduling, through a binary variable $x_{n,m,g}^{(k)}$,  ($n \in \mathcal{N}, m \in \mathcal{M}, g \in \mathcal{G}$), where $ x_{n,m,g}^{(k)} = 1$ indicates that data $\mathcal{D}^{(k)}_n$ from MD $n$ is offloaded to ES $m$ via sub-channel $g$ in round $k$, and $x_{n,m,g}^{(k)} = 0$ otherwise. Each dataset can be either trained locally at the MD or offloaded to the ES under a feasible offloading policy  $\boldsymbol{X}^{(k)} = \{x_{n,m,g}^{(k)}| x_{n,m,g}^{(k)} \in \{0,1 \}, \forall n \in \mathcal{N}, m \in \mathcal{M}, g \in \mathcal{G}\}$. In each training round, the BFL operation consists of four key steps, as depicted in Fig.~\ref{Fig:Overview}:
\begin{enumerate}
	\item \textit{Data Offloading and Processing:} Depending on its available resource, each MD can choose to offload its data to one of the nearby ESs for edge learning, or learn the model locally via local data processing. We assume that the model learning process begins once the offloading phase is completed~\cite{15}. This assumption is realistic in practical scenarios, where each MD needs to obtain an offloading policy on whether it should offload the data to an ES or not, before it  allows the associated ES to train its data.  For both training cases, MDs and ESs conduct stochastic gradient descent (SGD) iterations and  synchronize their model parameters through  block mining.  \textcolor{black}{After model training, a MD uploads its ML model parameters to its nearby ES via blockchain.} Considering an ES, if it receives data from MDs, it allocates its resources to conduct ML model training.
	\item \textit{Partial Model Aggregation and Consensus Update:} \textcolor{black}{Each ES combines its computed ML model with models received from its associated MDs to perform a partial model aggregation. \textcolor{black}{Then, ESs will join a consensus update process in which they conduct multiple rounds of blockchain-enabled \textit{peer-to-peer (P2P)} communications to exchange their models.}} After the consensus update, an ES will be randomly sampled to work as a leader and build an unverified block that contains the aggregated model for mining.
	\item \textit{Block Mining:} \textcolor{black}{The leader ES broadcasts the block via the blockchain to the connected ESs and MDs for mining.} In this work, we are mostly interested in the mining latency from the users' perspective, and thus we analyze the mining process at the MDs. Given resource constraints at the MDs, we develop a resource trading strategy where MDs can purchase hash power from the edge/cloud service provider (ESP) (e.g., Amazon cloud services) to run the mining task~\cite{nguyen2020privacy}. 
	\item \textit{Block Generation:} 
	After receipt of the unverified block,
	the MD that is the first to successfully verify the block will append it to the blockchain and receive a reward. Then, each MD downloads the verified block via its blockchain account~\cite{nguyen2020privacy} that contains the global model, and uses this for local model synchronization to begin the next round of model training.
\end{enumerate}
\subsection{\textcolor{black}{Federated Learning Model}}
\label{subsection:BFL_model}
Let $f(\boldsymbol{w}, i) \in \mathds{R}$ denote the loss function of the ML model (e.g., neural network) associated with data point $i$, where $\boldsymbol{w} \in \mathds{R}^d$ is the parameter vector (e.g., weights on neurons). \textcolor{black}{In this work, we consider a scenario of full ML training at MDs, i.e., each MD either keeps its entire data locally or offloads it completely to an ES.  When the MD keeps its data local, it offloads the entire trained ML model to the ES.} We measure the \textit{online loss function} of each MD $n$ at each global aggregation round $k$ as
\begin{equation}
\label{loss_device}
F_n^{(k)}(\boldsymbol{w}) = \frac{1}{D_n^{(k)}} \sum_{i \in \mathcal{D}_n^{(k)}} f(\boldsymbol{w}, i),
 \end{equation}
 and subsequently define the \textit{online global loss} as
 \begin{equation}
\label{loss_device}
F^{(k)}(\boldsymbol{w}) = \frac{1}{D^{(k)}} \sum_{n \in \mathcal{N}} D_n^{(k)} F^{(k)}_n(\boldsymbol{w}),~D^{(k)}=\sum_{n \in \mathcal{N}} D_n^{(k)}.
 \end{equation}

In our BFL system, each ES processes datasets offloaded from a portion of MDs, and also receives the computed ML models uploaded from another portion of nearby MDs which had chosen local processing. We denote $\mathcal{N}_m^{\mathsf{off},(k)}$ and $\mathcal{N}_m^{\mathsf{loc},(k)}$ as the sets of MDs engaged in data offloading and model uploading with ES $m$, respectively ($\mathcal{N}_m^{\mathsf{off},(k)}, \mathcal{N}_m^{\mathsf{loc},(k)} \subseteq \mathcal{N}$ and $\mathcal{N}_m^{\mathsf{off},(k)} \cap \mathcal{N}_m^{\mathsf{loc},(k)} = \emptyset$).  
\textcolor{black}{Letting $\mathcal{D}_m^{(k)} = \cup_{n \in \mathcal{N}_m^{\mathsf{off},(k)}} \mathcal{D}_n^{(k)}$ denote ES $m$'s dataset received from MDs, with size ${D}_m^{(k)}$, the loss at ES $m$ is given as 
\begin{equation}\label{loss_server}
F_m^{(k)}(\boldsymbol{w}) = \frac{1}{{D}_m^{(k)}}  \sum_{i \in \mathcal{D}_m^{(k)}} f(\boldsymbol{w},i). 
\end{equation}}

We now formalize the ML model training procedure at ESs and MDs. Each model training round $k$ starts with the broadcast of a global model, $\boldsymbol{w}^{(k)}$, from one of the ESs.
During round $k$, each
	ES $m$ performs $e^{(k)}_m$ iterations of SGD over its local/offloaded dataset, which may vary from one ES to another, where the evolution of its local model parameters is given by
	\begin{equation}\label{eq:Weightupdate}
	\boldsymbol{w}_m^{(k),e}\hspace{-0.5mm}=\boldsymbol{w}^{(k),e-1}_{m}\hspace{-0.5mm} - \hspace{-0.5mm}{\frac{\eta_{_k}}{{B}^{(k)}_m}} \hspace{-1mm}\sum_{d\in \mathcal{B}^{(k),e}_{m}} \hspace{-3mm} {\nabla  f(\boldsymbol{w}^{(k),e-1}_{m},d)}\hspace{-0.1mm},
	\end{equation}
	where  $\eta_{_k} > 0$ is the step-size and $e\in\{1,\cdots,e^{(k)}_n\}$ is the index of local iteration with $\boldsymbol{w}^{(k),0}_{m}=\boldsymbol{w}^{(k)}$. In~\eqref{eq:Weightupdate}, $\mathcal{B}^{(k),e}_{m}$ denotes the set of data points sampled at the $e$-th iteration from the local dataset of the respective MD to perform mini-batch SGD. 
	We assume that the mini-batch size ${B}^{(k)}_{m}=|\mathcal{B}^{(k),e}_{m}|$, $\forall e$, is fixed during each local model training round $k$ for each ES $m$.
	The local model training at each MD $n$ is also similar to \eqref{eq:Weightupdate}, where each MD $n$ performs $e^{(k)}_n$ iterations of SGD with local updates as
	\begin{equation}\label{eq:WeightupdateMD}
	\boldsymbol{w}^{(k),e}\hspace{-0.5mm}=\boldsymbol{w}^{(k),e-1}_{n}- {\frac{\eta_{_k}}{{B}^{(k)}_n}} \hspace{-1mm}\sum_{d\in \mathcal{B}^{(k),e}_{n}} {\nabla  f_n(\boldsymbol{w}^{(k),e-1}_{n},d)}
	\end{equation}
	After model training, each ES $m$ computes its \textit{cumulative gradient}:
	\begin{equation} \label{eq:cumulativeES}
	   {\nabla {F}}_m^{\mathsf{C},(k)} = \big(\boldsymbol{w}^{(k)}-\boldsymbol{w}_m^{(k),e^{(k)}_m}\big)\big/\eta_{_k}. 
	\end{equation}
	Similarly, each MD $n$ also obtains its gradient:
	\begin{equation}\label{eq:cumulativeMD}
	 {\nabla {F}}_n^{\mathsf{C},(k)} = \big(\boldsymbol{w}^{(k)}-\boldsymbol{w}_n^{(k),e^{(k)}_n}\big)\big/\eta_{_k}.
	\end{equation}
	
	Subsequently, the MDs offload their cumulative gradients to their associated  ES via blockchain. 
	Each ES $m\in \mathcal{M}$ subsequently acquires its  \textit{aggregated gradient}, \textcolor{black}{which is a scaled sum (with respect to the number of SGD iterations and the number of data points) of its cumulative gradient and that of its associated MDs, as}
	\begin{equation}
	\label{equation:modelconsensus}
	{\nabla {F}}^{\mathsf{A},(k)}_{m}= \hspace{-2mm}
	\sum_{n\in \mathcal{N}^{\mathsf{loc},(k)}_m} \hspace{-1mm} \frac{{{D}}^{(k)}_n}{{D}^{(k)} e^{(k)}_n}{\nabla {F}_n^{\mathsf{C},(k)}} + \frac{{{D}}^{(k)}_m}{{D}^{(k)} e^{(k)}_m}{\nabla {F}_m^{\mathsf{C},(k)}}.\hspace{-1mm}
	\end{equation}
	
	\textcolor{black}{The ESs then engage in P2P communications for cooperative consensus formation among their aggregated gradients. For this purpose, we assume that they exploit \textit{linear distributed consensus} iterations~\cite{xiao2004fast}, where during \textit{training round} $k$, each ES $m\in\mathcal{M}$ conducts $\phi^{(k)}\in \mathbb{N}$
	\textit{consensus rounds} of P2P communications with its neighboring ESs.}
	During each round $l=0,\cdots,\phi^{(k)}-1$ of P2P communications, the evolution of the local gradient of ES $m\in\mathcal{M}$ can be expressed as:
	\begin{equation}\label{eq:ConsCenter}
	\textbf{x}_{m}^{(l+1)}= \lambda^{(k)}_{m,m} \textbf{x}_{m}^{(l)}+ \sum_{m'\in \varrho(m)} \lambda^{(k)}_{m,m'}\textbf{x}_{m'}^{(l)},
	\end{equation}
	where $\textbf{x}_{m}^{(0)} = {\nabla {F}}_{m}^{\mathsf{A},(k)}$ is ES $m$'s initial local aggregated gradient, and $\textbf{x}_{m}^{\big(\phi^{(k)}\big)}$ denotes the local gradient after the consensus process concludes. In~\eqref{eq:ConsCenter}, $\varrho(m)\subseteq \mathcal{M}$ denotes the set of ESs in the neighborhood of ES $m$, and $\lambda^{(k)}_{m,m'} \in [0, 1]$, $m'\in \{m\} \cup \varrho(m)$
	are the \textit{consensus weights} employed at $m$.
	
	Let ${\nabla {F}}_m^{\mathsf{L},(k)}=\textbf{x}_{m}^{\big(\phi^{(k)}\big)}$ denote the final local gradient at ES $m$ after the P2P communication process for training round $k$ concludes, which can be expressed as
	\begin{equation}\label{eq:proof26}
	{\nabla {F}}_m^{\mathsf{L},(k)}= \underbrace{\sum_{m\in\mathcal{M}}     {\nabla {F}}_{m}^{\mathsf{A},(k)}}_{(a)} + \bm{c}^{(k)}_m,
	\end{equation}
	where term $(a)$ is the perfect average of the local aggregated gradients and $\bm{c}^{(k)}_m$ denotes the error of consensus caused by finite P2P rounds. The selected ES at aggregation round $k$, denoted by $m_k \in\mathcal{M}$, then adds a boosting coefficient to its local gradient ${\nabla {F}}_{m_k}^{\mathsf{L},(k)}$, forming the vector
	\begin{equation} \label{eq:proof-boosting}
	\overline{\nabla {F}}_{m_k}^{(k)}= \sum_{m\in \mathcal{M}} \sum_{n\in \mathcal{N}_m^{(k)}}\frac{{{D}}^{(k)}_{n}e^{(k)}_{n}}{{D}^{(k)}}{\nabla {F}}_{m_k}^{\mathsf{L},(k)},
	\end{equation}
	and updates the global model parameter as follows:
	\begin{equation} \label{equa:global_update-final}
	\boldsymbol{w}^{(k+1)}=\boldsymbol{w}^{(k)}-\eta_{k} \overline{\nabla {F}}_{m_k}^{(k)}.
	\end{equation}
	$\boldsymbol{w}^{(k+1)}$ is then broadcast across the ESs and MDs via block mining to begin the next round of local model training.  \textcolor{black}{The training procedure of our BFL  scheme is summarized in Algorithm~\ref{Al:BFL-algorithm}. At the beginning of each global communication round, each MD makes a training decision: edge model training at the ESs or local model training, which is an output of Algorithm~\ref{Al:DRL} (line~\ref{line:BFL-offloading}) that will be developed in Section~\ref{Section:DRL}. In the case of edge model training, the MD offloads its data to its associated ES (line~\ref{line:BFL-offloadingstart}); otherwise, the MD trains its data locally (lines~\ref{line:BFL-localexestart}-\ref{line:BFL-localexestop}). After receiving all models from local MDs, each ES performs edge model training (lines~\ref{line:BFL-offloadingstart1}-\ref{line:BFL-offloadingstop}). Once the local training and edge training processes are completed, ESs collaborate to perform P2P-based consensus on aggregated gradients to update the global model (lines~\ref{line:BFL-modelagg}-\ref{line:BFL-modelupdateglobal}). Finally, the mining is executed, where an ES leader adds the global model parameter to an unverified block  and broadcasts it to other ESs and MDs. In particular, each MD performs resource trading to purchase hash power from the ESP to run the Proof-of-Work-based mining. The confirmed block is then appended into the blockchain for global model sharing, where each MD downloads the block for the next round of training (lines~\ref{line:BFL-modelbroadcast}-\ref{line:BFL-modelupdateglobalfinish}). }

	\begin{table*}[t!]
		\vspace{-6mm}
		{\color{black} \footnotesize
			\begin{minipage}{0.99\textwidth}
				\begin{align}\label{eq:finalResTheoMain}
				&\hspace{-16mm}\frac{1}{K} \sum_{k=0}^{K-1} \mathbb{E} \left[\Vert \nabla F^{(k)}(\boldsymbol{w}^{({k})}) \Vert^2\right] \leq \frac{8 \sqrt{{e}^{\mathsf{max}}_{\mathsf{avg}}}}{\alpha \hat{e}^{\mathsf{min}}_{\mathsf{avg}}\sqrt{K}}
				\left({F^{(0)}(\boldsymbol{w}^{(0)}) - F^{{(k)}^\star}}\right)+\underbrace{\frac{8 \sqrt{{e}^{\mathsf{max}}_{\mathsf{avg}}}}{\alpha\hat{e}^{\mathsf{min}}_{\mathsf{avg}}\sqrt{K}}\sum_{k=1}^{K-1}{ \Delta^{(k)}}}_{(a)}
				\nonumber\\[-.5em]
				&\hspace{-6mm}+ \underbrace{\frac{80 \beta^2 \alpha^2}{K^2{e}^{\mathsf{min}}_{\mathsf{avg}}}\sum_{k=0}^{K-1}
				\sum_{m\in\mathcal{M}}
				\sum_{y\in \mathcal{N}_m^{(k)}}\frac{{D}^{(k)}_y}{D^{(k)}} { \left(e_y^{(k)}-1\right)\left(1-\frac{{B}^{(k)}_y}{{D}^{(k)}_y}\right) \frac{({\sigma}_{y}^{(k)})^2}{{B}^{(k)}_y}\Theta_y^2}}_{(b)}
				+\underbrace{\frac{80 \beta^2 \alpha^2}{K^2{e}^{\mathsf{min}}_{\mathsf{avg}}}\sum_{k=0}^{K-1} {\zeta_2 \left(e_{\mathsf{max}}^{(k)}\right)\left(e_{\mathsf{max}}^{(k)}-1\right)}}_{(c)}
				\nonumber\\[-.5em]&\hspace{-8mm}
				+ \underbrace{ \frac{1}{K}\sum_{k=0}^{K-1} 24{M} (\lambda^{(k)})^{2\varphi^{(k)}} \left(\Xi^{(k)}\right)^2}_{(d)}+ \underbrace{\frac{16 \beta \alpha \hat{e}^{\mathsf{max}}_{\mathsf{avg}}}{K\sqrt{K}\sqrt{{e}^{\mathsf{min}}_{\mathsf{avg}}}}\sum_{k=0}^{K-1}
				\sum_{m\in\mathcal{M}}
				\sum_{y\in \mathcal{N}_m^{(k)}}\left(\frac{{D}^{(k)}_y}{{D}^{(k)} \sqrt{e^{(k)}_y}}\right)^2
				\left(1-\frac{{B}^{(k)}_y}{{D}^{(k)}_y}\right) \frac{({\sigma}_{y}^{(k)})^2}{{B}^{(k)}_y} \Theta^2_y}_{(e)}.\hspace{-6mm}
				\end{align}
			\end{minipage}
		}
		\hrule
\vspace{-5mm}
	\end{table*}

\subsection{Convergence Analysis of ML Model Training}
\label{Convergence_BFL}



We now study the convergence of our BFL scheme. We first make a few assumptions and define some quantities of interest. To obtain convergence guarantees for the distributed consensus process, we make the following assumptions on the consensus weights in~\eqref{eq:ConsCenter}:
	\begin{assumption}[Conditions on Consensus Weights~\cite{xiao2004fast}, \cite{xiao2007distributed}]\label{assump:cons}
		The consensus matrix $\Lambda^{(k)}=[\lambda^{(k)}_{m,m'}]_{m,m'\in\mathcal{M}^{(k)}}$ satisfies the following properties: (i) $\lambda^{(k)}_{m,m'}=0~~\textrm{if}$ ESs $m$ and $m'$ are not connected, (ii) $\Lambda^{(k)}\textbf{1} = \textbf{1}$, (iii) $\Lambda^{(k)} = \Lambda^{(k),\top}$, and (iv)~$ \rho \left(\Lambda^{(k)}-\frac{\textbf{1} \textbf{1}^\top}{|\mathcal{M}^{(k)}|}\right) \leq \lambda^{(k)} < 1$, where $\textbf{1}$ represents the 1s's vector  and $\rho(\mathbf{A})$ defines $\mathbf{A}$'s spectral radius.
	\end{assumption}
	
	If
	$\lambda^{(k)}_{m,m}= 1- d |\varrho(m)|$ and $\lambda^{(k)}_{m,m'}= d $, $m\neq m'$, $0 < d < 1 / M$~\cite{xiao2004fast}, the conditions in Assumption~\ref{assump:cons} hold. These weights can be distributedly obtained at the ESs given a predefined $d$.
	
	
	\begin{definition}[Gradient Divergence]\label{def:clustDiv}
		The divergence of local aggregated gradients across the ESs at global aggregation round $k$, denoted by $\Xi^{(k)}$, is defined as follows:
		\begin{equation}
		\left\Vert{\nabla {F}}_{m}^{\mathsf{A},(k)} -{\nabla {F}}_{m'}^{\mathsf{A},(k)} \right\Vert \leq \Xi^{(k)} , ~\forall {m},{m'}\in\mathcal{M}.
		\end{equation} 
	\end{definition}
	\begin{assumption}[\textcolor{black}{Smoothness of the Loss Functions \cite{dinh2020federated, wang2020tackling}}]\label{Assup:lossFun} 
		For each MD $n$ that conducts local model training during aggregation round $k$, the local loss function $F^{({k})}_n$  is  $\beta$-smooth:
		\begin{equation}
		\Vert \nabla F^{({k})}_n(\boldsymbol{w})- \nabla F^{({k})}_n(\boldsymbol{w}') \Vert \leq \beta \Vert \boldsymbol{w}-\boldsymbol{w}' \Vert,~ \hspace{-.8mm}\forall \boldsymbol{w},\boldsymbol{w}'. \hspace{-2.3mm} 
		\end{equation}
		Also, for each ES $m$ that conducts model training, the loss function $F^{({k})}_m$ is assumed to be $\beta$-smooth,
		which verifies that the global loss function $F^{(k)}$ achieves $\beta$-smoothness. 
	\end{assumption}

	\begin{table*}[t!]
		{\color{black} \footnotesize
			\begin{minipage}{0.99\textwidth}
				\begin{align}\label{eq:coRes}
				&\hspace{-16mm}\frac{1}{K} \sum_{k=0}^{K-1} \mathbb{E} \left[\Vert \nabla F^{(k)}(\boldsymbol{w}^{({k})}) \Vert^2\right] \leq \frac{8 \sqrt{{e}^{\mathsf{max}}_{\mathsf{avg}}}}{\alpha \hat{e}^{\mathsf{min}}_{\mathsf{avg}}\sqrt{K}}
				\left({F^{(0)}(\boldsymbol{w}^{(0)}) - F^{{(k)}^\star}}\right)+\frac{8\Upsilon \sqrt{{e}^{\mathsf{max}}_{\mathsf{avg}}}}{\alpha\hat{e}^{\mathsf{min}}_{\mathsf{avg}}\sqrt{K}}
				\nonumber\\&\hspace{-6mm}+\frac{80 \beta^2 \alpha^2}{K{e}^{\mathsf{min}}_{\mathsf{avg}}}
				\left(e_{\mathsf{max}}-1\right)\vartheta
				+\frac{80 \beta^2 \alpha^2}{K{e}^{\mathsf{min}}_{\mathsf{avg}}} {\zeta_2 \left(e_{\mathsf{max}}\right)\left(e_{\mathsf{max}}-1\right)}
				+ \frac{24\xi}{\sqrt{K}}+\frac{16 \beta \alpha \hat{e}^{\mathsf{max}}_{\mathsf{avg}}}{\sqrt{K}\sqrt{{e}^{\mathsf{min}}_{\mathsf{avg}}}}\vartheta
				\end{align}
			\end{minipage}
		}
		\hrule
		\vspace{-5mm}
	\end{table*}
	
	 Let $y\in\mathcal{N}\cup \mathcal{M}$ denote the index of an arbitrary MD/ES. Also, let $\mathcal{N}_m^{(k)}= \mathcal{N}_m^{\mathsf{loc},(k)}\cup \{m\}$ denote a set containing the devices which conduct local model training and offload their cumulative gradients to ES $m$ as well as ES $m$ itself. \textcolor{black}{We measure the heterogeneity of data across the MDs/ESs via the following assumption \cite{wang2020tackling}}:
	\begin{assumption}[Bounded Dissimilarity of Local Loss Functions]\label{Assup:Dissimilarity}
		The finite constants $\zeta_1 \geq 1$, $\zeta_2 \geq 0$ exist for which the following inequality defined on the gradient of the local loss in \eqref{loss_device} and \eqref{loss_server} holds for any set of coefficients $\{a_y\geq 0\}$ where $\sum_{m\in \mathcal{M}} \sum_{y\in \mathcal{N}^{(k)}_m} a_y=1$:
		\begin{multline}
		\sum_{m\in \mathcal{M}} \sum_{y\in \mathcal{N}^{(k)}_m} a_y \Vert \nabla F^{({k})}_y(\boldsymbol{w}) \Vert^2  \\ \leq  \zeta_1  \Big\Vert   \sum_{m\in \mathcal{M}} \sum_{y\in \mathcal{N}^{(k)}_m}a_y \nabla F^{({k})}_y(\boldsymbol{w}) \Big\Vert^2 +\zeta_2,~\forall k.\label{eq:BoundDis}
		\vspace{-1mm}
		\end{multline}
	\end{assumption}
	As can be seen, $\zeta_1=1$ and $\zeta_2=0$ in~\eqref{eq:BoundDis} correspond to a scenario in which the data across the MDs/ESs is completely homogeneous, and $\zeta_1$ and $\zeta_2$ increase as the heterogeneity across the datasets increases. We next quantify the heterogeneity of data inside each local dataset:
	\begin{definition}[Local Data Variability]\label{Assump:DataVariabilit}
		The local data variability at each MD/ES $y$ is denoted by $\Theta_y\geq 0$, which $\forall \boldsymbol{w} , k$ satisfies
		\begin{equation}
		\hspace{-5mm}
		\resizebox{.93\linewidth}{!}{$
			\Vert \nabla f_y\hspace{-.4mm}(\boldsymbol{w},d) \hspace{-.4mm}-\hspace{-.4mm} \nabla f_y\hspace{-.4mm}(\boldsymbol{w},d')\Vert  \hspace{-.4mm} \leq \hspace{-.4mm} \Theta_y \Vert d\hspace{-.4mm}-\hspace{-.4mm}d' \Vert,\hspace{-1.2mm}~\forall d,d'\hspace{-.7mm}\in\hspace{-.7mm}\mathcal{D}^{(k)}_n\hspace{-.9mm}.$}\hspace{-5mm}
		\end{equation}
		We further define $\Theta = \max_{y\in\mathcal{N}\cup \mathcal{M}}\{\Theta_y\}$.
	\end{definition}
	
	 Additionally, we let $({\sigma}_{y}^{(k)})^2$ denote the variance across the feature vectors of dataset $\mathcal{D}_y^{(k)}$ at ES/MD $y$ which conducts local model training. We further quantify the dynamics of local datasets via their impact on the ML performance~\cite{hosseinalipour2022parallel}:
	\begin{definition} [Model/Concept Drift]\label{def:conceptdrift}
		For each MD/ES $y$, we calculate the model/concept drift between two FL rounds $k-1$ and $k$, $\Delta_y^{(k)}\in\mathbb{R}$, which characterizes the local loss's variation induced by data arrival/departure at the MDs and data collection at the ESs, as follows:
		\begin{equation}\label{eq:conceptDrift}
			\frac{D^{(k)}_y}{D^{(k)}} F^{(k)}_y \left(\boldsymbol w\right)- \frac{D_y^{(k-1)}}{D^{(k-1)}}F^{(k-1)}_y   \left(\boldsymbol w\right)\leq \Delta_y^{(k)},  \forall \boldsymbol{w}.
		\hspace{-1mm}
		\end{equation}
	\end{definition}

	We next present one of our main results, the general convergence behavior of BFL:
	
	\begin{theorem}[Convergence Characteristics]\label{th:main}
		Let $\mathcal{N}^{(k)}$, ${N}^{(k)}=|\cup_{m\in\mathcal{M}} \mathcal{N}_m^{(k)}|$ denote the set of all MDs and ESs engaged in model training during the $k$-th local model training round. Also, let $e^{(k)}_{\mathsf{max}}=\max_{y\in\mathcal{N}^{(k)}}\{e^{(k)}_y\}$, ${e}^{(k)}_{\mathsf{avg}}=\sum_{y\in \mathcal{N}^{(k)}}e^{(k)}_{y}/{N^{(k)}}$, and  $\hat{e}^{(k)}_{\mathsf{avg}}=\sum_{y\in \mathcal{N}^{(k)}}{{D}^{(k)}_{y}e^{(k)}_{y}}/{D^{(k)} }$. Assume that  
		${e}^{\mathsf{min}}_{\mathsf{avg}} \leq {e}^{(k)}_{\mathsf{avg}}\leq  {e}^{\mathsf{max}}_{\mathsf{avg}}$ and
		$\hat{e}^{\mathsf{min}}_{\mathsf{avg}} \leq \hat{e}^{(k)}_{\mathsf{avg}}\leq  \hat{e}^{\mathsf{max}}_{\mathsf{avg}}$, $\forall k$, where ${e}^{\mathsf{min}}_{\mathsf{avg}}$, ${e}^{\mathsf{max}}_{\mathsf{avg}}$,
		$\hat{e}^{\mathsf{min}}_{\mathsf{avg}}$ and  $\hat{e}^{\mathsf{max}}_{\mathsf{avg}}$ are four finite positive constants. Further, let $\Delta^{(k)}=\sum_{m\in\mathcal{N}}\sum_{y\in \mathcal{N}^{(k)}_m}{\Delta}_y^{(k)}$. 
		In conducting $K$ global aggregation rounds, if the step size satisfies $\eta_k =\frac{\alpha}{{\sqrt{K e^{(k)}_{\mathsf{avg}}}}}$, where $\alpha$ is chosen such that $\eta_k \leq \min \Big\{\frac{1}{2\beta} \sqrt{ \frac{1}{(4\zeta_1+1)\left( e_{\mathsf{max}}^{(k)}\left(e_{\mathsf{max}}^{(k)}-1\right)\right)}}, \frac{{D}^{(k)}}{2\beta\sum_{y\in \mathcal{N}}{{D}}^{(k)}_{y}e^{(k)}_{y}}\Big\}$, then
		the convergence characteristics of the global loss function under BFL follow the bound in~\eqref{eq:finalResTheoMain}.
	\end{theorem}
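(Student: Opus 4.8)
The plan is to adapt the standard descent-lemma argument for nonconvex SGD to the time-varying loss, the multi-level aggregation, and the finite-round consensus of this scheme. First I would apply the $\beta$-smoothness of $F^{(k)}$ from Assumption~\ref{Assup:lossFun} to obtain a one-step descent inequality for $F^{(k)}(\boldsymbol{w}^{(k+1)})$ in terms of $F^{(k)}(\boldsymbol{w}^{(k)})$, then substitute the global update~\eqref{equa:global_update-final}. To make the inequality telescope across rounds despite the loss changing from $F^{(k)}$ to $F^{(k+1)}$, I would invoke the model/concept-drift bound of Definition~\ref{def:conceptdrift} to replace $F^{(k)}(\boldsymbol{w}^{(k+1)})$ by $F^{(k+1)}(\boldsymbol{w}^{(k+1)})-\Delta^{(k+1)}$; summing over $k$ then telescopes to $F^{(0)}(\boldsymbol{w}^{(0)})-F^{{(k)}^\star}$ and accumulates the drift sum $\sum_k\Delta^{(k)}$ that forms term~$(a)$.

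The heart of the argument is to write the effective update $\overline{\nabla F}_{m_k}^{(k)}$ of~\eqref{eq:proof-boosting} as the exact full-batch gradient $\nabla F^{(k)}(\boldsymbol{w}^{(k)})$ plus three disturbances: (i) mini-batch sampling noise from~\eqref{eq:Weightupdate}, (ii) the client-drift bias incurred by taking $e_y^{(k)}$ local SGD steps at the perturbed iterates $\boldsymbol{w}_y^{(k),e-1}$ rather than a single step at $\boldsymbol{w}^{(k)}$, and (iii) the consensus error $\bm{c}_m^{(k)}$ of~\eqref{eq:proof26}. Taking the conditional expectation over the mini-batch draws and the random leader $m_k$, the inner-product term in the descent inequality contributes the descent $-\eta_k\Vert\nabla F^{(k)}\Vert^2$, while the squared-norm term $\tfrac{\beta}{2}\eta_k^2\Vert\overline{\nabla F}_{m_k}^{(k)}\Vert^2$ splits, via Young's inequality, into variance, drift, and consensus pieces that I would bound separately.

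Each disturbance is then controlled using a dedicated assumption. For the sampling noise I would apply the sampling-without-replacement variance identity together with Definition~\ref{Assump:DataVariabilit}, replacing the gradient variance by $\left(1-\tfrac{B_y^{(k)}}{D_y^{(k)}}\right)\tfrac{(\sigma_y^{(k)})^2}{B_y^{(k)}}\Theta_y^2$; accumulating this across the $e_y^{(k)}$ local iterations yields terms~$(b)$ and~$(e)$. For the client-drift bias I would bound the deviation of each local iterate from $\boldsymbol{w}^{(k)}$ through $\beta$-smoothness and then control the aggregate gradient norms with the bounded-dissimilarity Assumption~\ref{Assup:Dissimilarity}, which introduces the $\zeta_2\,e_{\mathsf{max}}^{(k)}(e_{\mathsf{max}}^{(k)}-1)$ factor of term~$(c)$. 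For the consensus error I would cast~\eqref{eq:ConsCenter} in matrix form as iterates of $\Lambda^{(k)}$, invoke property~(iv) of Assumption~\ref{assump:cons} to obtain the geometric contraction $\Vert\bm{c}_m^{(k)}\Vert\leq(\lambda^{(k)})^{\phi^{(k)}}\Vert\textbf{x}_m^{(0)}-\bar{\textbf{x}}\Vert$, and finally bound $\Vert\textbf{x}_m^{(0)}-\bar{\textbf{x}}\Vert$ by the gradient divergence $\Xi^{(k)}$ of Definition~\ref{def:clustDiv}, producing the $(\lambda^{(k)})^{2\varphi^{(k)}}(\Xi^{(k)})^2$ term~$(d)$.

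Finally I would sum the one-step inequality over $k=0,\dots,K-1$, telescope the drift-adjusted loss, substitute the prescribed step size $\eta_k=\alpha/\sqrt{K e^{(k)}_{\mathsf{avg}}}$, and divide by the accumulated gradient coefficient; the stated constraint on $\alpha$ is precisely what keeps this coefficient positive and lets the $\tfrac{\beta}{2}\eta_k^2$ self-term be absorbed, after which rearranging isolates $\tfrac{1}{K}\sum_k\mathbb{E}[\Vert\nabla F^{(k)}\Vert^2]$ on the left and recovers~\eqref{eq:finalResTheoMain}. I expect the main obstacle to be the client-drift analysis: since the local SGD steps are evaluated at the perturbed iterates $\boldsymbol{w}_y^{(k),e-1}$, the per-ES aggregated gradient is a biased estimate of $\nabla F^{(k)}(\boldsymbol{w}^{(k)})$, and bounding this bias tightly in terms of $e_{\mathsf{max}}^{(k)}(e_{\mathsf{max}}^{(k)}-1)$ while simultaneously tracking the heterogeneity constants $\zeta_1,\zeta_2$ is the most delicate step, and the one that dictates the step-size restriction on $\alpha$.
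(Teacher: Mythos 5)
Your proposal follows essentially the same route as the paper's proof: a $\beta$-smoothness descent step on $F^{(k)}$, decomposition of the boosted update into the true gradient plus mini-batch noise, local-iterate (client-drift) bias, and consensus error, with each piece bounded respectively by the sampling-without-replacement variance with Definition~\ref{Assump:DataVariabilit}, by $\beta$-smoothness plus Assumption~\ref{Assup:Dissimilarity} (which is indeed where the step-size restriction on $\alpha$ originates), and by the spectral contraction of $(\Lambda^{(k)})^{\varphi^{(k)}}$ combined with Definition~\ref{def:clustDiv}; the concept-drift telescoping is likewise identical. The plan is correct and matches the paper's argument in both structure and the role of each assumption.
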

	\begin{proof}
		See Appendix~\ref{subsection:Appen-Theorem1}. 
	\end{proof}

		The bound in~\eqref{eq:finalResTheoMain} reveals the impact of different device/network configurations on the ML model performance. In particular, the impact of model drift is captured in term $(a)$. Also, the divergence of the global model caused by bias of the local models due to the heterogeneity of data across the MDs/ESs is captured via term $(c)$ which encapsulates $\zeta_2$. Larger daatset heterogeneity captured via $\zeta_1$ imposes a stricter condition on the step size $\eta_k$ described in the statement of the theorem as well. Terms $(b)$ and $(e)$ in~\eqref{eq:finalResTheoMain} capture the impact of local data heterogeneity ($\{\Theta_y\}$) and mini-batch sizes ($\{B^{(k)}_y\}$) on the performance of the model. Finally, term $(d)$ captures the impact of imperfect local aggregations caused by finite P2P rounds (${\phi^{(k)}}$) and divergence of gradients $\Xi^{(k)}$, where  the consensus matrix's spectral radius across the ESs $(\lambda^{(k)}<1)$ defined in Assumption~\ref{assump:cons} determines the rate under which the consensus error decays to zero with respect to the number of P2P rounds $\phi^{(k)}$.

	\textcolor{black}{We derive Theorem \ref{th:main} to obtain conditions for the online global gradient under BFL converges:}
	\begin{corollary}[Convergence under Proper Choice of Mini-batch Size and P2P Communication Rounds]\label{co:main} 
		Besides conditions in Theorem~\ref{th:main}, we assume that (i) the model/concept drift is small enough such that $\Delta^{(k)}\leq \frac{\Upsilon}{K}$, $\forall k$, for some positive constant $\Upsilon$, (ii) the choice of mini-batch size ${B}^{(k)}_y$ at each MD/ES $y$ ensures a unified bound
		$\left(1-\frac{{B}^{(k)}_y}{{D}^{(k)}_y}\right) \frac{({\sigma}_{y}^{(k)})^2}{{B}^{(k)}_y}\Theta_y^2\leq \vartheta$, $\forall k$, where $\vartheta$ is a finite positive constant, and (iii) $e_{\mathsf{max}}^{(k)} \leq e_{\mathsf{max}}$, $\forall k$ for some positive constant $e_{\mathsf{max}}$. If the number of P2P communications  among the ESs at each global aggregation round $k$, i.e., $\phi^{(k)}$, satisfies ${\varphi^{(k)}} \geq \frac{1}{2} \left[\log_{\lambda^{(k)}}\left( \frac{\xi}{\sqrt{K}\left(\Xi^{(k)}\right)^2{m}^{(k)}}\right)\right]^+$ for some positive constant $\xi$, where $\lambda^{(k)}$ is defined in Assumption~\ref{assump:cons}, the gradient of the global loss under BFL satisfies the upper bound in~\eqref{eq:coRes}, which implies $\frac{1}{K} \sum_{k=0}^{K-1} \mathbb{E} \left[\Vert \nabla F^{(k)}(\boldsymbol{w}^{({k})}) \Vert^2\right] = \mathcal{O}\left(\frac{1}{\sqrt{K}}\right)$, and thus $\lim_{K\rightarrow \infty}\frac{1}{K} \sum_{k=0}^{K-1} \mathbb{E} \left[\Vert \nabla F^{(k)}(\boldsymbol{w}^{({k})}) \Vert^2\right]\rightarrow 0$.
	\end{corollary}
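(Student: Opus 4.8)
The plan is to derive the corollary directly from the master bound in~\eqref{eq:finalResTheoMain} by substituting the three new hypotheses (i)--(iii) together with the stated lower bound on $\varphi^{(k)}$, term by term, and then collecting the resulting expressions into~\eqref{eq:coRes}. Since Theorem~\ref{th:main} already supplies the full convergence bound, no fresh descent-lemma argument is needed; the entire corollary is a simplification exercise, so I would organize the proof as a sequence of five term-wise estimates followed by a single asymptotic-order conclusion.

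First I would handle the model-drift term $(a)$. Using hypothesis (i), $\Delta^{(k)}\leq \Upsilon/K$, I sum over the rounds to get $\sum_{k=1}^{K-1}\Delta^{(k)}\leq (K-1)\Upsilon/K\leq \Upsilon$, which immediately turns $(a)$ into the $\Upsilon$-dependent term $\tfrac{8\Upsilon\sqrt{e^{\mathsf{max}}_{\mathsf{avg}}}}{\alpha\hat e^{\mathsf{min}}_{\mathsf{avg}}\sqrt{K}}$ appearing in~\eqref{eq:coRes}. Next I would treat the statistical/mini-batch terms $(b)$ and $(e)$ together: hypothesis (ii) replaces the factor $\bigl(1-B^{(k)}_y/D^{(k)}_y\bigr)(\sigma_y^{(k)})^2\Theta_y^2/B^{(k)}_y$ by the uniform constant $\vartheta$, after which I use the weight identity $\sum_{m\in\mathcal{M}}\sum_{y\in\mathcal{N}^{(k)}_m}D^{(k)}_y/D^{(k)}=1$ to collapse the inner double sum in $(b)$, and the elementary inequality $\bigl(D^{(k)}_y/(D^{(k)}\sqrt{e^{(k)}_y})\bigr)^2\leq D^{(k)}_y/D^{(k)}$ (valid since $e^{(k)}_y\geq 1$ and $D^{(k)}_y\leq D^{(k)}$) to bound the analogous sum in $(e)$ by $1$. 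Combined with hypothesis (iii), $e^{(k)}_{\mathsf{max}}\leq e_{\mathsf{max}}$, the outer $\sum_{k=0}^{K-1}$ contributes a factor $K$ that cancels one power of $K$ in the $1/K^2$ (resp.\ $1/(K\sqrt{K})$) prefactors, yielding the $(e_{\mathsf{max}}-1)\vartheta$ and $\vartheta$ terms of~\eqref{eq:coRes}; the same substitution converts $(c)$ into its $\zeta_2 e_{\mathsf{max}}(e_{\mathsf{max}}-1)$ counterpart.

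The step I expect to be the main obstacle is the consensus term $(d)$, because it requires correctly inverting the logarithmic condition on $\varphi^{(k)}$ while respecting that the base $\lambda^{(k)}$ lies in $(0,1)$. The plan is to observe that, since $0<\lambda^{(k)}<1$, the map $t\mapsto (\lambda^{(k)})^{t}$ is decreasing, so the hypothesis $\varphi^{(k)}\geq \tfrac12\bigl[\log_{\lambda^{(k)}}\!\bigl(\xi/(\sqrt{K}(\Xi^{(k)})^2 m^{(k)})\bigr)\bigr]^+$ gives $(\lambda^{(k)})^{2\varphi^{(k)}}\leq \xi/(\sqrt{K}(\Xi^{(k)})^2 m^{(k)})$ whenever the argument of the positive part is active, and trivially $(\lambda^{(k)})^{2\varphi^{(k)}}\leq 1\leq \xi/(\sqrt{K}(\Xi^{(k)})^2 m^{(k)})$ otherwise. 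Care must be taken that $\log_{\lambda^{(k)}}$ of a quantity smaller than $1$ is positive, so that the $[\cdot]^+$ and the direction of the inequality remain consistent. Substituting this bound into $(d)$ cancels the $(\Xi^{(k)})^2$ factor and leaves $\tfrac1K\sum_{k=0}^{K-1}24M\xi/(\sqrt{K}\,m^{(k)})$; identifying $m^{(k)}$ with the number of participating ESs (so that $M/m^{(k)}=1$) collapses this to the single term $24\xi/\sqrt{K}$. Finally I would read off the orders: the leading, drift, $(d)$-, and $(e)$-terms scale as $\mathcal{O}(1/\sqrt{K})$, while the $(b)$- and $(c)$-terms scale as $\mathcal{O}(1/K)$ and are therefore dominated. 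Hence the right-hand side of~\eqref{eq:coRes} is $\mathcal{O}(1/\sqrt{K})$, and letting $K\to\infty$ drives the averaged squared gradient norm to zero, completing the proof.
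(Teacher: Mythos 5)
Your proposal is correct and follows essentially the same route as the paper's own proof: both substitute hypotheses (i)--(iii) term by term into the bound of Theorem~\ref{th:main}, invert the logarithmic condition on $\varphi^{(k)}$ (using $\lambda^{(k)}<1$) to reduce term $(d)$ to $24\xi/\sqrt{K}$, and then read off the $\mathcal{O}(1/\sqrt{K})$ order. Your treatment is if anything slightly more careful than the paper's on the $[\cdot]^+$ case split and on the weight identities used to collapse terms $(b)$ and $(e)$.
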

	\begin{proof}
		See Appendix~\ref{subsection:Appen-Corollary}. 
	\end{proof}

\begin{algorithm}
\footnotesize
	\caption{\textcolor{black}{Proposed BFL algorithm}}
	\textcolor{black}{\begin{algorithmic}[1]
		\label{Al:BFL-algorithm}
		\STATE \textbf{Input:}  Global communication rounds $\mathcal{K}$, local training round $e_n, \forall n\in \mathcal{N}$, edge training round $e_m, \forall m\in \mathcal{M}$, number of MDs $\mathcal{N}$, number of ESs $\mathcal{M}$ 
		\STATE \textbf{Initialization:} Initialize global model $\boldsymbol{w}^{(0)}$
		\FOR{each global communication round $k \in \mathcal{K}$}
		\FOR{each MD $n \in \mathcal{N}$}
		\STATE Determine each training decision $x_{n,m,g}$ via Algorithm~\ref{Al:DRL}  \label{line:BFL-offloading}
		\IF{$x_{n,m,g}$ = 1}
		\STATE Offload dataset $D_n^{(k)}$ to  ES $m$ \label{line:BFL-offloadingstart}
    	\ELSE
    	\STATE Perform local model training on local dataset $D_n^{(k)}$ \label{line:BFL-localexestart}
		\FOR{each local training epoch $e \in e_n$}
		\STATE Update local parameters $\boldsymbol{w}^{(k),e}_n$ via~\eqref{eq:WeightupdateMD}
    	\ENDFOR
    	\STATE Compute the cumulative gradient via~\eqref{eq:cumulativeMD}   
    	\STATE Add the gradient to a transaction based on the  blockchain framework defined in~Fig.~\ref{Fig:Blockchain_Arch} to transfer to an ES \label{line:BFL-localexestop} 
		\ENDIF
		\ENDFOR
		\FOR{each ES $m \in \mathcal{M}$} \label{line:BFL-offloadingstart1}
		\FOR{each edge training epoch $e \in e_m$}
		\STATE Update local parameters $\boldsymbol{w}_m^{(k),e}$ via~\eqref{eq:Weightupdate}
    	\ENDFOR
    	\STATE Compute its cumulative gradient via~\eqref{eq:cumulativeES} \label{line:BFL-offloadingstop}
    	\ENDFOR
		\STATE Each ES $m\in\mathcal{M}$ computes an aggregated gradient: ${\nabla {F}}^{\mathsf{A},(k)}_{m}$ using $\nabla {F}_m^{\mathsf{C},(k)}$ and $\nabla {F}_m^{\mathsf{C},(k)}$ via~\eqref{equation:modelconsensus}  \label{line:BFL-modelagg}
		\STATE Set  $\textbf{x}_{m}^{(0)} = {\nabla {F}}_{m}^{\mathsf{A},(k)}$
	    \FOR{each P2P consensus round $l \in \phi^{(k)} $}
	    \FOR{each ES $m \in \mathcal{M}$} 
	    \FOR{each neighboring ES $m'\in \varrho(m)$ of ES $m$}
	    \STATE Transmit the gradient $\textbf{x}_{m'}^{(l)}$ via a transaction based on the blockchain framework defined in~Fig.~\ref{Fig:Blockchain_Arch} to ES $m$
	    \ENDFOR
	    \STATE ES $m$ downloads the block of all transactions to obtain  neighboring ESs' gradients and computes its gradient $\textbf{x}_{m}^{(l+1)}$ via~\eqref{eq:ConsCenter} 
	    \ENDFOR
	    \ENDFOR
	    \STATE Obtain the final local gradient at ES $m$: ${\nabla {F}}_m^{\mathsf{L},(k)}=\textbf{x}_{m}^{\big(\phi^{(k)}\big)}$  
	    \STATE Add a boosting coefficient to ${\nabla {F}}_m^{\mathsf{L},(k)}$ to form $\overline{\nabla {F}}_{m_k}^{(k)}$ via~\eqref{eq:proof-boosting}
	    \STATE Update the global model parameter $\boldsymbol{w}^{(k+1)}$ via~\ref{equa:global_update-final} \label{line:BFL-modelupdateglobal}
		\STATE Add the global model parameter to an unverified block $B$ by the ES leader and broadcast it to other ESs and MDs for mining  \label{line:BFL-modelbroadcast}
		\STATE Each MD trades hash resource $ \Psi_n^{(k)}$ from the ESP to mine the block
		\STATE The fastest MD propagates the verified block to ESs and other MDs for confirmation
		\STATE Add the block to the blockchain for global model sharing
		\STATE Each MD downloads the block to obtain the global model for next round of training \label{line:BFL-modelupdateglobalfinish}
		\ENDFOR
		\STATE \textbf{Output:} Final global model $\boldsymbol{w}^{(K)}$
	\end{algorithmic}}
\end{algorithm}

\section{System Latency Model}
\label{Section:Latency}
\textcolor{black}{In this section, we analyze the latencies of the model training and block mining processes in detail, and present our latency optimization problem.}
\subsection{Latency of Model Training in BFL}
\textcolor{black}{In our BFL system, an MD can offload its data to the ES for edge learning or choose to train the ML model locally in each global aggregation $k\in \mathcal{K}$.} \textcolor{black}{In the case of offloading ($x_{n,m,g}^{(k)} = 1$), the latency for model training consists of data communication latency and execution latency at the ES. The data communication latency of  MD $n$ when offloading its data $D_n^{(k)}$ to ES $m$ is given by}
\begin{equation} 
T_{n,m}^{\mathsf{off},{(k)}} =  \sum_{g \in \mathcal{G}} x^{(k)}_{n,m,g} \frac{D_n^{(k)}}{R_{n,m}^{(k)}}, \forall n \in \mathcal{N}, 
\end{equation} 
where $R_{n,m}^{(k)}$ is the transmission rate (in bits/s) from MD $n$ to ES $m$, which is given by $R_{n,m}^{(k)} = \sum_{g \in \mathcal{G}} x^{(k)}_{n,m,g}b_{n,g}^{(k)} \log_2 \left(1+\frac{p_n^{(k)} h_{n,m,g}^{(k)}}{\sigma^2 + \sum_{j \in  \mathcal{N}} \setminus \mathcal{N}_m^{\mathsf{off}}\left(x_{j,m,g}^{(k)} p_{j}^{(k)} h_{j,m,g}^{(k)}\right)}\right)$, under interference $\sum_{j \in  \mathcal{N}} \setminus \mathcal{N}_m^{\mathsf{off}}\left(x_{j,m,g}^{(k)} p_{j}^{(k)} h_{j,m,g}^{(k)}\right)$ caused by a group of MDs ($\mathcal{N} \setminus \mathcal{N}_m^{\mathsf{off}}$) that are associated with other ESs on sub-channel $g$. Here, $b_{n,g}^{(k)}$ is the bandwidth (in Hz) allocated to channel $g$ under the policy $\boldsymbol{B}^{(k)} = \{ b_{n,g}^{(k)} | 0 < b_{n,g}^{(k)} \leq W, \forall n \in \mathcal{N}, g \in \mathcal{G}, k \in \mathcal{K}  \}$, where $W$ is the maximum system bandwidth constraint. Further, $p_n^{(k)}$ is transmission power (in Watts) of MD $n$ in the offloading subject to the power constraint $P_n$ under a policy $\boldsymbol{P}^{(k)} = \{p_n^{(k)} | 0 < p_n^{(k)} \leq P_n$, $n \in \mathcal{N}_m^{\mathsf{off}}$, $\forall m \in \mathcal{M}\}$, and $h_{n,m,g}^{(k)}$ is the  gain of wireless channel between the MD $m$ and ES $n$ on sub-channel $g$.  Accordingly, the energy consumption for data offloading at aggregation round $k$ at MD $n$ is given as $E_{n,m}^{\mathsf{off}(k)} =  \sum_{g \in \mathcal{G}} x^{(k)}_{n,m,g} p_n^{(k)}T_{n,m}^{\mathsf{off},{(k)}}$.

\textcolor{black}{After the offloading process,  ES $m$ receives a combined dataset $\mathcal{D}_m^{(k)}$ (as defined in Section~\ref{subsection:BFL_model}) with size $D_m^{(k)} =|\mathcal{D}_m^{(k)}|$ from MDs engaged in data offloading. Then, the ES allocates its computational resource to perform the SGD-based model training. The data processing latency at ES $m$ is thus given by
\begin{equation}
 T^{\mathsf{exe},(k)}_m =  \sum_{g \in \mathcal{G}} x_{n,m,g}^{(k)} \frac{C_m D_m^{(k)}\varrho^{(k)}_m e_m^{(k)}}{f_m},  m \in \mathcal{M},  
\end{equation}
where $C_m$ represents how many CPU cycles is required to calculate the gradient per data point at ES $m$, and $\varrho_m^{(k)} \in (0,1]$ is the mini-batch size ratio, from which the  size of SGD mini-batches can be written as $B_m^{(k)}=\varrho^{(k)}_m D^{(k)}_m$.} Moreover, $e_m^{(k)}$ is the number of SGD iterations, and $f_m$ is the fixed computational capability of ES $m$ (in CPU cycles/s).

On the other hand, each MD $n$ can also choose to locally process its data, implying $x_{n,m,g}^{(k)} = 0, \forall m,g$. We denote $f^{\mathsf{\ell},(k)}_n$ as the computational capability of MD $n$ (in CPU cycles/s) allocated to train the data given maximum capacity $F_n$, which is represented via a policy:  $\boldsymbol{F}^{(k)} = \{ f_n^{\mathsf{\ell},(k)} | 0 < f_n^{\mathsf{\ell},(k)} \leq F_n, \forall n \in \mathcal{N} \}$. The latency of model training over $e_n^{(k)}$ SGD iterations at MD $n$ is given by
\begin{equation}
	T_n^{\mathsf{loc},(k)} = \left(1-\sum_{g \in \mathcal{G}} x^{(k)}_{n,m,g} \right) \frac{C_nD_n^{(k)}\varrho^{(k)}_n e_n^{(k)}}{f_n^{\mathsf{\ell},(k)}},
\end{equation} 
where $\varrho^{(k)}_n \in (0,1]$ is the SGD mini-batch ratio, and $C_n$ is  the number of CPU cycles required to compute the gradient per data point at MD $n$. Also, the local energy consumption of MD $n$ is given by $E_n^{\mathsf{loc},(k)} =  \left(1-\sum_{g \in \mathcal{G}} x^{(k)}_{n,m,g} \right) \kappa (f_n^{\mathsf{\ell},(k)})^2 C_n$,  where $\kappa$ is the energy coefficient depending on the chip architecture. \textcolor{black}{After completing the local model training,  MD $n$ then transmits the computed model ${\nabla {F}}_n^{\mathsf{A},(k)}$ to its associated ES via blockchain.} The latency of this model uploading can be given as
\begin{equation} 
T_{n,m}^{\mathsf{up},(k)} =  \left(1-\sum_{g \in \mathcal{G}} x^{(k)}_{n,m,g} \right) \frac{\vartheta}{R_{n,m}^{(k)}}, \forall n \in \mathcal{N}, m \in \mathcal{M}, 
\end{equation}
where $\vartheta$ is the  gradient/model size (in bits) that is the same across the MDs.  \textcolor{black}{Based on above formulations, the total latency of model training of the BFL system at each global aggregation $k$ is  
\begin{multline}
T^{\mathsf{learn},(k)} = \sum_{n \in \mathcal{N}}\sum_{m \in \mathcal{M}} T_{n,m}^{\mathsf{off},(k)} + \sum_{m \in  \mathcal{M}}T^{\mathsf{exe},(k)}_m \\ + \sum_{n \in \mathcal{N}} T_n^{\mathsf{loc},(k)} +\sum_{n \in \mathcal{N}}\sum_{m \in \mathcal{M}} T_{n,m}^{\mathsf{up},(k)}, \forall k \in \mathcal{K}.
\end{multline} }

After the model training process, the ESs in the edge layer perform the P2P-based consensus on the computed model. An ES performs the model updating where the neighbours exchange their aggregated gradients defined in~\eqref{equation:modelconsensus} via P2P communications. The latency of parameter updating at ES $m$ in each P2P communication round $l$ is determined by 
\begin{equation}
T_m^{\mathsf{update},{(k)}}(l) = \max_{m' \in \Xi(m)} \frac{\vartheta}{R^{(k)}_{m,m'}(l)},
\end{equation}
\textcolor{black}{where  $R^{(k)}_{m,m'}(l)$ is the transmission rate from  ES $m'$ to ES $m$ via wired communications.
Thus the updating latency of each ES $m$ over $ \varphi$ consensus rounds is $T^{\mathsf{cons},{(k)}}_m = \sum_{l=1}^{\varphi} T_m^{\mathsf{update},{(k)}}(l)$. Finally, the latency of the model consensus is determined by the slowest ES as below:
\begin{equation}
\hspace{-3mm} T^{\mathsf{cons},(k)} = \max_{m \in \mathcal{M}} T^{\mathsf{cons},(k)}_m  = \max_{m \in \mathcal{M}} \left[ \sum_{l=1}^{\varphi} T_m^{\mathsf{update},{(k)}}(l) \right].\hspace{-2mm} 
\end{equation}}

\subsection{Latency of Block Mining in BFL}
\label{subsection:mining} 
After model consensus among ESs, an ES will be selected to work as a  leader that builds an unverified block $B$ and broadcasts it to all connected ESs and participating MDs for universal block mining. \textcolor{black}{This selection can be based on the ESs' reputation  in the previous aggregation round from the P2P collaboration process, where the reputation evaluation methodology can be used to quantify each ES's contribution to block generation \cite{huang2021zkrep}. Inspired by our previous work \cite{nguyen2021cooperative}, we adopt mining latency as the reputation metric: an ES which generates the fastest block in the previous mining round will have the highest reputation and is selected as the leader for mining coordination in the current mining round.}

\textcolor{black}{After leader selection, the mining process is executed. Similar to existing works \cite{9,10}, we focus on \textit{mobile mining} analysis, where the mining latency at MDs is considered, and thus the mining analysis at ESs is ignored.} \textcolor{black}{We adopt the popular Proof-of-Work mining mechanism \cite{7,8} for our BFL, where MDs compete to mine the block}. The adoption of other mining mechanisms in BFL will be considered in future works. Conceptually, the mining latency at each MD $n$ mainly consists of block generation latency and block propagation latency \cite{25}. \textcolor{black}{Due to resource constraints, we allow MDs to implement resource trading to buy hash power from the ESP to assist their mining.} \textcolor{black}{Accordingly, MDs compete with each other to gain the maximum hash power allocation from the ESP's hash resource pool, aiming to increase the probability of becoming the mining winner for gaining rewards.} \textcolor{black}{In every global aggregation round $k$, each MD $n$ specifies its mining  demand to trade hash resource, denoted as $ \Psi_n^{(k)}$ (Hash/sec) subject to the constraint of the total hash power of the BFL system $\Psi_{\mathsf{max}}$ under a policy $\bm{\Psi}^{(k)}= \{ \Psi_n^{(k)}| 0 < \Psi_n^{(k)} \leq \Psi_{\mathsf{max}}, \forall n \in \mathcal{N} \}$. Let  $\hbar^{(k)}$ denote the hash amount (in hash) required to mine the block $B$ (which also represents the size) in aggregation round $k$, the block generation latency at MD $n$ can be specified as $T_n^{\mathsf{gen},(k)}= \frac{\hbar^{(k)}}{\Psi_n^{(k)}}$. Thus, the energy consumption for block generation at MD $n$ can be given as $E_n^{\mathsf{gen},(k)} = \Xi_n \hbar^{(k)}$, where $\Xi_n$ is the power efficiency of the mining rig of MD $n$ (J/hash). }

After generating a block, MD $n$ will propagate it to ESs and other MDs for confirmation. The latency of this block propagation process can be determined as $T_n^{\mathsf{prop},(k)} = \xi (B|L^{M+N-1}|)^{(k)}$, 
where $\xi$ is a parameter to quantify the efficiency of block verification. Further, $B|L^{M+N-1}|$ represents the average delay of the repeated verification on the block $B$ of all  entities except MD $n$ \cite{nguyen2021cooperative}. Therefore, the mining latency at an MD can be given as $T_n^{\mathsf{mine},(k)} = T_n^{\mathsf{gen},(k)} + T_n^{\mathsf{prop},(k)}$.

However, in the block mining process, there is the possibility that a MU $n$ generates the block and propagates it slower than other miners which will discard this block from  blockchain. This issue is called forking and such a block is called an orphaned one. The forking probability can be determined as $P_{\mathsf{fork}} = 1- e^{-\iota\upphi(s_n)}$, where $\iota$ is set to $\iota =1/600(sec)$ \cite{nguyen2020privacy}. Moreover, $s_n$ represents how many transactions are included in the block mined by MD $n$, and $\phi (s_n)$ is a function of block size. Therefore, the mining latency at MD $n$ can be  rewritten as
\begin{equation}
T_n^{\mathsf{mine},(k)} = \zeta(T_n^{\mathsf{gen},(k)} + T_n^{\mathsf{prop},(k)}), \forall k \in \mathcal{K}, 
\end{equation}
where $\zeta$ is the number of forking occurrences in each global training round. Therefore, the total mining latency of BFL system can be given as $T^{\mathsf{mine},(k)} = \sum_{n \in \mathcal{N}} T_n^{\mathsf{mine},(k)}$.


\subsection{Formulation of System Latency Problem }
Our objective is to optimize the total latency of the BFL system from the user perspective as the sum of model training latency, model consensus latency and block mining latency at a certain aggregation round $k$:
\begin{subequations}
	\label{Equa:Optimization}
	\begin{align} 
	&\hspace{-2mm}  \underset{\boldsymbol{X},\boldsymbol{P}, \boldsymbol{B}, \boldsymbol{F},\bm{\Psi}}{\minimize} 
	&& \frac{1}{K} \sum_{k=0}^{K-1} \left(T^{\mathsf{learn},(k)} + T^{\mathsf{cons},(k)}  + T^{\mathsf{mine},(k)} \right)\hspace{-2mm}  \label{Objective}\\
	& \hspace{-5mm} \phantom{P1} \text{s.t.} 
	&& \hspace{-7mm} x_{n,m,g}^{(k)} \in \{0,1\}, \forall n \in \mathcal{N}, m \in \mathcal{M}, g \in \mathcal{G}, \label{constraint1} \\
	&&& \hspace{-7mm}  \sum_{m \in \mathcal{M}} \sum_{g \in \mathcal{G}} x_{n,m,g}^{(k)} \leq 1, n \in \mathcal{N}, \label{constraint2}\\
	&&&\hspace{-7mm}  0 < p_n^{(k)} \leq P_n , \forall n \in \mathcal{N}_n, \label{constraint3}\\
	&&&\hspace{-7mm}  0 < b_{n,g}^{(k)} \leq W, \forall n \in \mathcal{N}, g \in \mathcal{G}, \label{constraint31}\\
	&&&\hspace{-7mm}  0 < f_n^{\mathsf{\ell},(k)} \leq F_n, \forall n \in \mathcal{N},  \label{constraint4}\\
	&&&\hspace{-7mm}  0 < \Psi_n^{(k)} \leq H, \forall n \in \mathcal{N}_n, \label{constraint5}\\
	&&&\hspace{-7mm}  0 < E_n^{\mathsf{learn},(k)} +  E_n^{\mathsf{gen},(k)}  \leq E_n^{\mathsf{max},(k)}, \forall n \in \mathcal{N}_n, \label{constraint51}\hspace{-1mm} \\
	&&& \hspace{-7mm} \frac{1}{K} \sum_{k=0}^{K-1} \mathbb{E} \left[\Vert \nabla F^{(k)}(\boldsymbol{w}^{({k})}) \Vert^2\right] \leq \epsilon, \label{constraint6}
	\end{align}
\end{subequations}
where $E_n^{\mathsf{learn},(k)}  = \sum_{g \in \mathcal{G}} x^{(k)}_{n,m,g}E_{n,m}^{\mathsf{off},(k)}  + \left( 1-\sum_{g \in \mathcal{G}} x^{(k)}_{n,m,g}\right)E_n^{\mathsf{loc},(k)}$  is the energy consumption for model training. Here,  constraints \eqref{constraint1} and \eqref{constraint2} imply that each dataset can be either trained locally or offloaded to at most one ES via a sub-channel. \eqref{constraint3} ensures the transmit power constraint of each MD. Constraint \eqref{constraint31} guarantees that each MD $n$ is allocated a feasible bandwidth resource for data offloading. The MD also allocates a positive computational resource to train its ML model with respect to the maximum CPU capability $F_n$, as indicated in \eqref{constraint4}. Constraint \eqref{constraint5} guarantees that the hash power allocated to each MD is limited  by the total system hash resource. Further, constraint \eqref{constraint51} implies that the energy consumption of MD $n$ for model training and blockchain mining is limited by its battery energy level. Finally,  the global loss function should be less than a desirable value $\epsilon$ to ensure the required training quality, as indicated in constraint \eqref{constraint6}. 

The optimization problem in~\eqref{Equa:Optimization} is non-convex with respect to the mixed discrete offloading and continuous allocation variables. Due to the time-varying nature of system states, such as channel condition and computational availability, it is challenging to directly solve the formulated problem via conventional optimization approaches such as Lyapunov optimization \cite{25}. Therefore, we will propose to use a learning-based approach, where a new DRL algorithm is developed to well capture the dynamics of system and integrate them into the solution design.

 \begin{figure*}
	\centering
	\includegraphics [width=0.99\linewidth]{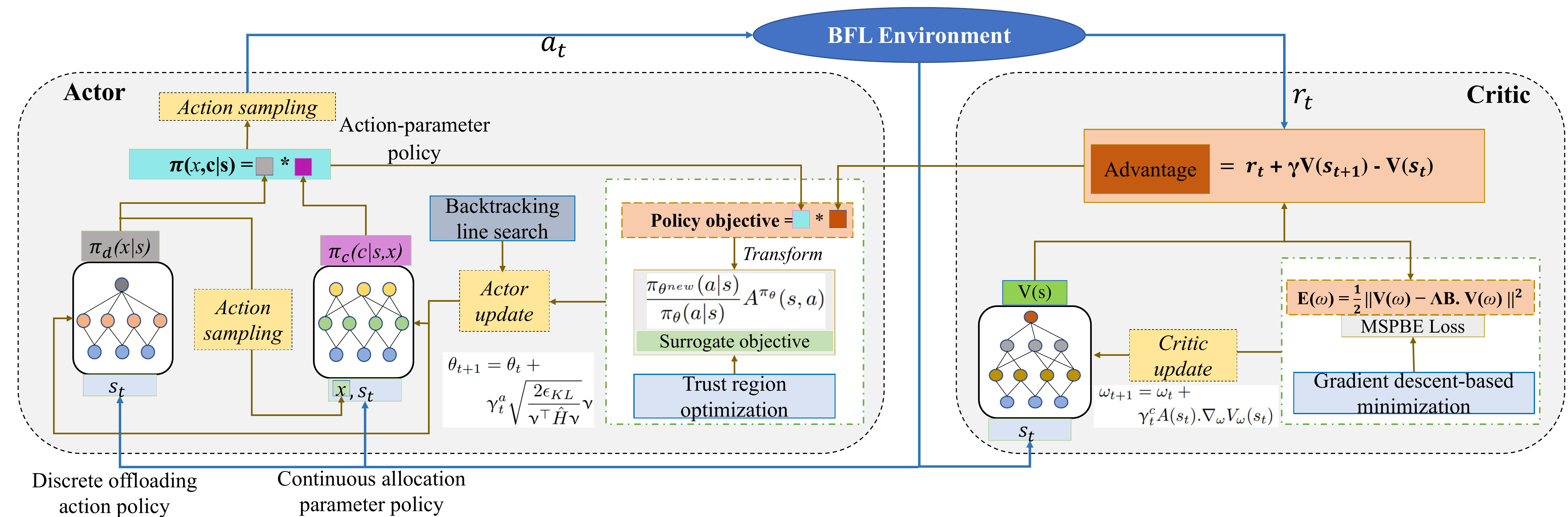}
	\caption{{The proposed parameterized A2C architecture for our BFL environment.} } 
	\label{Fig:A2C_Architecture}
\end{figure*}
\section{DRL Design with Parameterized A2C for BFL}
\label{Section:DRL}
Different from the existing DRL algorithms which consider either purely discrete \cite{23}, \cite{30}, \cite{31} or purely continuous actions \cite{14}, \cite{32}, \cite{37}, we here study a more practical DRL setting with a hybrid discrete-continuous action for improving the training performance.  Even though such a hybrid action setting has been previously mentioned in a few related works such as \cite{33}, a holistic investigation on the sampling of discrete and continuous actions has not been given. \textcolor{black}{Therefore, we propose a parameterized advantage actor critic (A2C) algorithm to optimize the system latency, as illustrated in Fig.~\ref{Fig:A2C_Architecture}.  We consider a hybrid discrete and continuous action space, where the resource allocation variables in~\eqref{Equa:Optimization} are continuous, while the offloading decision variables are discrete. The actor is designed to train both offloading and resource allocation policies, which we demonstrate in Section~\ref{subsection:actordesign}. The critic is then designed to evaluate the efficiency of the actor policy training, which we present in Section~\ref{subsection:CriticDesign}. We also include the advantage function in the critic design as it has been shown to reduce the variance in actor policy training compared with conventional actor-critic approaches \cite{37}, \cite{34}. Finally, we develop a training procedure to optimize the long-term system utility of our developed A2C algorithm (Section~\ref{subsubsection:trainingA2C}), which corresponds to minimizing the long-term BFL system latency.}

\subsection{DRL Formulation} 
\label{Subsection:DRLformulate}
\textcolor{black}{We consider a single-agent DRL problem setting \cite{14,37}, where a virtual centralized agent interacts with the BFL environment induced by the interaction between MDs and ESs during model training and block mining. Our DRL scheme aims to optimize the total system latency in~\eqref{Objective} consisting of model training latency, consensus latency and block mining latency. To handle the formulated optimization problem, we build a centralized agent which defines its reward based on the total utility, and restricts its action space based on the resource allocation and learning constraints in~\eqref{Equa:Optimization}. With a comprehensive view of the model training and block mining processes, the agent can obtain the system state and employ it to take efficient actions via well-trained data offloading and resource allocation policies that are observed to minimize the system latency.} We consider a parameterized  Markov Decision process characterized by $\mathcal{M} =<\mathcal{S}, \mathcal{A}, r>$. Here, $\mathcal{S} = \{s_1,...,s_N\}$ and $\mathcal{A}=\{a_1...,a_N\}$ are the finite sets of state and parameterized action, respectively. Further, $r(s,a): \mathcal{S} \times \mathcal{A} \rightarrow \left[-R_o,R_o\right]$, ($R_o \in \mathds{R}_{>0}$) denotes the bounded system reward. We also denote $\Rho(s'|s,a)$ as the  probability of transition executed by action $a$ of the agent from $s$ to $s'$.  

In our BFL latency optimization problem, the offloading decisions for ML model training at MDs are closely associated with resource allocation variables. For instance, to offload the data to an ES, an MD needs to tune its transmit power and determine channel availability; or to execute data locally, it should determine its  computational utilization. 
We consider a parameterized action space: a finite set of discrete offloading action $a \in \mathcal{A}_a = \{a_1,a_2,...,a_n\}$ defined via a discrete-action policy $\pi^d(a|s)$ and each $a_n$ has a set of continuous parameters $\{c \in \mathcal{A}_c  \}$ defined via an action-parameter policy $\pi^c(c|s,x)$. Thus, the joint action is given by conditional probability $(a,c) \sim \pi_\theta(a,c|s) = \pi^d_{\theta_d}(a|s) \pi^c_{\theta_c}(c|s,a)$, where $\theta$ is the parameter of the overall action policy, and $\theta_d$ and $\theta_c$ are parameters of  discrete action policy and parameter policy, respectively, with $\theta = [\theta_d, \theta_c]$. To simplify the notation, $ \pi_\theta(a,c|s) $ is expressed by $ \pi_\theta(a,s) $  in the rest of the paper.

At each time step $t$, the agent at state $s_t$ takes an action $a_t$ to transition to the next state $s_{t+1}$ and observe a reward $r_{t+1} \leftarrow r(s_t,a_t,s_{t+1})$. As a result, the training data for DRL is produced in a form of \textit{trajectory} where each data point on the trajectory can be represented by tuple $\{s_t,a_t,r_t,s_{t+1}\}$. Given a policy $\pi$, the long-term discounted system reward is characterized by the state-value function $V^{\pi_\theta} (s): \mathcal{A} \rightarrow \mathds{R}$ and the action-value function $Q^{\pi_\theta} (s,a): \mathcal{S} \times \mathcal{A} \rightarrow  \mathds{R} $, which are $V^{\pi_\theta} (s)  = \mathds{E}^{\pi_\theta}\left[\sum_{t=0}^{\infty} \gamma_t r_{t+1} |s_t =s \right]$ and $Q^{\pi_\theta} (s,a) = \mathds{E}^{\pi_\theta}\left[\sum_{t=0}^{\infty} \gamma_t r_{t+1} |s_t =s, a_t=a \right]$, where $\gamma_t \in [0,1]$ is the discount value and $\mathds{E}^{\pi_\theta}[.]$  represents expectation of the executed reward function under policy $\pi$. By using the Bellman optimality equation, the Q-value associated with a state-action pair can also be expressed by $Q^{\pi_\theta} (s_t,a_t) = \mathds{E}^{\pi_\theta}\left[ r_{t+1} +\gamma V^{\pi_\theta} (s_{t+1}) \right]$. In this paper, we focus on the A2C model that can be characterized by the advantage function $A^{\pi_\theta}(s,a)$
\begin{multline}
\label{equ:Advant}
A^{\pi_\theta}(s_t,a_t) = Q^{\pi_\theta} (s_t,a_t) - V^{\pi_\theta} (s_t) \\= 
r_{t+1} +\gamma V^{\pi_\theta} (s_{t+1}) - V^{\pi_\theta} (s_t), \forall s \in \mathcal{S}, a \in \mathcal{A}. 
\end{multline}

\textcolor{black}{It is worth noting that the proposed A2C algorithm training is performed in a certain FL aggregation round $k$ to allow the agent to obtain an optimal latency-aware offloading and allocation strategy for MDs. Based on the well trained A2C model, we then deploy it into the BFL environment to guide the FL training across aggregation rounds. Therefore, the index $k$ is dropped for the simplicity of notations in our DRL formulation.} In the following, we define state, action and reward for our DRL algorithm. 

\subsubsection{State} In our BFL environment, the  system state consists of five components: data state $S_{\mathsf{data}}(t)$, channel state $S_{\mathsf{channel}}(t)$, bandwidth state $S_{\mathsf{band}}(t)$, computation state $S_{\mathsf{comp}}(t)$, and hash power state $S_{\mathsf{hash}}(t)$. Therefore, the system state is defined as:
\begin{equation} 
s(t) = \{S_{\mathsf{data}}(t), S_{\mathsf{channel}}(t), S_{\mathsf{band}}(t),S_{\mathsf{comp}}(t),S_{\mathsf{hash}}(t)\}.
\end{equation}
Here, $S_{\mathsf{data}}(t)$ is defined as $S_{\mathsf{data}}(t) = \{D_n(t)\}_{ n \in \mathcal{N}}$. \textcolor{black}{Further, $S_{\mathsf{channel}}(t) = \{q_{n,g}(t)\}_{n\in\mathcal{N},g\in\mathcal{G}}$ indicates whether the sub-channel $g$ is used by MD $n$ at time slot $t$. If yes, $q_{n,g}(t) =1$, otherwise $q_{n,g}(t) =0$. The bandwidth state $S_{\mathsf{band}}(t)$ can be given $S_{\mathsf{band}}(t) = \{b_{n,g}(t)\}_{n\in\mathcal{N},g\in\mathcal{G}}$ under the total radio system bandwidth  $W$.} The computation state $S_{\mathsf{comp}}(t)$ contains the information of current  computational resource $S_{\mathsf{comp}}(t) = \{f_n^{\mathsf{\ell}}(t)\}_{n\in\mathcal{N}}$. Lastly, the hash power state $ S_{\mathsf{hash}}(t)$ presents the current  hash power $\Psi_n$ of all MDs: $S_{\mathsf{hash}}(t) = \{\Psi_1(t), \Psi_2(t),..., \Psi_N(t)\}$. 
\subsubsection{Action} Our BFL system features a parameterized action space with offloading or local execution, as elaborated below:

\begin{itemize}
	\item \textbf{Offloading} \textit{(transmit power, channel bandwidth allocation, hash power allocation)}: When MD $n$ chooses the offloading mode $x_{n,m,g} = 1$, it must determine relevant parameters, i.e., transmit power $p_n$ and channel bandwidth $b_{n,g}$ that are needed for offloading.  Also, MDs perform mining regardless of their learning status, and thus we also involve a hash power allocation parameter $\Psi_n$. Therefore, the joint action on each MD can be expressed by $a_n = \{x_{n,m,g},  p_n, b_{n,g}, \Psi_n\}$ and the complete system action  in this mode is  $a(t) = x_{n,m,g}(t), p_n(t), b_{n,g}(t), \Psi_n(t), \forall n \in \mathcal{N}, m \in \mathcal{M},  g \in \mathcal{G} $.
	\item \textbf{Local Execution} \textit{(computational allocation, hash power allocation)}: When MD $n$ chooses the local execution mode $ x_{n,m,g} = 0$, it must specify its necessary parameters to execute the model training, e.g., $f^{\mathsf{\ell}}_n$. Moreover, similar to the offloading mode, the parameter of hash power allocation is also involved to support the block mining task executed after the model learning. Therefore, the joint action on each MD is $a_n = \{x_{n,m,g},  f_n^{\mathsf{\ell}}, \Psi_n \}$ and the complete action for the BFL system in this local execution mode is given by $a(t) = x_{n,m,g}(t), f_n^{\mathsf{\ell}}(t), \Psi_n(t), \forall n \in \mathcal{N}, m \in \mathcal{M},  g \in \mathcal{G}$.
\end{itemize}

\subsubsection{System Reward Function} 
\label{subsectionReward}
The reward in our BFL system comes from the joint model learning and block mining, by maximizing the system returns in the long run. However, in the  optimization problem~\eqref{Equa:Optimization}, our objective is to minimize the system latency in a certain aggregation round, which requires a negative multiplication before being used as the reward, i.e., we define the reward as $r(s_t,a_t) = - \left(T^{\mathsf{learn}} + T^{\mathsf{cons}}  + T^{\mathsf{mine}} \right)$. For better presentation, we transform the latency minimization problem into a system utility optimization problem by using a simple exponential equation:
\begin{equation}
\label{MiningUtility}
U = \left[e^{\left(1-\frac{\left(T^{\mathsf{learn}} + T^{\mathsf{cons}}  + T^{\mathsf{mine}} \right)}{\tau} \right)} -1\right], 
\end{equation}
where $\tau$ denotes an upper bound of the system latency. \eqref{MiningUtility} implies that the lower system latency results in a higher system utility. Therefore, instead of minimizing the latency, we are keen on maximining the system utility which better characterizes  the efficiency of our algorithm.  Accordingly, we transform~\eqref{Equa:Optimization} to the following optimization problem:
\begin{subequations}
	\label{Equa:Optimization1}
	\begin{align} 
	& \underset{\boldsymbol{X},\boldsymbol{P}, \boldsymbol{B}, \boldsymbol{F},\bm{\Psi}}{\maximize} 
	&&  U \label{Objective1}\\
	& \phantom{P1} \text{s.t.} 
	&& \ref{constraint1} - \ref{constraint6}.
	\end{align}
\end{subequations}
Thus, we  re-define the system reward as a result of executing the action with given states  as $r(s_t,a_t) = U(t)$. 
\subsection{{Policy Gradient Update for A2C}}
\textcolor{black}{We first analyze the policy gradient update necessary for the design of actor and critic components that will be elaborated later. In the BFL environment, the agent tries to search among the set of parameterized offloading policies to obtain the optimal policy $ \pi^{*}_\theta(a,s)$ that can return the maximum reward, i.e., system utility. However, in practice the search space may be very large and the agent may not be able to find the optimal policy.} Thus, we restrict the policy set by a vector $\theta \in \mathds{R}^z$ for some integer $z >0$ and perform the optimization over the group of the parameterized policies  $\pi_\theta(a,s)$.  To facilitate our analysis, the following assumptions are introduced.
 \begin{assumption}
 \label{Ass1}
\textit{The policy $\pi_\theta$ and $\mathcal{P}(s'|s,a)$ guarantee an irreducible and aperiodic Markov chain described  by  $\mathcal{P}^{\pi_\theta}(s'|s)$, $\forall \theta$. Therefore, there exists a stationary distribution defined as $d^{\pi_\theta}(s)$ on policy $\theta$. }
 \end{assumption}
\textcolor{black}{Assumption~\ref{Ass1} is a common assumption for actor-critic algorithms \cite{37}, \cite{34}.} Accordingly, we define $J(\pi_\theta) =r(\pi_\theta)$, where $r$ is the system reward defined in Section~\ref{subsectionReward}, as the performance function with respect to the policy parameterized by $\theta$. \textcolor{black}{Accordingly, the objective function with linear Q-value function approximation is given by
\begin{equation}
 J(\pi_\theta) = \sum_{s \in \mathcal{S}} d^{\pi_\theta}(s) Q^{\pi_\theta} (s,a). 
\end{equation}}
\textcolor{black}{Next, similar to \cite{34}, we make an assumption on the policy $\pi_\theta(a,s)$.} 
\begin{assumption}
\label{Ass2}
The following assumptions are made on the policy function:
\textit{\begin{itemize}
		\item Positive policy function: $\pi_\theta(a|s) >0, \forall \theta \in \mathds{R}^d$
		\item Bounded policy gradient: 
		\begin{align*}
		||\nabla \log \pi_\theta(a|s)||_2 < G_\pi, \forall \theta,\forall s, \forall a, G_\pi >0
		\end{align*}
		\item $\ell$-Lipschitz policy gradient:
		\begin{align*} 
		|| \nabla \log \pi_{\theta_1} -  \nabla\log \pi_{\theta_2}||_2 \leq \ell||\theta_1 - \theta_2||_2, \forall \theta_1,\forall \theta_2 \end{align*}
\end{itemize}}
\end{assumption}
Here, the regularity conditions in Assumption~\ref{Ass2} can be satisfied by using the Gibbs softmax distribution network, e.g., in deep neutral networks, for action selection in the actor. Under this assumption, the policy gradient $\pi_\theta$ can be updated  as $\nabla J(\pi_\theta) = \mathds{E}_{s\sim d^{\pi_\theta}(.),a\sim \pi_\theta(.|s)}   \left[ Q^{\pi_\theta}(s,a) \nabla  \log\pi_\theta(a|s)\right]$. 


\subsection{Actor Design}
\label{subsection:actordesign}
In our A2C-based DRL algorithm, the actor aims to update the parameter $\theta$ over time-step iterations to find the optimal policy $\pi_\theta^*$ that characterizes the best trajectory for our system utility optimization problem. In other words, the actor is expected to make optimal model learning and block mining  decisions in a fashion that the long-term reward (i.e., system utility) is maximized. In doing so, the actor needs to use the gradient $ \nabla J(\pi_\theta) $  to optimize its policy
\begin{equation}
\label{maximize_func}
\max_{\theta \in \mathds{R}^d} J(\pi_\theta)=   \mathds{E}_{s\sim d^{\pi_\theta}(.)} \left[ \pi_\theta(a|s) A^{\pi_\theta}(s,a)\right],
\end{equation}
Traditionally, the policy is optimized via a vanilla policy gradient algorithm by direct policy search over the entire exploration space which is known to be inefficient. \textcolor{black}{Instead, we use trust region policy optimization (TRPO) to  improve policy optimization} by maximizing a surrogate objective over a trust-region \cite{35}. Accordingly, \eqref{maximize_func} can be re-written 
\begin{equation}
\label{Equ:maximize}
\max_{\theta \in \mathds{R}^d} J(\pi_\theta)=   \mathds{E}_{\pi_{\theta}} \left[ \frac{\pi_{\theta^{\mathsf{new}}}(a|s)}{\pi_{\theta}(a|s)} A^{\pi_{\theta}}(s,a)\right],
\end{equation}
\begin{equation*}
subject~to~~~~ \mathds{E}_s \left[KL \left(\pi_{\theta}(.|s)|| \pi_{\theta^{\mathsf{new}}}(.|s)\right) \right] \leq \epsilon_{KL},
\end{equation*}
for some  $\epsilon_{KL} >0$. In~\eqref{Equ:maximize}, $\mathds{E}_s$ represents the state visitation distribution induced by $\pi_\theta$, and $\theta$ and $\theta^{\mathsf{new}}$ are the vectors of policy parameters before and after each update, respectively. By enforcing a Kullback-Leibler (KL)-divergence constraint $KL$, the probability distributions of the policy before and after the update will be kept closely in the parameter space to avoid divergence in the gradient update. Considering our parameterized action space including the offloading actions and their parameters, the optimization objective function can be expressed as $\mathds{E}_{\pi_{\theta}} \left[ \frac{\pi^d_{\theta_d^{\mathsf{new}}}(a|s) \pi^c_{\theta_c^{\mathsf{new}}}(c|s,a)}{\pi^d_{\theta_d}(a|s) \pi^c_{\theta_c}(c|s,a)} A^{\pi_{\theta'}}(s,a)\right]$. 

\textcolor{black}{To solve~\eqref{Equ:maximize}, we construct a closed form  solution for computing the KL-divergence between the distributions of the discrete offloading action policy and the action-parameter policy for our BFL problem as follows:}
\begin{align}
\label{Equa:KLDiverg}
&\scriptstyle \mathds{E}_s \left[KL \left(\pi_{\theta}(a,c|s)|| \pi_{\theta^{\mathsf{new}}}(a,c|s)\right) \right] 
\nonumber \\ &\scriptstyle  =
\mathds{E}_s \left[KL \left(\pi^d_{\theta_d^{\mathsf{new}}}(a|s)||\pi^d_{\theta_d}(a|s)\right) 
 +  KL \left(\pi^c_{\theta_c^{\mathsf{new}}}(c|s,a)||\pi^c_{\theta_c}(c|s,a)\right) \right] 
 \nonumber \\ &\scriptstyle =
\mathds{E}_s \left[KL \left(\pi^d_{\theta_d^{\mathsf{new}}}(a|s)||\pi^d_{\theta_d}(a|s)\right) \right] +
\nonumber \\ & \hspace{14mm} \scriptstyle
\mathds{E}_s \mathds{E}_{a \sim \pi^d_{\theta_d^{\mathsf{new}}(a|s)}}
 \left[KL \left(\pi^c_{\theta_c^{\mathsf{new}}}(c|s,a)||\pi^c_{\theta_c}(c|s,a)\right) \right]. 
\end{align}
By using the analytical form of the discrete offloading action policy $\pi^d_{\theta_d^{\mathsf{new}}(a|s)}$  via its trajectory probability, we can further reduce the variance of the KL-divergence in the last term of~\eqref{Equa:KLDiverg} as
\begin{multline}
\scriptstyle \mathds{E}_s \left[KL \left(\pi_{\theta}(a,c|s)|| \pi_{\theta^{\mathsf{new}}}(a,c|s)\right) \right] \approx
\mathds{E}_s \left[KL \left(\pi^d_{\theta_d^{\mathsf{new}}}(a|s)||\pi^d_{\theta_d}(a|s)\right) \right] \\ 
\scriptstyle +
\mathds{E}_s
\left[ \left( -\log(\pi^d_{\theta_d^{\mathsf{new}}(a|s)}) \right)  KL \left(\pi^c_{\theta_c^{\mathsf{new}}}(c|s,a)||\pi^c_{\theta_c}(c|s,a)\right) \right]. 
\end{multline}
Based on the above approximation steps, the optimization of the original policy in~\eqref{maximize_func} is transformed into a conjugate gradient form which allows for estimating the expectations of the policy objective in~\eqref{Equ:maximize} with policy improvement guarantees. In this regard, the update direction can be approximated by $\upnu \approx \hat{H}^{-1}\nabla_\theta J(\pi_\theta),$ where $\hat{H}^{-1}$ is the Hessian-vector product of sampled KL-divergence. Finally, the actor parameter is updated via backtracking line search \cite{36} subject to the KL constraint as follows:
\begin{equation}
\label{Equa:actorupdate}
\theta_{t+1} = \theta_t + \upgamma_t^a \sqrt{\frac{2\epsilon_{KL}}{\upnu^\top \hat{H} \upnu }} \upnu, 
\end{equation}
where $\upgamma_t^a \in (0,1) $ is the  backtracking step-size parameter which controls the line search for guaranteeing the conjugate gradient improvement given the KL divergence constraint. It is desirable to set up a fairly large step size $ \upgamma_t^a$ to initialize the line search space on the policy, and gradually shrink $\upgamma_t^a$ until a Armijo-Goldstein condition \cite{36} is satisfied where a critical (optimal) point is obtained. 

\subsection{Critic Design}
\label{subsection:CriticDesign}

The role of the critic is to estimate the state-value function $ V^{\pi_\theta} (s)$ to guide the update of the actor by approximating its state-value function. Specifically, a feature function $\upphi: \mathcal{S} \mapsto  \mathds{R}^n$ is created as a full-ranked matrix with $I$ dimensions to create $i$-dimensional features ($i \leq I$) for any state $s \in \mathcal{S}$, i.e., $\upphi(s)= \left(\upphi^1(s),..., \upphi^i(s)\right)^\top$. Given a state $s$, the state-value function is thus approximated by a linear function $V_\omega(s) \approx \omega \upphi(s)^\top,$ where $\omega \in \mathds{R}^m $ is a parameter vector used to update the state-value function. By function approximation, the critic provides an inexact temporal difference (TD) solution to the value function $ V^{\pi_\theta} (s)$ under policy $\pi_\theta $. This naturally results in the minimization of TD error as a loss function, \textcolor{black}{i.e., the Mean Squared Projected Bellman Error (MSPBE) defined by
\begin{equation}
\label{Equa:loss}
E_\theta(\omega) = \frac{1}{2} || V_\omega -\Lambda B_\theta V_\omega  ||^2,
\end{equation}
where $	\Lambda = \upphi^\top(\upphi M \upphi^\top)^{-1}\upphi \Eta$ is the projector with $\Eta \in \mathds{R}^{|\mathcal{S}| \times |\mathcal{S}|}$ being a diagonal matrix whose elements are within the stationary state distribution $ d^{\pi_\theta}$ generated according to the policy $\pi_\theta$ when the entire state space is irreducible. Also, $ B_\theta$ denotes the Bellman operator implied by~$B_\theta V(s) \leftarrow r(s,a)+ \gamma \Rho_\theta(s,s',a) V(s)$, where $V(s)$ is the state value, $r(s,a)$ is reward with discount  $\gamma \in (0,1)$, and $\Rho_\theta(s,s',a)$ is the transition probability as defined in~\ref{Subsection:DRLformulate}.} During the value function evaluation process, the critic aims to minimize the loss function in~\eqref{Equa:loss} to obtain a fixed point of the projected Bellman operator by gradient TD learning, where the Lipschitz continuity property of the MSPBE loss function in~\eqref{Equa:loss} is significant to guarantee a \textcolor{black}{successful policy-parameter update}. Note that the loss function in is quadratic and thus convex with respect to $\theta$.
\begin{lemma}
\textit{Given the feature vector $\upphi(s)= \left(\upphi^1(s),..., \upphi^i(s)\right)^\top$, the $ E_\theta(\omega)$'s gradient is $\ell$-Lipschitz with $\ell = (1 +\gamma)^2 \max_{i}||\upphi^i||^2_2$, where $i \in \mathcal{I}$.}
\end{lemma}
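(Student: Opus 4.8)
The plan is to exploit linear value-function approximation to recast $E_\theta(\omega)$ as a quadratic form in $\omega$, so that its gradient is affine and its Lipschitz constant is exactly the spectral norm of the (constant) Hessian. First I would pass to matrix notation: let $\Phi$ denote the matrix whose rows are the feature vectors $\upphi(s)^\top$, so that $V_\omega = \Phi\omega$, and write the Bellman image in~\eqref{Equa:loss} as $B_\theta V_\omega = R + \gamma\Rho_\theta\Phi\omega$, where $\Rho_\theta$ is the transition matrix under $\pi_\theta$. Substituting these into the projected Bellman residual gives
\begin{equation}
V_\omega - \Lambda B_\theta V_\omega = \left(\mathbf{I} - \gamma\Lambda\Rho_\theta\right)\Phi\,\omega - \Lambda R,
\end{equation}
which is affine in $\omega$; writing $\mathbf{G} = (\mathbf{I} - \gamma\Lambda\Rho_\theta)\Phi$ confirms that $E_\theta(\omega) = \tfrac{1}{2}\|\mathbf{G}\omega - \Lambda R\|^2$ is quadratic and convex, consistent with the remark preceding the lemma.

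Next I would differentiate and read off the Lipschitz constant. Since $E_\theta$ is quadratic, $\nabla_\omega E_\theta(\omega) = \mathbf{G}^\top(\mathbf{G}\omega - \Lambda R)$ is affine, so for any $\omega_1,\omega_2$ one has $\nabla_\omega E_\theta(\omega_1) - \nabla_\omega E_\theta(\omega_2) = \mathbf{G}^\top\mathbf{G}(\omega_1 - \omega_2)$. Taking the Euclidean norm and using the definition of the spectral norm yields $\|\nabla_\omega E_\theta(\omega_1) - \nabla_\omega E_\theta(\omega_2)\|_2 \le \|\mathbf{G}^\top\mathbf{G}\|_2\,\|\omega_1 - \omega_2\|_2 = \|\mathbf{G}\|_2^2\,\|\omega_1 - \omega_2\|_2$. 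Hence it suffices to establish $\|\mathbf{G}\|_2^2 \le (1+\gamma)^2\max_i\|\upphi^i\|_2^2$.

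The last step is to bound $\|\mathbf{G}\|_2$ by decomposing it into its transition/projection part and its feature part. Submultiplicativity gives $\|\mathbf{G}\|_2 \le \|\mathbf{I} - \gamma\Lambda\Rho_\theta\|_2\,\|\Phi\|_2$, and the triangle inequality gives $\|\mathbf{I} - \gamma\Lambda\Rho_\theta\|_2 \le 1 + \gamma\|\Lambda\|_2\|\Rho_\theta\|_2$. Here I would invoke Assumption~\ref{Ass1}: working in the inner product weighted by the stationary-distribution matrix $\Eta$ induced by $d^{\pi_\theta}$, the orthogonal projector $\Lambda$ is non-expansive, $\|\Lambda\|_2 \le 1$, and the row-stochastic transition matrix $\Rho_\theta$ is likewise non-expansive, $\|\Rho_\theta\|_2 \le 1$, so that $\|\mathbf{I} - \gamma\Lambda\Rho_\theta\|_2 \le 1 + \gamma$. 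Controlling the feature factor by the largest feature-vector norm, $\|\Phi\|_2 \le \max_i\|\upphi^i\|_2$, then produces $\ell = \|\mathbf{G}\|_2^2 \le (1+\gamma)^2\max_i\|\upphi^i\|_2^2$, as claimed.

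I expect the main obstacle to be the two non-expansiveness bounds, $\|\Lambda\|_2 \le 1$ and $\|\Rho_\theta\|_2 \le 1$: each holds only in the $\Eta$-weighted geometry induced by the stationary distribution, so the entire chain of inequalities must be carried out consistently in that weighted inner product, and one must separately verify that $\Lambda$ is a genuine orthogonal (self-adjoint, idempotent) projector there. This is precisely where the irreducibility and aperiodicity of Assumption~\ref{Ass1} enter, since they guarantee a well-defined strictly positive stationary weighting $\Eta$ and hence a meaningful orthogonal projection onto the feature subspace.
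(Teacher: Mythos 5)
Your proof is correct and follows essentially the same route as the paper's: you recognize $E_\theta(\omega)$ as a quadratic in $\omega$ whose Hessian is $\mathbf{G}^\top \Eta\, \mathbf{G}$ with $\mathbf{G}=(\mathbf{I}-\gamma\Lambda\Rho_\theta)\Phi$, and bound its largest eigenvalue via the triangle inequality, non-expansiveness of the projector and of the transition operator, and the feature bound $\sum_i d^i\big((\upphi^i)^\top\omega\big)^2\le \max_i\|\upphi^i\|_2^2\,\|\omega\|_2^2$, exactly the chain the paper runs through its explicit expansion of $\omega^\top\mathds{H}(\omega)\omega$. Your closing caveat is not optional but is the crux: $\|\Lambda\|\le 1$, $\|\Rho_\theta\|\le 1$, and $\|\Phi\|\le\max_i\|\upphi^i\|_2$ all fail for the plain spectral norm and hold only in the stationary-distribution-weighted geometry, which is precisely what the paper implements by conjugating with $\sqrt{\Eta}$ (working with the orthogonal projector $\bar{\Lambda}=\sqrt{\Eta}\upphi^\top(\upphi\Eta\upphi^\top)^{-1}\upphi\sqrt{\Eta}$ and using $\|\sqrt{\Eta}\Rho\upphi^\top\omega\|_2\le\|\sqrt{\Eta}\upphi^\top\omega\|_2$, which rests on stationarity from Assumption~\ref{Ass1}).
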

\begin{proof}
See Appendix~\ref{subsection:Appen-LossCritic}. 
\end{proof}
Based on the stochastic gradient descent on $ E_\theta(\omega)$ with respect to parameter $\omega$, the critic update can be achieved by 
\begin{equation}
\label{Equa:criticupdate}
\omega_{t+1} = \omega_t + \upgamma_t^c  A^{\pi_\theta}(s_t,a_t) \nabla_\omega V_\omega(s_t),
\end{equation}
where $\upgamma_t^c \in (0,1)$ is a step-size parameter.

\begin{algorithm}
	\footnotesize
	\caption{{Training procedure of our A2C algorithm}}
	\begin{algorithmic}[1]
		\label{Al:DRL}
		\STATE \textbf{Input:}   Time budget $T$, discount factor $\gamma$, BFL environment $env$
		\STATE \textbf{Output:} Optimal parameterized action policy $\pi^*_\theta$ and maximum reward $R$
		\STATE \textbf{Initialization:} Initialize offloading policy parameters $\theta_d$, parameterized allocation policy parameter $\theta_c$, critic parameter $\omega$,  actor's learning rate $\upgamma^a$,  critic's learning rate $\upgamma^c$, discount parameter $\upgamma$,  initial reward $R=0$ 
		\FOR{each episode} 
		\STATE Set up initial state $s_0$ \label{Algoline:initialize}
		\FOR{timestep $t = 1,2,...,T$} 
		\STATE Perform action sampling using $\pi^d(.|s)$ and $\pi^c(.|s)$ \label{Algoline:actionsample}
		\FOR{each MD $n \in \mathcal{N}$ } 
		\STATE Sample the discrete offloading action with policy $x_n \sim \pi^d_n{_{\theta_d}(.|s)}$
		\STATE Sample continuous allocation parameters for the selected offloading action $x_n$
		\[
		c_n= 
		\begin{cases}
		(p_n, b_{n,g}, \Psi_n) \sim \pi^c_{\theta_c}(.|s,x_n),& \text{if } x_{n,m,g} = 1\\
		(f_n^{\mathsf{\ell}}, \Psi_n) \sim \pi^c_{\theta_c}(.|s,x_n), & \text{if } x_{n,m,g} = 0
		\end{cases}
		\]
		\ENDFOR \label{Algoline:actionsample1}
		\STATE Execute the parameterized action $ a_t \leftarrow (x_{n,m,g,t},c_{n,t}), \forall n \in \mathcal{N}$ in the BFL environment  \label{Algoline:action}
		\STATE Obtain reward $r_t$ and next state $s_{t+1}$: $(r_t, s_{t+1}) \leftarrow env.step(a_t)$ \label{Algoline:reward}
		\STATE Calculate the accumulated reward $R_{t+1} \leftarrow r_t + \gamma_t*R_t $ \label{Algoline:reward1}
		\STATE Compute the advantage function $A^{\pi_\theta}(s,a)$ using~\eqref{equ:Advant} based on $r_t$ and  $ V^{\pi_\theta} (s)$ \label{Algoline:updateall1}
		\STATE Estimate the policy gradient for the actor: $\nabla J(\pi_\theta) = \mathds{E}_{s\sim d^{\pi_\theta}(.),a\sim \pi_\theta(.|s)}   \left[ A^{\pi_\theta}(s,a)  \nabla_\pi \log\pi_\theta(a|s)\right]$
		\STATE Optimize the actor policy via TRPO in \eqref{Equ:maximize} with computed advantage value and update its parameter with KL constraint in~\eqref{Equa:actorupdate}: $\theta_{t+1} = \theta_t + \upgamma_t^a \sqrt{\frac{2\epsilon_{KL}}{\upnu^\top \hat{H} \upnu }} \upnu$
		\STATE Calculate the MSPBE loss for critic via~\eqref{Equa:loss}
		\STATE Optimize $V_\omega$ for the critic and update its policy in~\eqref{Equa:criticupdate} via TD error $\sigma_t$: $\omega_{t+1} = \omega_t + \upgamma_t^c  A^{\pi_\theta}(s_t,a_t). \nabla_\omega V_\omega(s_t)$ \label{Algoline:updateall2}
		\ENDFOR 
		\ENDFOR
	\end{algorithmic}
	
\end{algorithm}
\vspace{-0.1in}
\subsection{Training Procedure of Parametrized A2C Algorithm in BFL}
\label{subsubsection:trainingA2C}

 The training procedure of our A2C algorithm is summarized in Algorithm~\ref{Al:DRL}. To prepare for the training, we build a BFL environment where multiple MDs participate in the FL training and block mining with a set of ESs connected via wireless links. The objective function in the system utility optimization problem built in~\eqref{Equa:Optimization1} is selected as a system DRL reward that is obtained via iterative training with parameterized actions and system states for an optimal offloading policy  $\pi^*_\theta$ to maximize the reward $R$ in the long run with a initial system state (line~\ref{Algoline:initialize}). We build an actor that consists of a deep neural network (DNN) for offloading decision sampling and another DNN for parameter sampling. At each timestep, we randomly sample a set of discrete offloading actions for all MDs via a DNN-empowered policy $\pi^d(.|s)$. \textcolor{black}{Then we also sample a set of continuous allocation parameters based on the sampled offloading decision using another policy $\pi^c_{\theta_c}(.|s,x_n)$ as defined in~\ref{Subsection:DRLformulate}} (lines~\ref{Algoline:actionsample}-\ref{Algoline:actionsample1}). With the result of the action sampling step, we  generate a complete set of parameterized actions for all MDs which is ready to be executed in the BFL environment (line~\ref{Algoline:action}). This allows us to obtain the reward, i.e., system utility, to calculate the long-term return and the agent moves to the next state needed for the following step of training (lines~\ref{Algoline:reward}-\ref{Algoline:reward1}). Then, the actor and critic updates their policy  (lines~\ref{Algoline:updateall1}-\ref{Algoline:updateall2}): (i) the actor computes the policy gradient via its advantage function and TRPO to optimize its policy, and (ii) the critic computes the MSPBE loss function and updates its gradient via TD error learning. 

\section{Simulations and Performance Evaluation }
\label{Section:Simulate}
\subsection{Parameter Settings}
\begin{table}
	\caption{Simulation parameters.}
	\label{Table:Acronyms}
	\scriptsize
	\centering
	\captionsetup{font=scriptsize}
	\setlength{\tabcolsep}{5pt}
	\begin{tabular}{p{4.5cm}|p{2.4cm}}
		\hline
		\textbf{Parameter}& 
		\textbf{Value}
		\\
		\hline
		Number of ESs $M$& 5
		\\
		Number of MDs $N$& [20-100]
		\\
		Number of sub-channels at each ES $K$ & 5
		\\
		Data size $D_n$  & [0.5-2] MB
		\\
		CPU workload of MDs and ESs & [0.7-1.1] Gcyles
		\\
		MDs' transmit power  $p_n$ & [10-30] dBm
		\\
		MD's computational capability $f_n^{\mathsf{\ell}}$ & [0,2-2] GHz
		\\
		ES's computational capability $f_m$ & 5 GHz
		\\
		Background noise variance  & -100 dBm
		\\
		Maximum system bandwidth $W$ & 20 MHz
		\\
		MD's model size $\vartheta$ & 5 KB
		\\
		MD's energy coefficient $\kappa$ & $5*10^{-27}$
		\\
		MD's hash rate $\Psi_n$ & [100-1000] GHash/s 
		\\
		MD's mining power efficiency $\Xi_n$ &  $5*10^{-8}$ J/hash
		\\
		Hash amount of a block $\hbar$ & 50 GHash 
		\\
		Block broadcasting rate $\xi$ & 0.005 
		\\
		Number of forking occurrences  $\zeta$ & 3
		\\
		MD's maximum latency $\tau_n$ & 3 sec
		\\
		Noise power spectral density  $N_0$ & -174 dBm/Hz
		\\
		ES's bandwidth $b_{m'}$ & 5 MHz
		\\
		ES's transmit power $p_{m'}$ & [100-120] dBm
		\\
		\hline
	\end{tabular}
	\label{tab1}
	\vspace{-0.15in}
\end{table}
We conduct numerical experiments to verify our method under various parameter settings.  Inspired by related works \cite{14, 15,16 }, \cite{25, nguyen2021cooperative, nguyen2020privacy}, we set up all necessary parameters for our BFL environment as listed in Table~\ref{Table:Acronyms}. 

We consider a BFL system with 3 ESs and 10 MDs which aim to collaboratively train two popular image datasets: SVHN\footnote{http://ufldl.stanford.edu/housenumbers/} (including 73,257 training instances and testing 26,032 instances) and Fashion-MNIST\footnote{https://github.com/zalandoresearch/fashion-mnist} (including 60,000 training instances and testing 10,000 instances), where each dataset contains 10 labels/classes. \textcolor{black}{For the SVHN dataset, we deploy a convolutional neural network (CNN) with two 2-D convolutional layers followed by two hidden layers (the first with 256 units and the second with 72 units) with ReLU activation. The CNN architecture used for the Fashion-MNIST dataset is similar,} with the first hidden layer with 320 units and the second hidden layer with 50 units. We investigate the FL performance under both IID and non-IID data settings. \textcolor{black}{In IID data setting, each MD possesses datapoints from all the 10 labels, while in non-IID data setting,} each MD contains data samples from three of 10 labels for the SVHN dataset and two of 10 labels for the Fashion-MNIST dataset. We employ the Adam optimizer with mini-batch size of 25 and 10  SGD iterations.

In our A2C algorithm, the actor has two DNNs, one for the discrete offloading policy with two hidden neural layers \{ 64 and 32 in sizes\} and one for the continuous allocation parameter policy with two layers  \{128 and 64 in sizes\}. For the output layers, we used Softmax to generate offloading decisions for MDs and adopted Tanh to produce allocation parameters given the discrete offloading policy. 
The KL divergence constraint $ \epsilon_{KL}$ is set to 0.01 for the TRPO-based actor policy optimizer \cite{35}. The critic was also built by a DNN that contains two hidden layers of sizes  \{200, 100\} \textcolor{black}{to train the state-value function of our A2C scheme with the Adam optimizer.} To implement our parameterized A2C algorithm, we set up a virtual agent that is allowed to interact with a pre-defined BFL environment to learn the parameterized offloading policy for all MDs and observe the return, i.e., system utility, after each iteration. \textcolor{black}{We trained the agent over 10000 episodes with 100 timesteps per each episode. }  All numerical results are averaged over 10 independent simulation runs. 
\begin{figure}[t!]
	\centering
	\begin{subfigure}[t]{0.24\textwidth}
		\centering
		\includegraphics[width=0.99\linewidth]{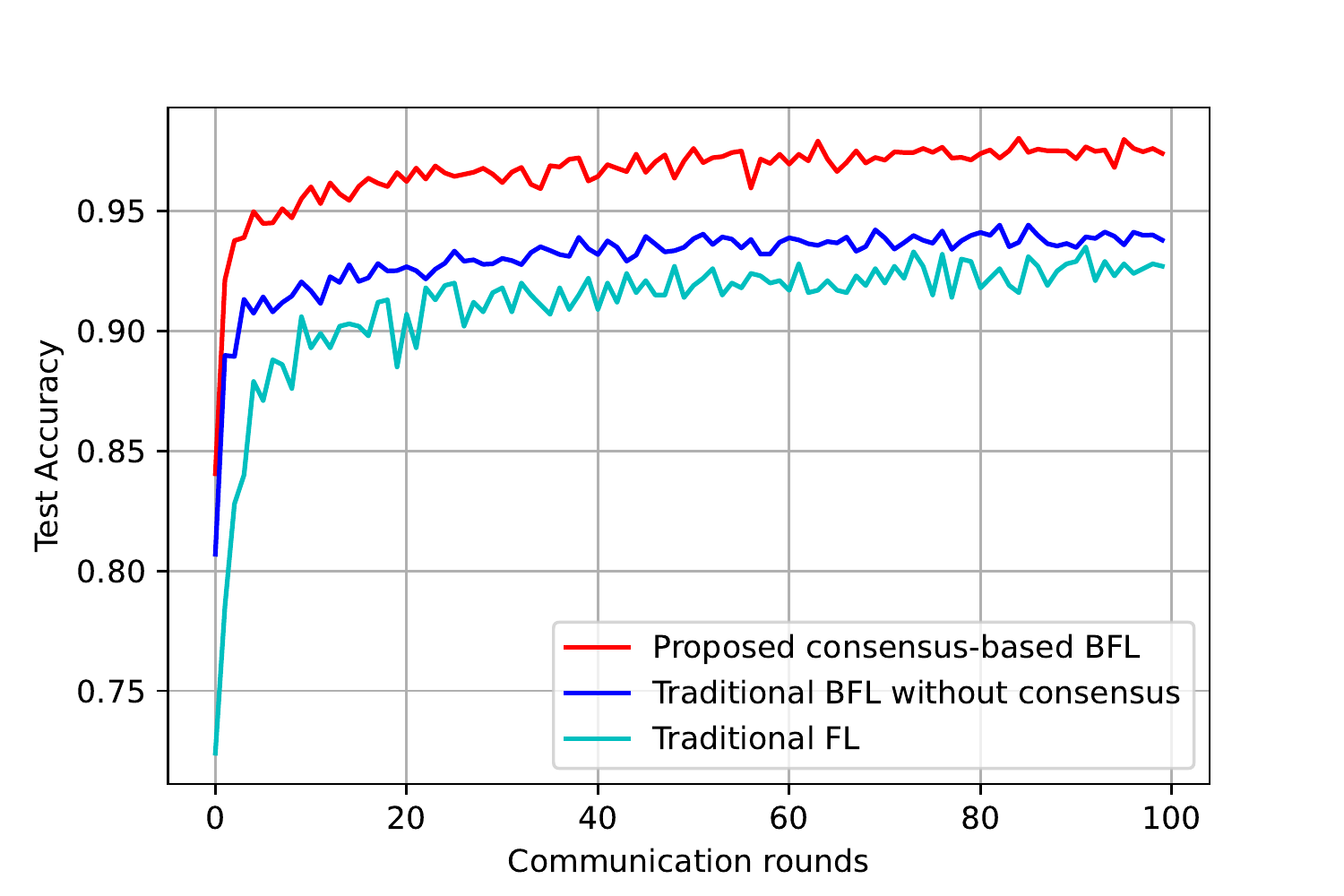} 
		\caption{Accuracy under IID  setting. }
	\end{subfigure}%
	~
	\begin{subfigure}[t]{0.24\textwidth}
		\centering
		\includegraphics[width=0.99\linewidth]{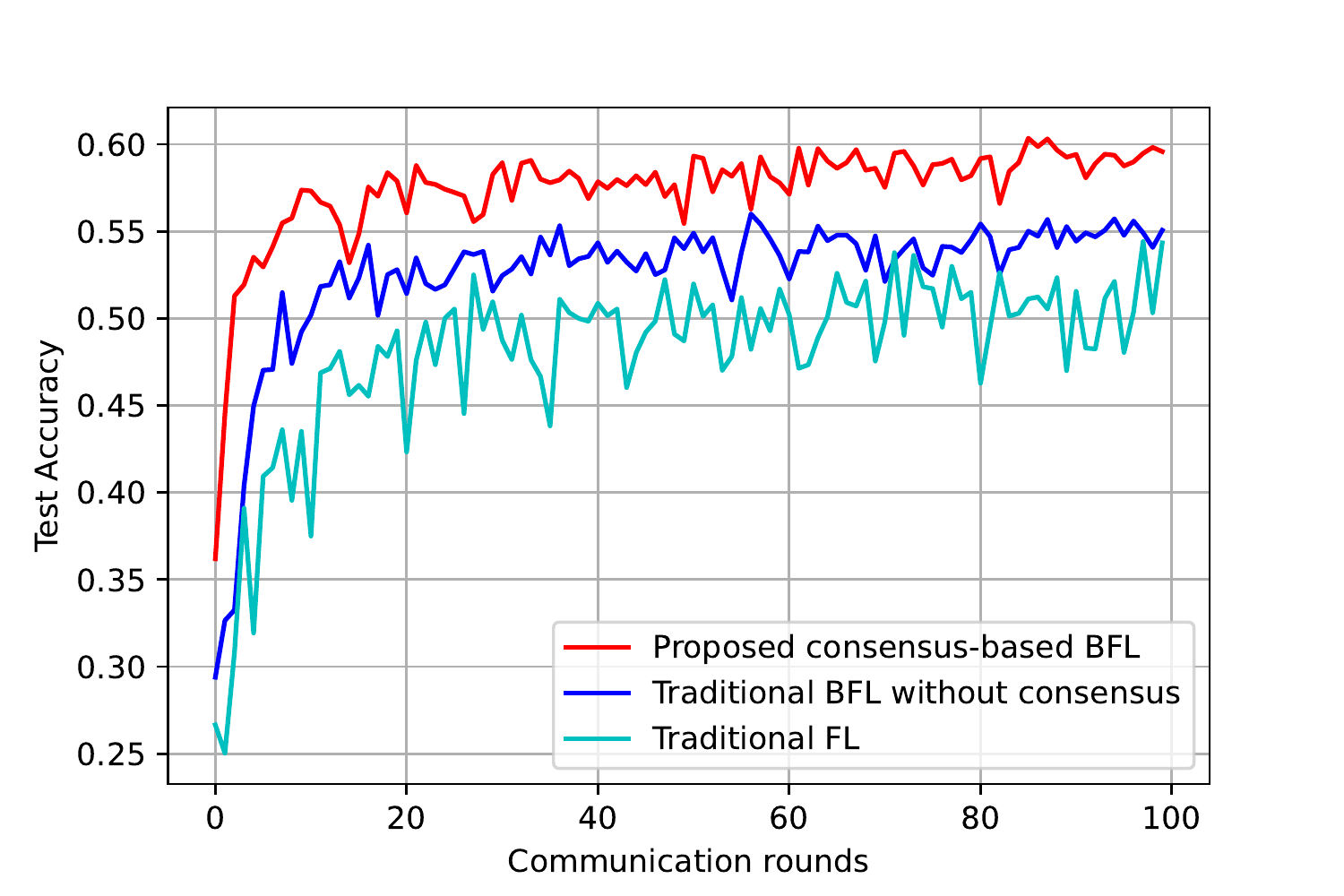} 
		\caption{Accuracy under non-IID  setting.  }
	\end{subfigure}
	~
	\begin{subfigure}[t]{0.24\textwidth}
		\centering
		\includegraphics[width=0.99\linewidth]{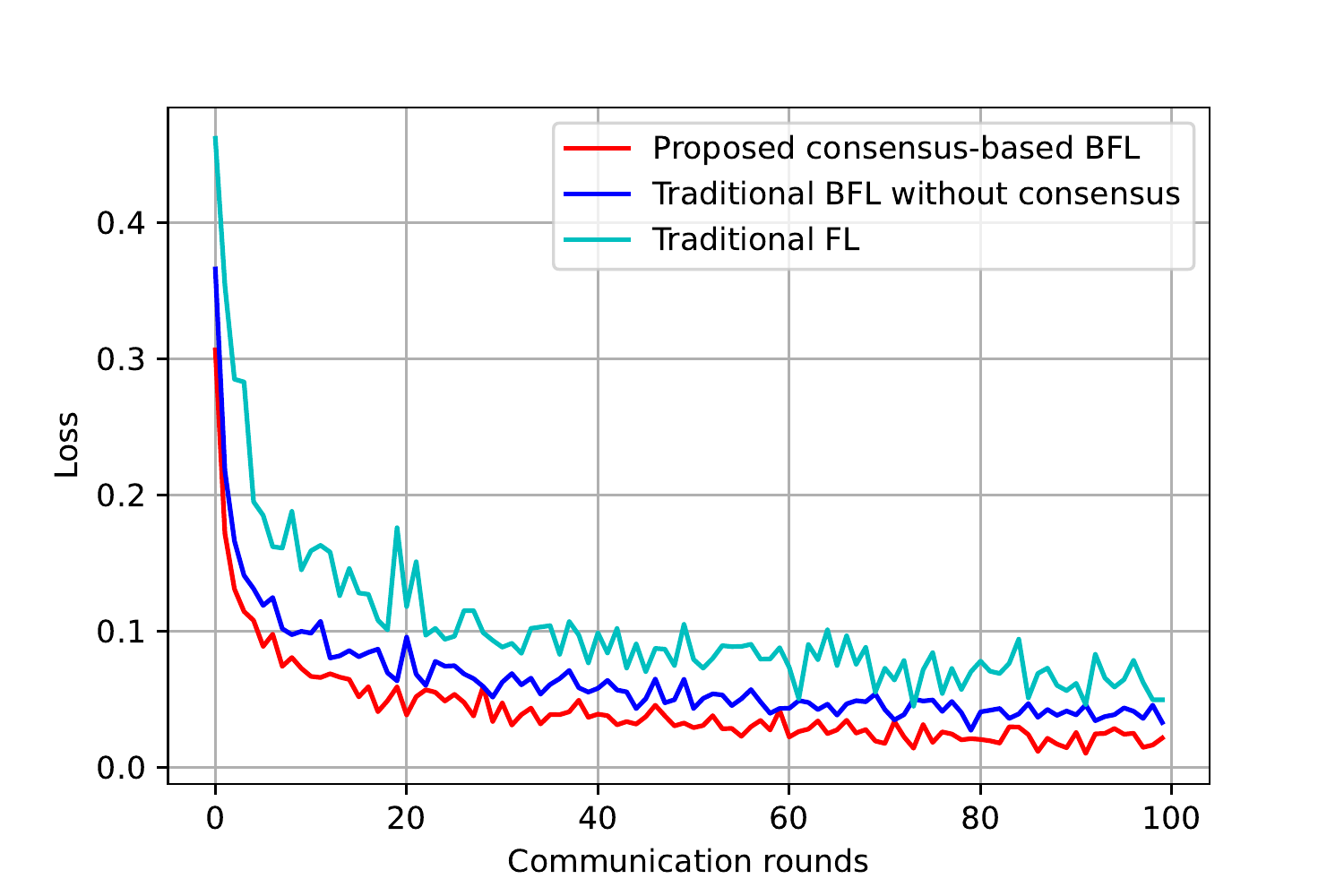} 
		\caption{Loss  under IID setting. }
	\end{subfigure}%
	~
	\begin{subfigure}[t]{0.24\textwidth}
		\centering
		\includegraphics[width=0.99\linewidth]{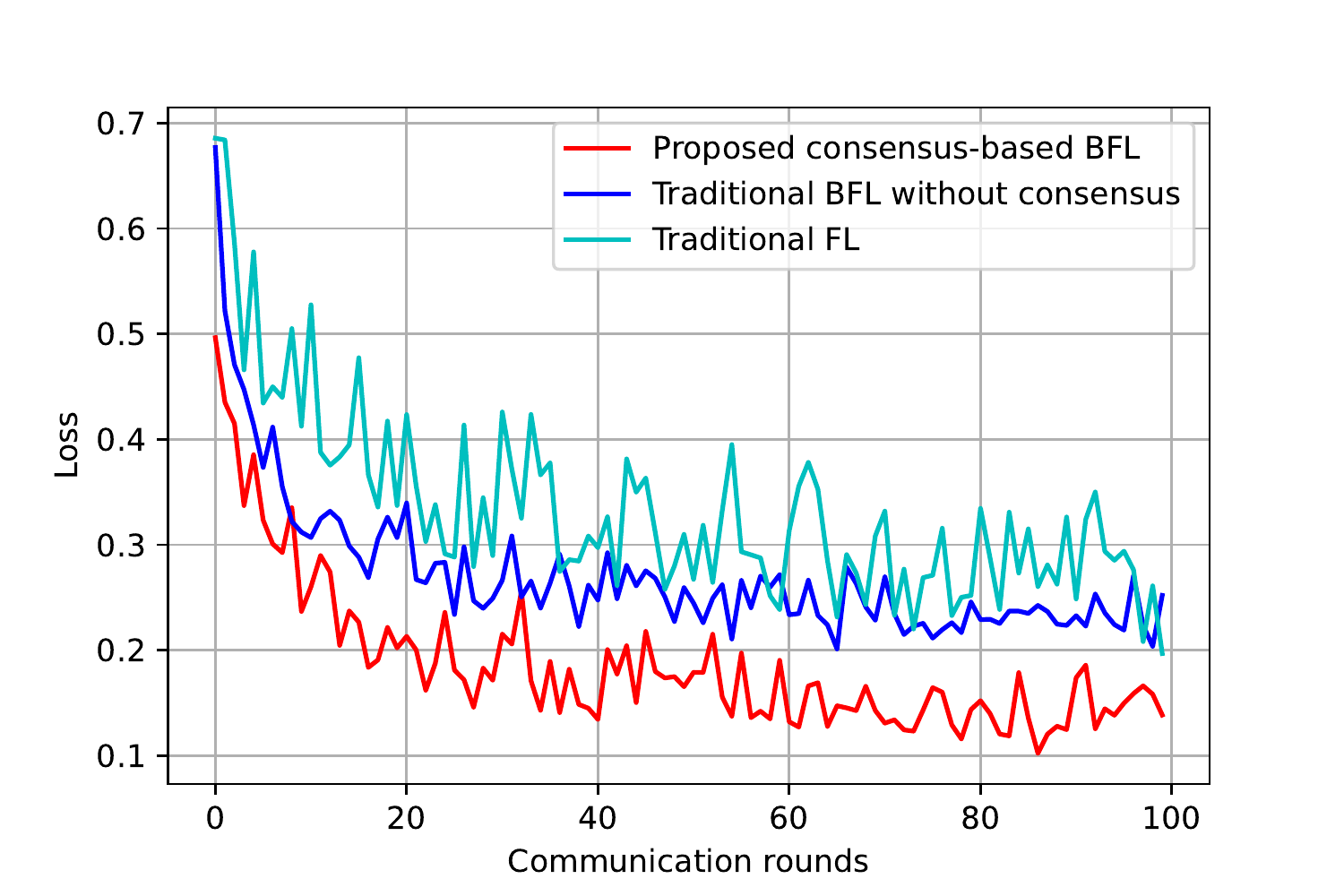} 
		\caption{Loss  under non-IID setting.   }
	\end{subfigure}
	\caption{{Comparison of different FL approaches on SVHN dataset.}}
	\label{FL-compare-SVHN_Result}
	\vspace{-0.1in}
\end{figure}

\begin{figure}[t!]
	\centering
	\begin{subfigure}[t]{0.24\textwidth}
		\centering
		\includegraphics[width=0.99\linewidth]{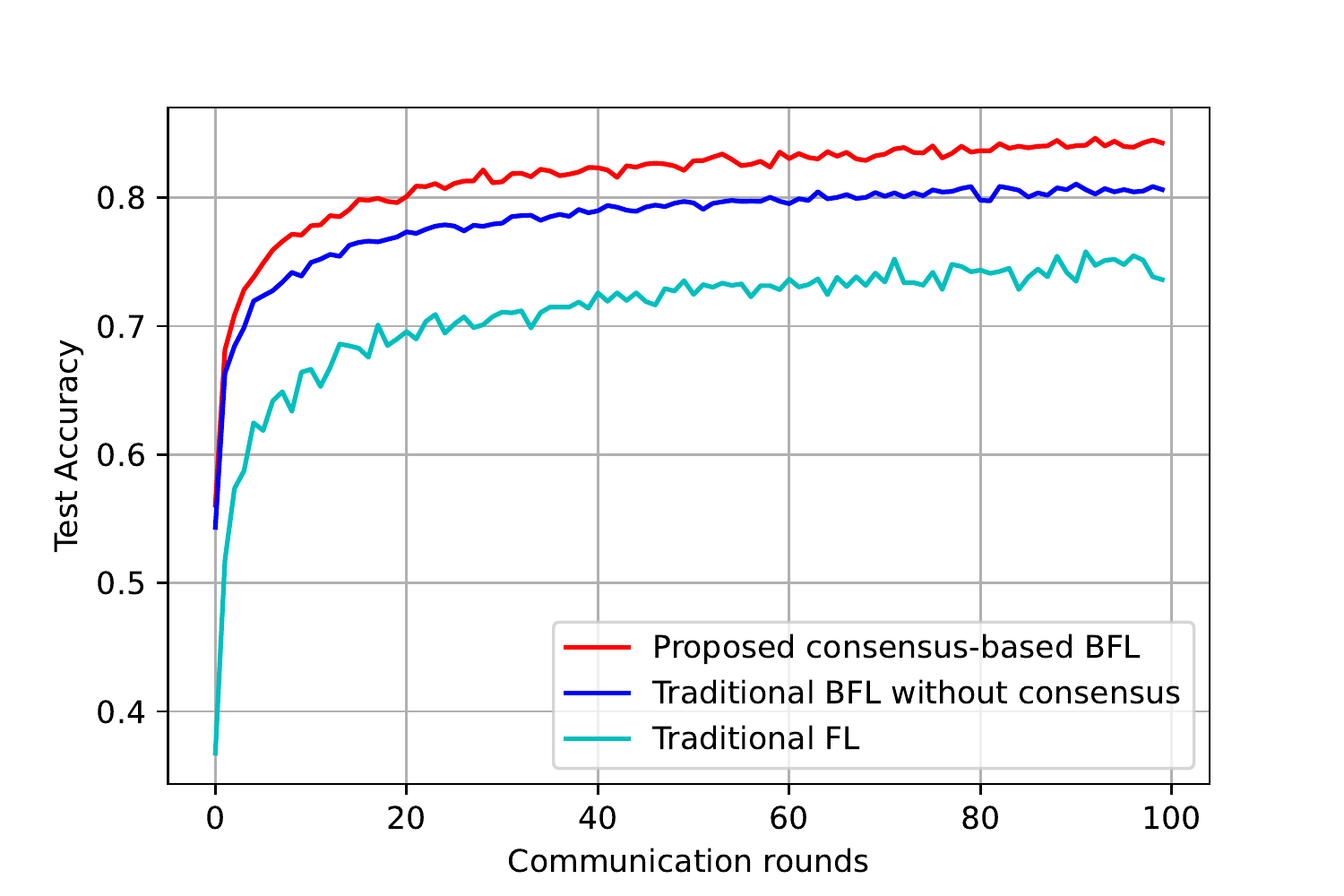} 
		\caption{Accuracy under IID setting. }
	\end{subfigure}%
	~
	\begin{subfigure}[t]{0.24\textwidth}
		\centering
		\includegraphics[width=0.99\linewidth]{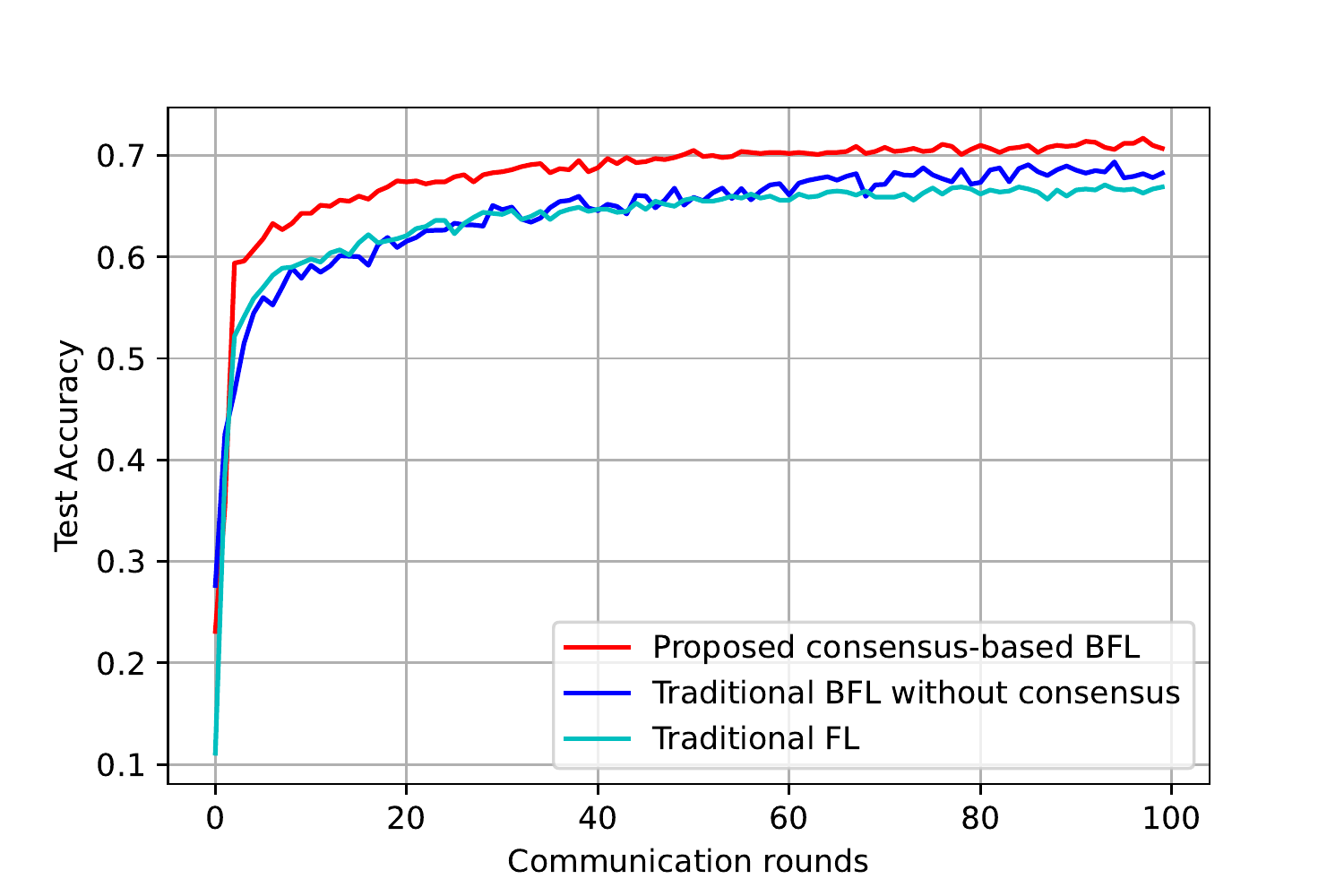} 
		\caption{Accuracy under non-IID setting.  }
	\end{subfigure}
	~
	\begin{subfigure}[t]{0.24\textwidth}
		\centering
		\includegraphics[width=0.99\linewidth]{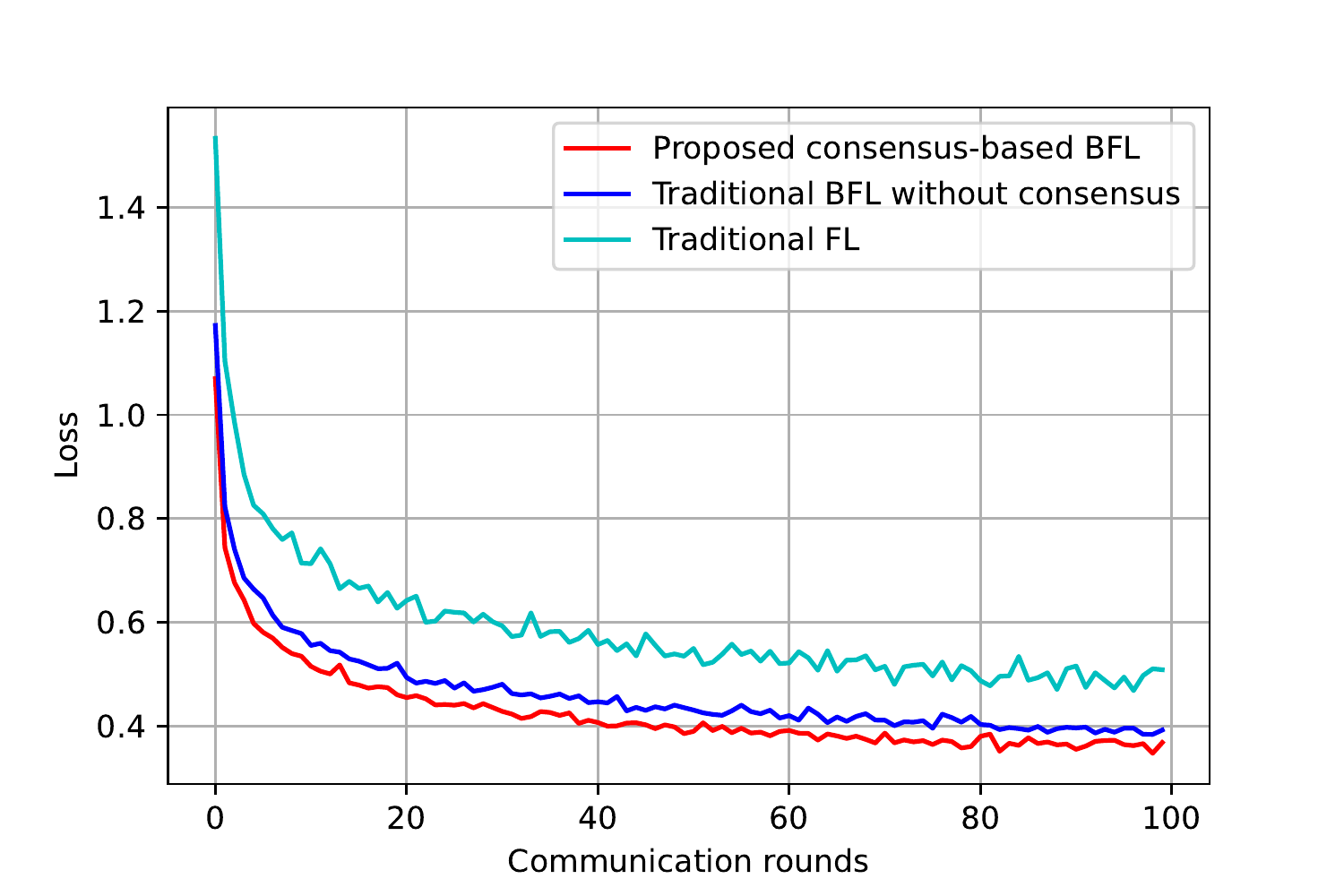} 
		\caption{Loss  under IID setting. }
	\end{subfigure}%
	~
	\begin{subfigure}[t]{0.24\textwidth}
		\centering
		\includegraphics[width=0.99\linewidth]{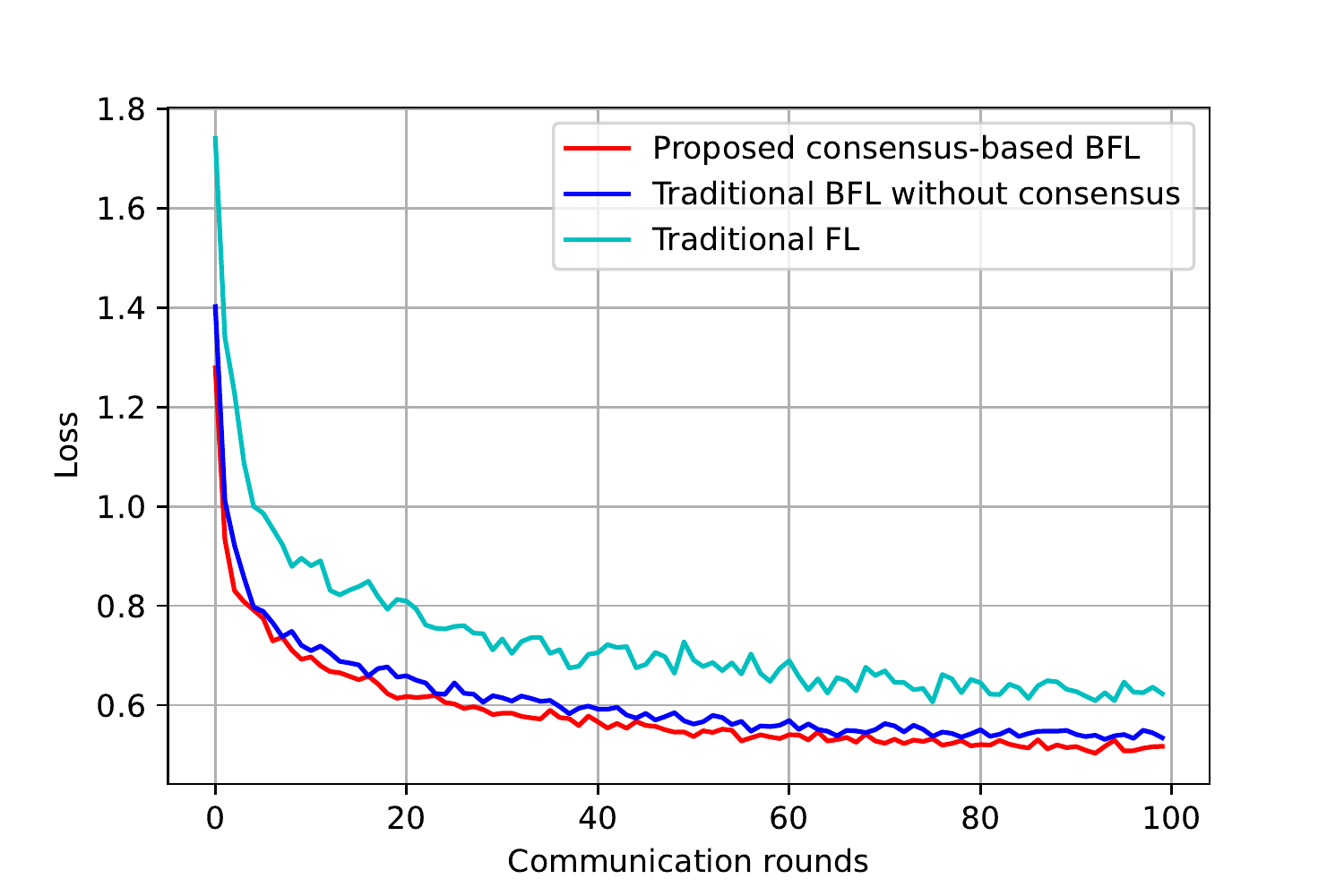} 
		\caption{Loss  under non-IID setting.   }
	\end{subfigure}
	\caption{{Comparison of different FL approaches on Fashion-MNIST dataset.}}
	\label{FL-MNIST_Result}
	\vspace{-0.1in}
\end{figure}
\subsection{{Evaluation of FL Performance}} 
{ We evaluate the classification accuracy and loss performance of our proposed BFL scheme (with 5 consensus rounds) and compare it with two related schemes. The first one is \textcolor{black}{a traditional FL scheme \cite{2}, where ESs work independently and each  only aggregates the models of its associated MDs and then broadcasts the resulting model across devices.  
The second one is a traditional BFL without P2P consensus \cite{5}, in which  each ES runs an averaging algorithm by collaborating with its MDs to build its aggregated model, and then a random ES is selected as a leader that builds a global model based on its local aggregated model without conducting P2P consensus.} Fig.~\ref{FL-compare-SVHN_Result} illustrates the performance when training the SVHN dataset, showing the considerable improvements in terms of higher accuracy and lower loss compared with the counterparts. Although the performance degrades when the dataset becomes non-IID, our consensus-based BFL scheme still outperforms other algorithms. The BFL scheme without consensus achieves a better training performance than the traditional FL scheme since its randomized leader ES selection avoid local model bias across the clients. Moreover, the performance gap between our scheme and the others becomes larger in the non-IID case which demonstrates the benefit of consensus-based model aggregation over existing approaches. The advantages of our scheme are also verified on the Fashion-MNIST dataset in both IID and non-IID settings, as indicated in Fig.~\ref{FL-MNIST_Result}. For example, in the IID setting, our consensus-based BFL scheme improves the accuracy rate by 8\% and 14\% in comparison with the traditional BFL scheme without consensus and the traditional FL scheme, respectively. }

{ Fig.~\ref{FL-compare-consensuses_Result} investigates the impact of P2P consensus rounds (i.e., 5, 10, and 15 rounds) on the learning performance. We can see that the increase of consensus rounds significantly improves the performance. Under the non-IID setting, the role of consensus on the model training becomes more significant with a larger performance gap, for example, between 5 rounds and 15 rounds, which shows the efficiency of our consensus-based BFL design for federated model training. }
 
\begin{figure}[t!]
	\centering
	\begin{subfigure}[t]{0.24\textwidth}
		\centering
		\includegraphics[width=0.99\linewidth]{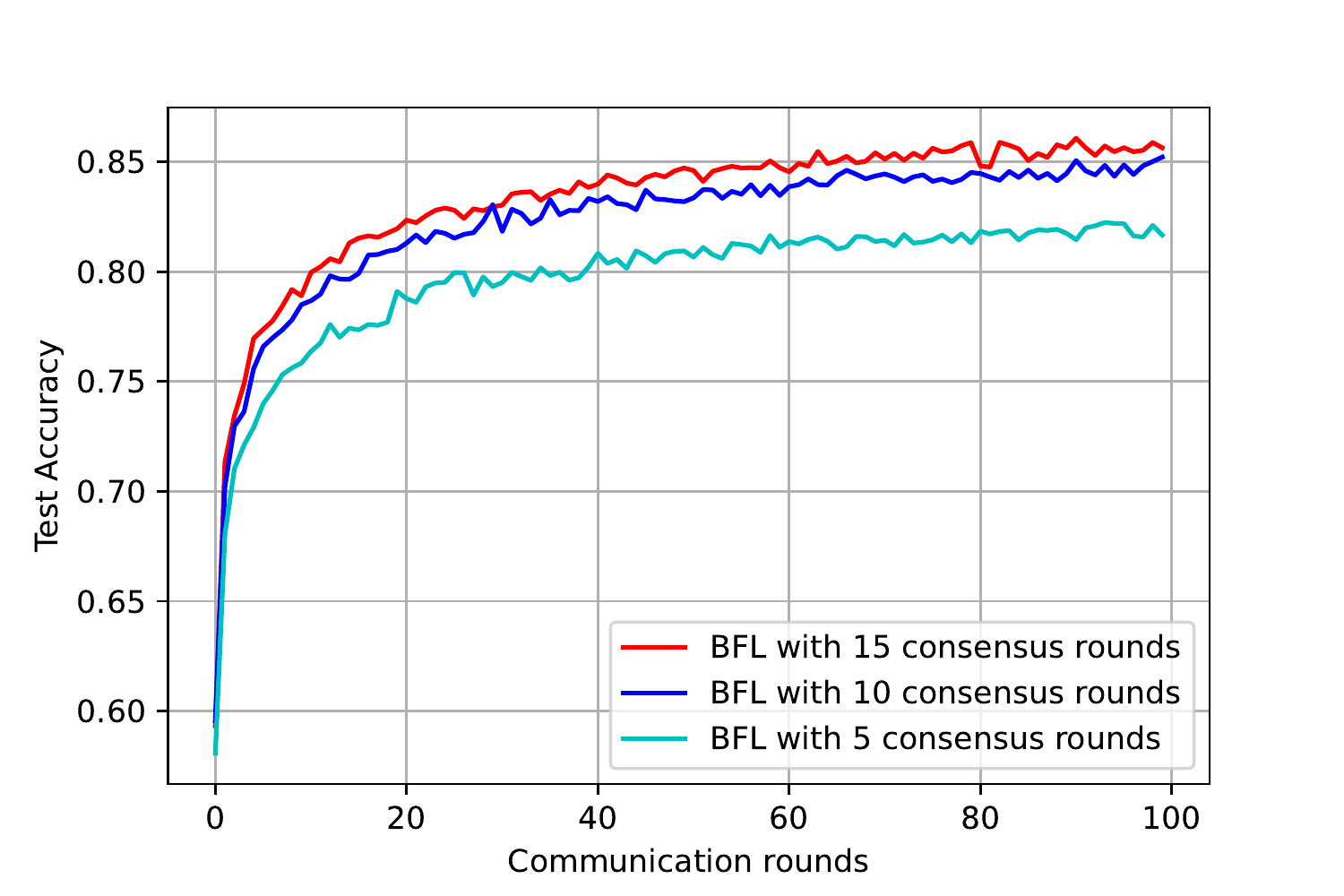} 
		\caption{Accuracy under IID setting. }
	\end{subfigure}%
	~
	\begin{subfigure}[t]{0.24\textwidth}
		\centering
		\includegraphics[width=0.99\linewidth]{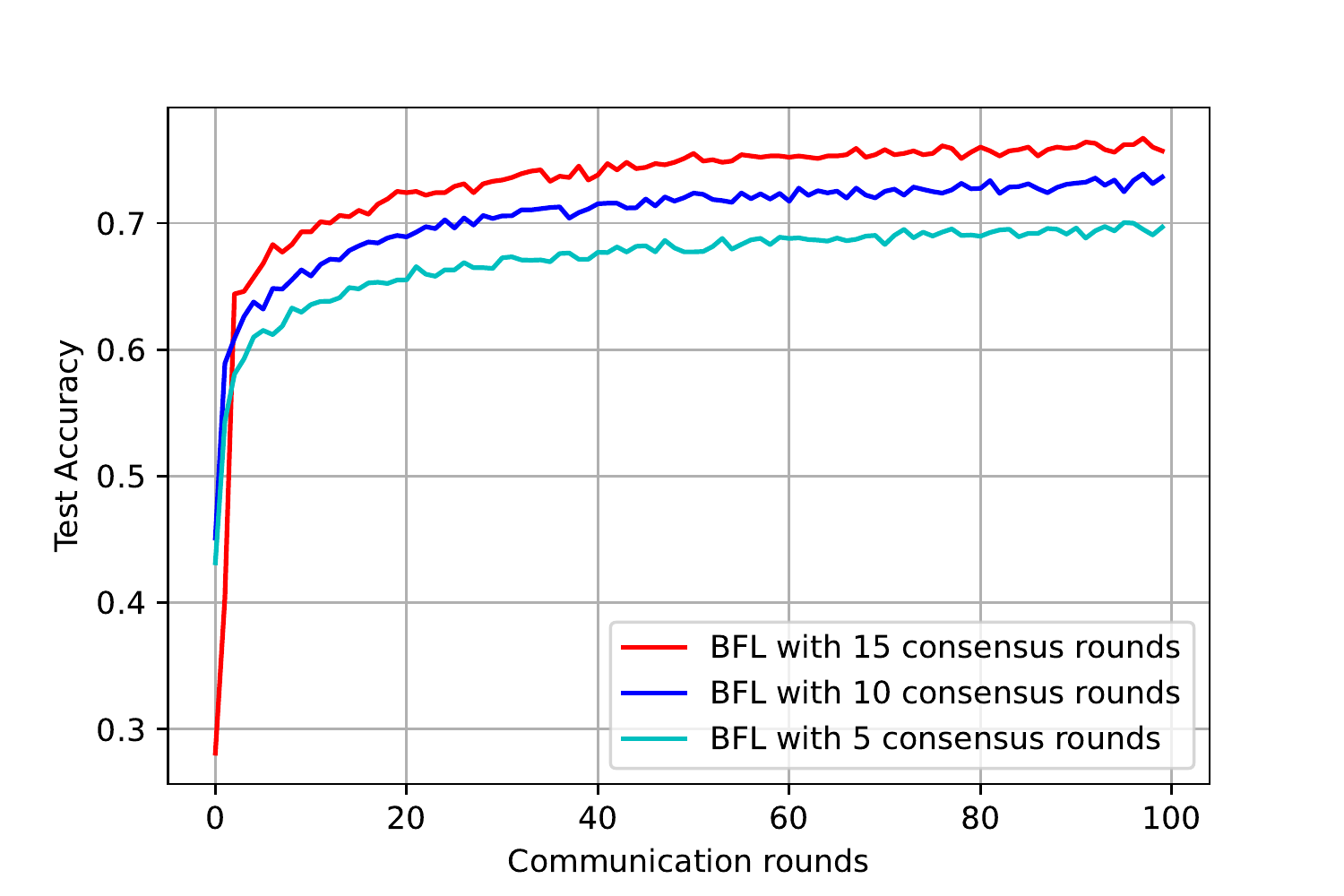} 
		\caption{Accuracy under non-IID setting.  }
	\end{subfigure}
	~
	\begin{subfigure}[t]{0.24\textwidth}
		\centering
		\includegraphics[width=0.99\linewidth]{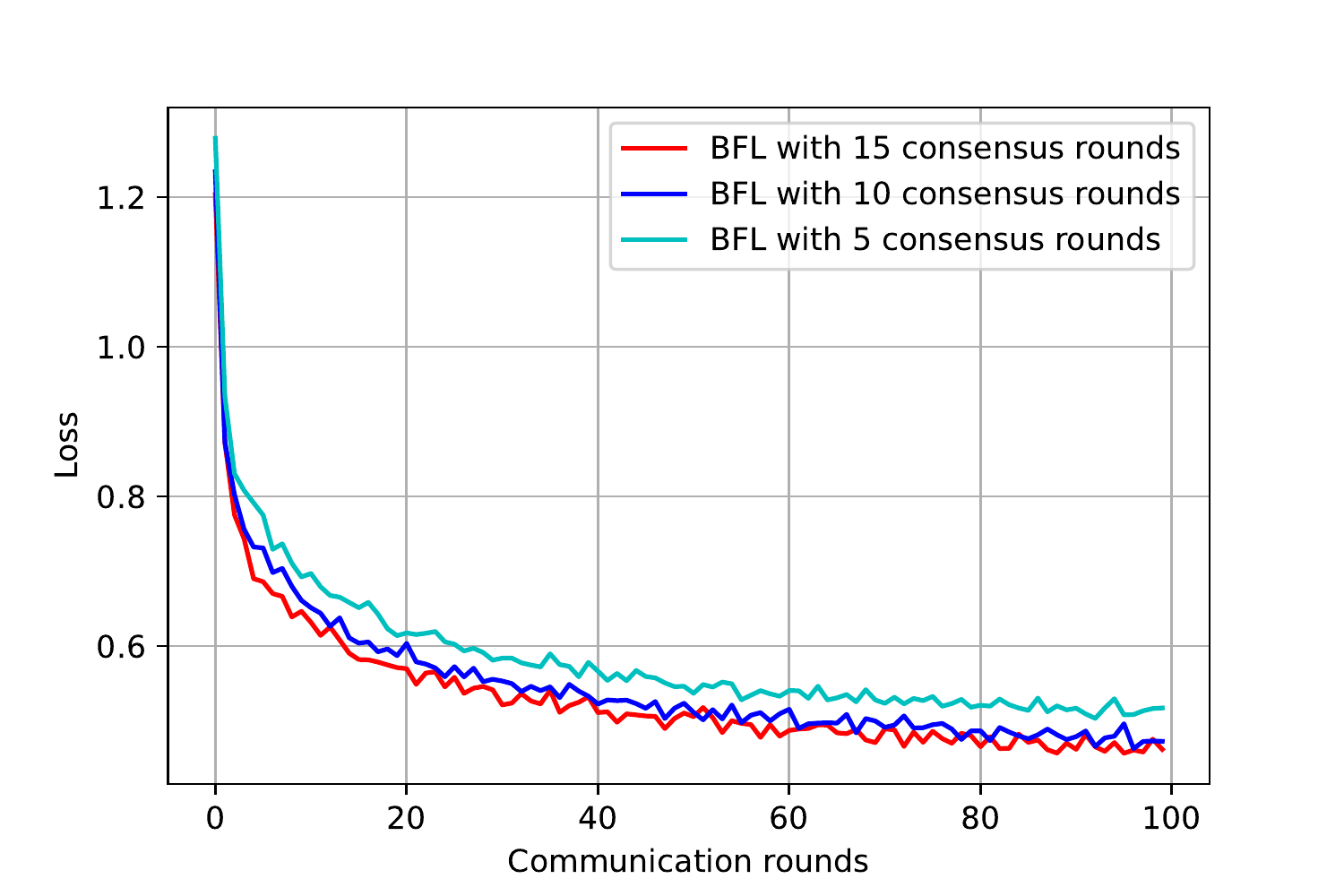} 
		\caption{Loss  under IID setting. }
	\end{subfigure}%
	~
	\begin{subfigure}[t]{0.24\textwidth}
		\centering
		\includegraphics[width=0.99\linewidth]{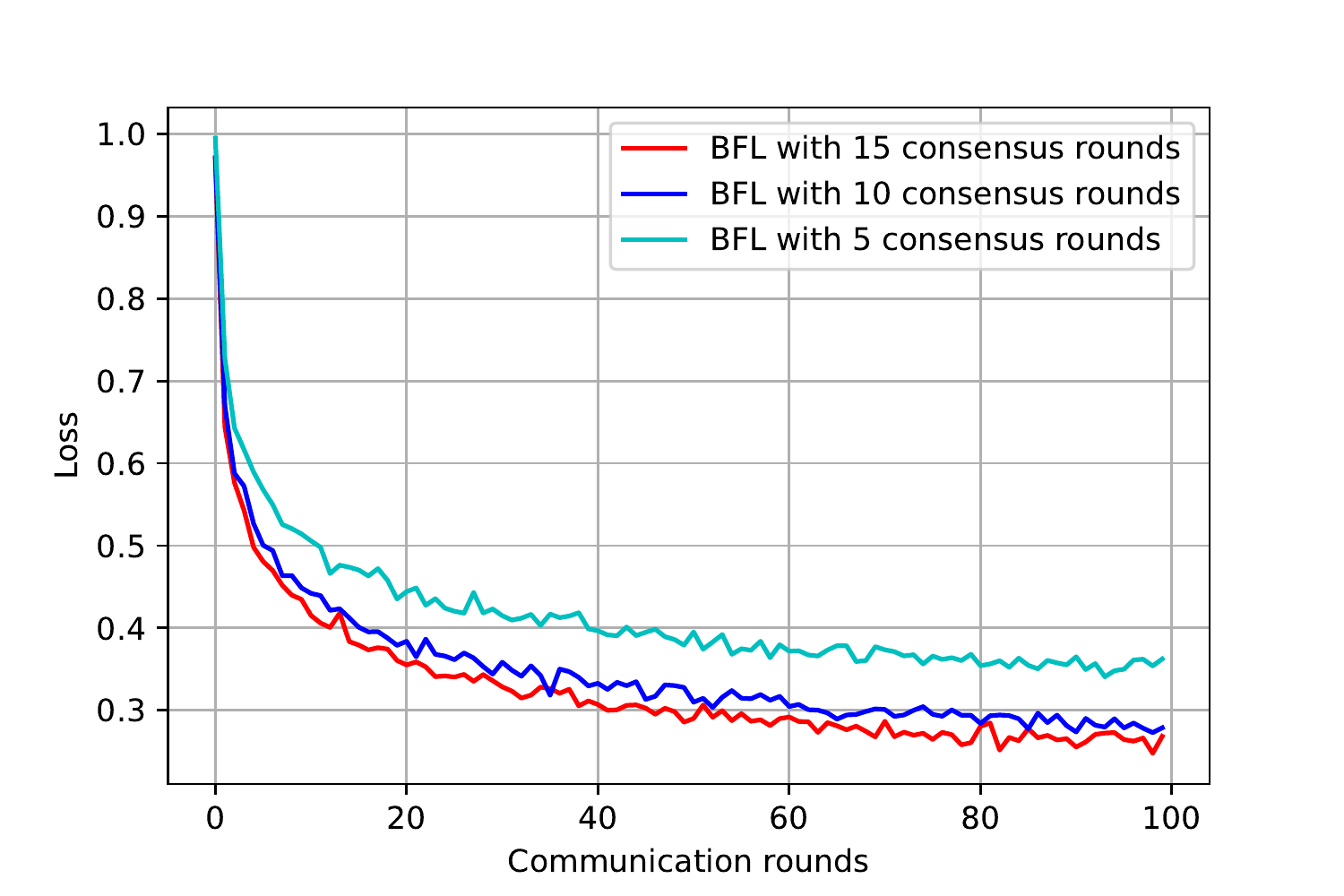} 
		\caption{Loss  under non-IID setting.   }
	\end{subfigure}
	\caption{{Comparison of BFL performance with different consensus rounds on Fashion-MNIST dataset.}}
	\label{FL-compare-consensuses_Result}
	\vspace{-0.1in}
\end{figure}

\subsection{Evaluation of DRL Training Performance}
\label{subsection:DRLtraining}
\begin{figure}[t!]
	\centering
	\begin{subfigure}[t]{0.25\textwidth}
		\centering
		\includegraphics[width=0.99\linewidth]{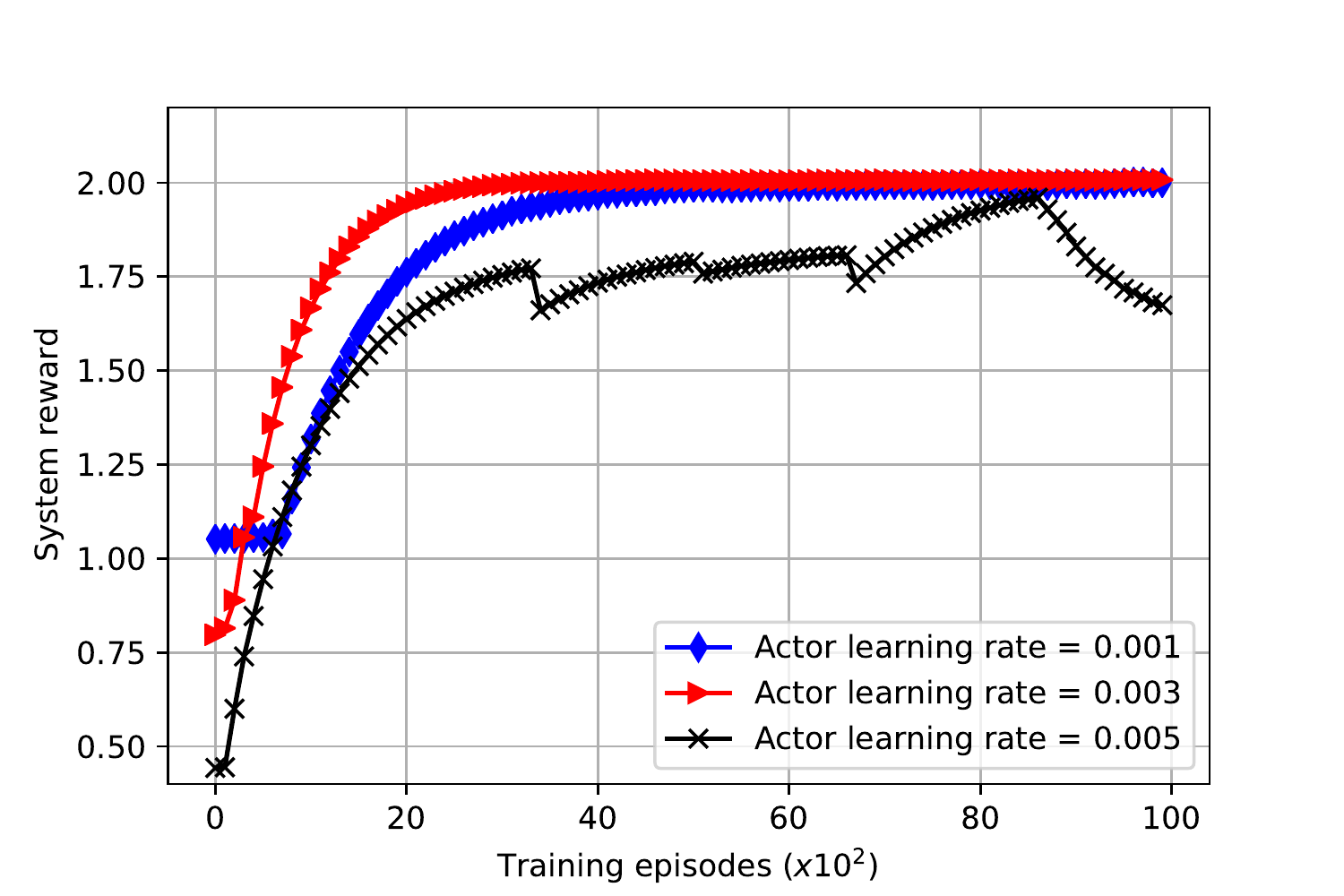} 
		\caption{System rewards with different actor learning rates. }
	\end{subfigure}%
	~
	\begin{subfigure}[t]{0.25\textwidth}
		\centering
		\includegraphics[width=0.99\linewidth]{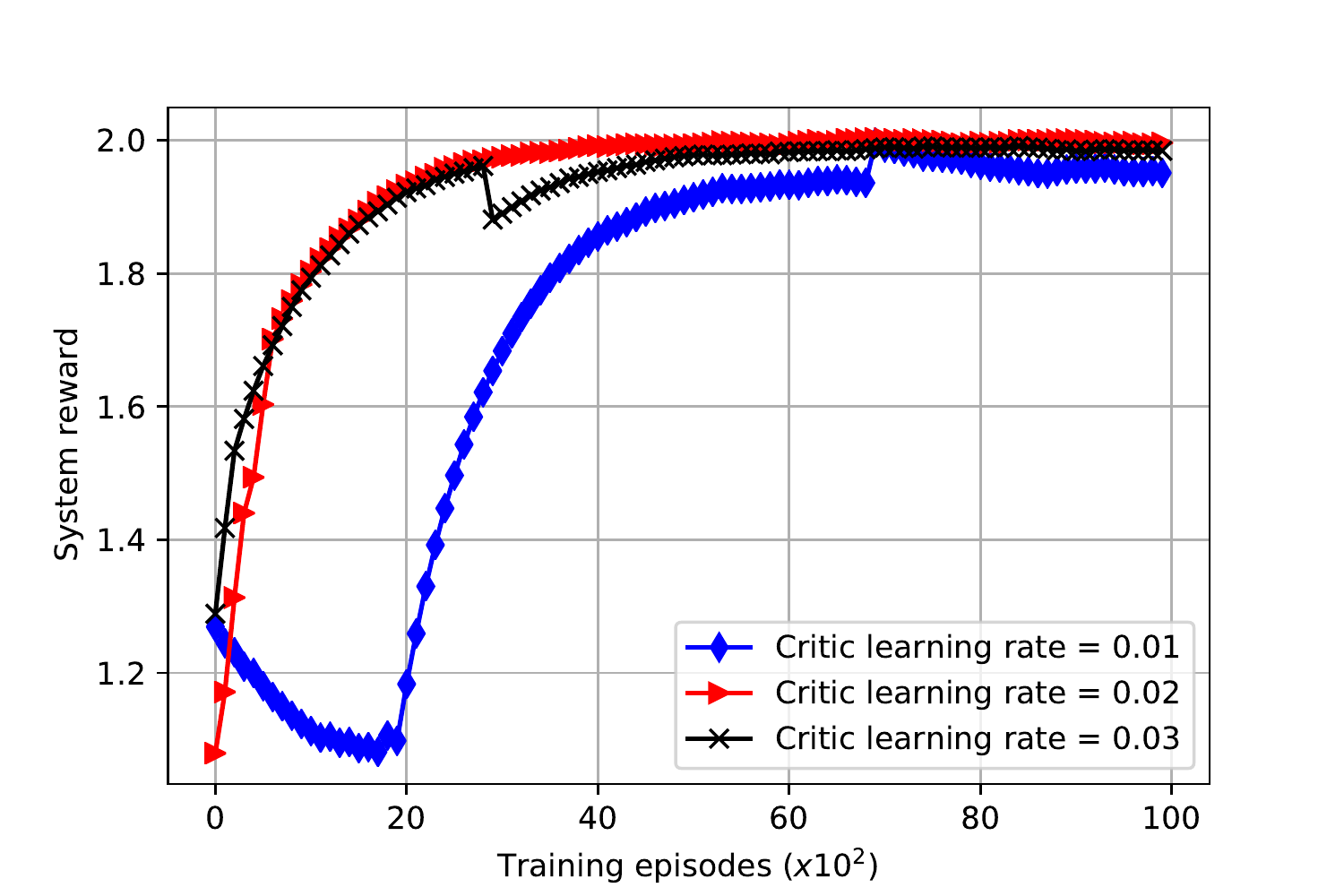} 
		\caption{System rewards with different critic learning rates.   }
	\end{subfigure}
	\caption{Evaluation of the training performance.}
	\label{Convergence_Result}
	\vspace{-5mm}
\end{figure}

We first investigate the training system reward (i.e., system utility as defined in~\eqref{Equa:Optimization1}) performance  by changing the learning rates at both actor and critic, which is important to determine a reliable parameter set for our later simulations, as shown in Fig.~\ref{Convergence_Result}. If the learning rate is too small, the policy training probably requires long time to achieve an optimal solution. However, if we set a large learning rate, \textcolor{black}{the training may become unstable and possibly diverge}. In A2C algorithms, since the actor updates slower than the critic with a timestep, we set up the actor with a learning rate smaller than the critic's learning rate.   We first evaluate the impacts of the actor learning rate, i.e., the backtracking step-size parameter $\upgamma_t^a$ which controls the policy update in the trust region in each iteration of TRPO under the KL divergence constraint. 
Fig.~\ref{Convergence_Result}(a) reveals that the learning rate of 0.003 exhibits the highest system reward with the fastest convergence, compared to the case of $\upgamma_t^a = 0.001$ which shows the slowest convergence rate. However, when the learning rate is relatively high ($\upgamma_t^a = 0.005$), the learning process becomes unstable and diverges. Similar to the actor part, we also set up a learning procedure by considering various critic learning rates. As indicated in Fig.~\ref{Convergence_Result}(b), the learning rate $\upgamma_t^c = 0.02$ has the most stable training performance with a quick convergence, compared to other learning settings. Thus, we will use learning rates $\upgamma_t^a = 0.003$ and $\upgamma_t^c = 0.02$ for the following simulations.

	

\begin{figure}[t!]
	\centering
	\begin{subfigure}[t]{0.25\textwidth}
		\centering
		\includegraphics[width=0.99\linewidth]{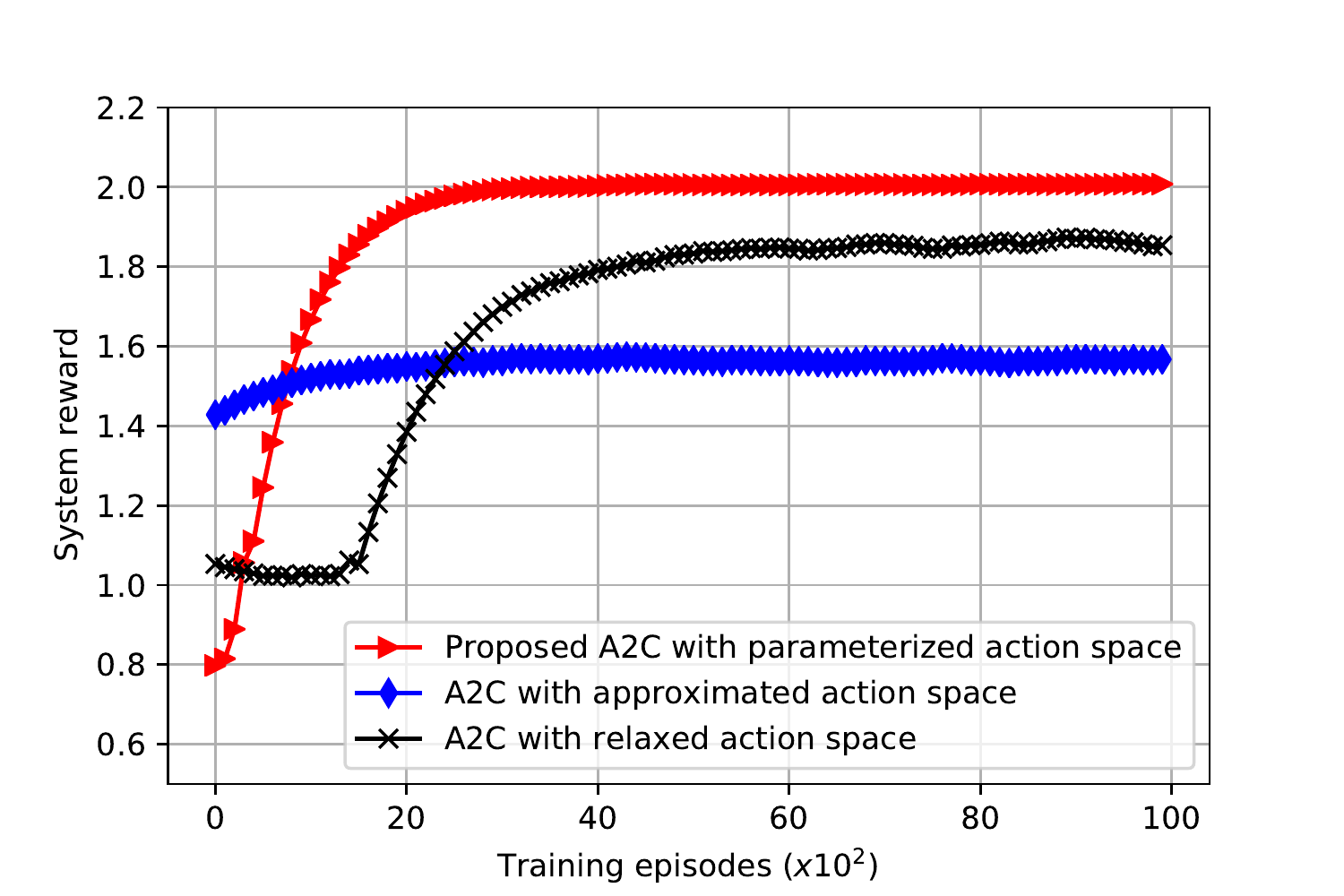}
		\caption{System reward with different action spaces.}
	\end{subfigure}%
	~
	\begin{subfigure}[t]{0.25\textwidth}
		\centering
		\includegraphics[width=0.99\linewidth]{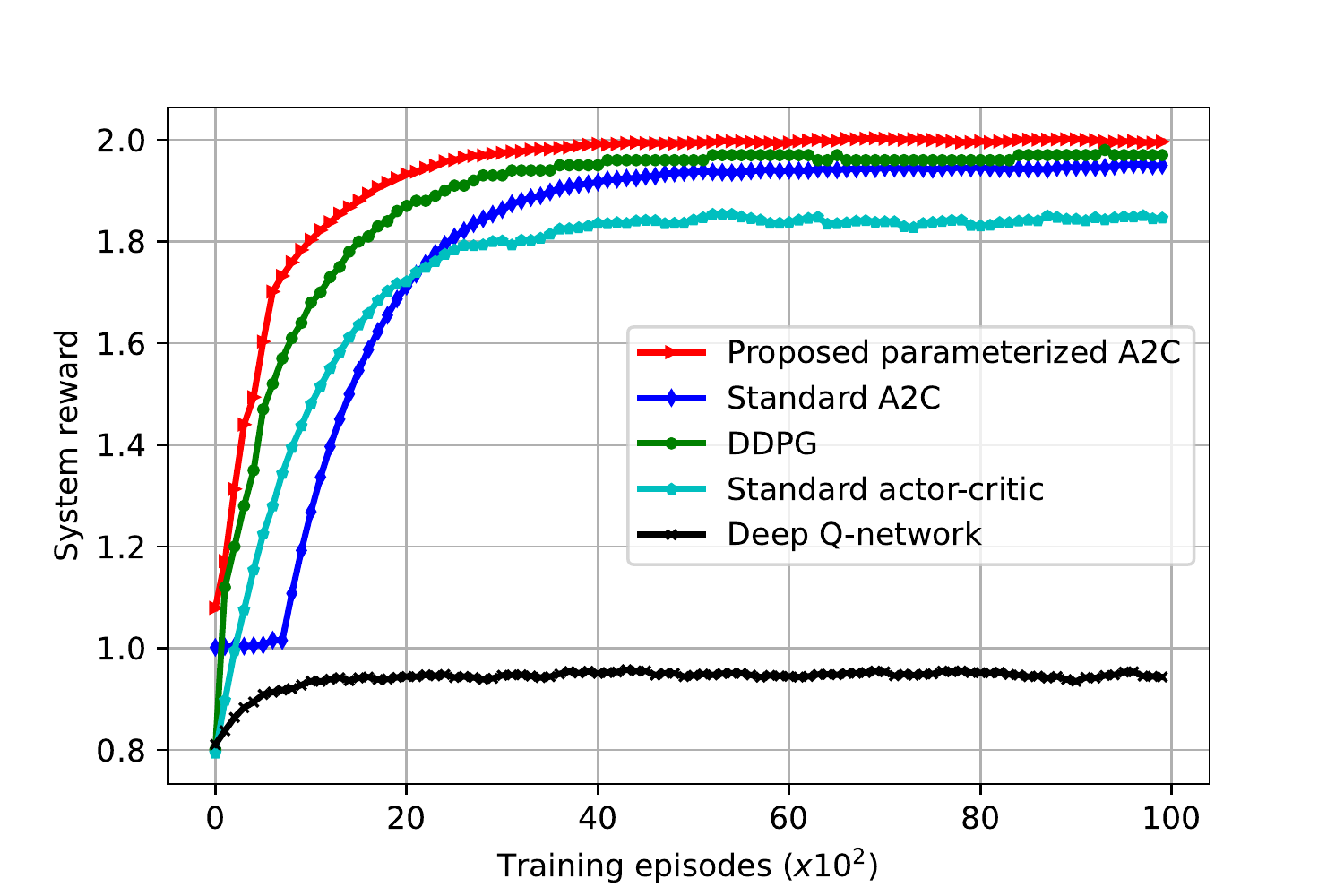}
		\caption{System reward with different learning schemes.  } 
	\end{subfigure}
	\caption{\textcolor{black}{Comparison of system reward via learning curves. }}
	\label{Fig:training_relaxed}
	\vspace{-0.1in}
\end{figure}

\begin{figure}[t!]
	\centering
	\begin{subfigure}[t]{0.25\textwidth}
		\centering
		\includegraphics[width=0.99\linewidth]{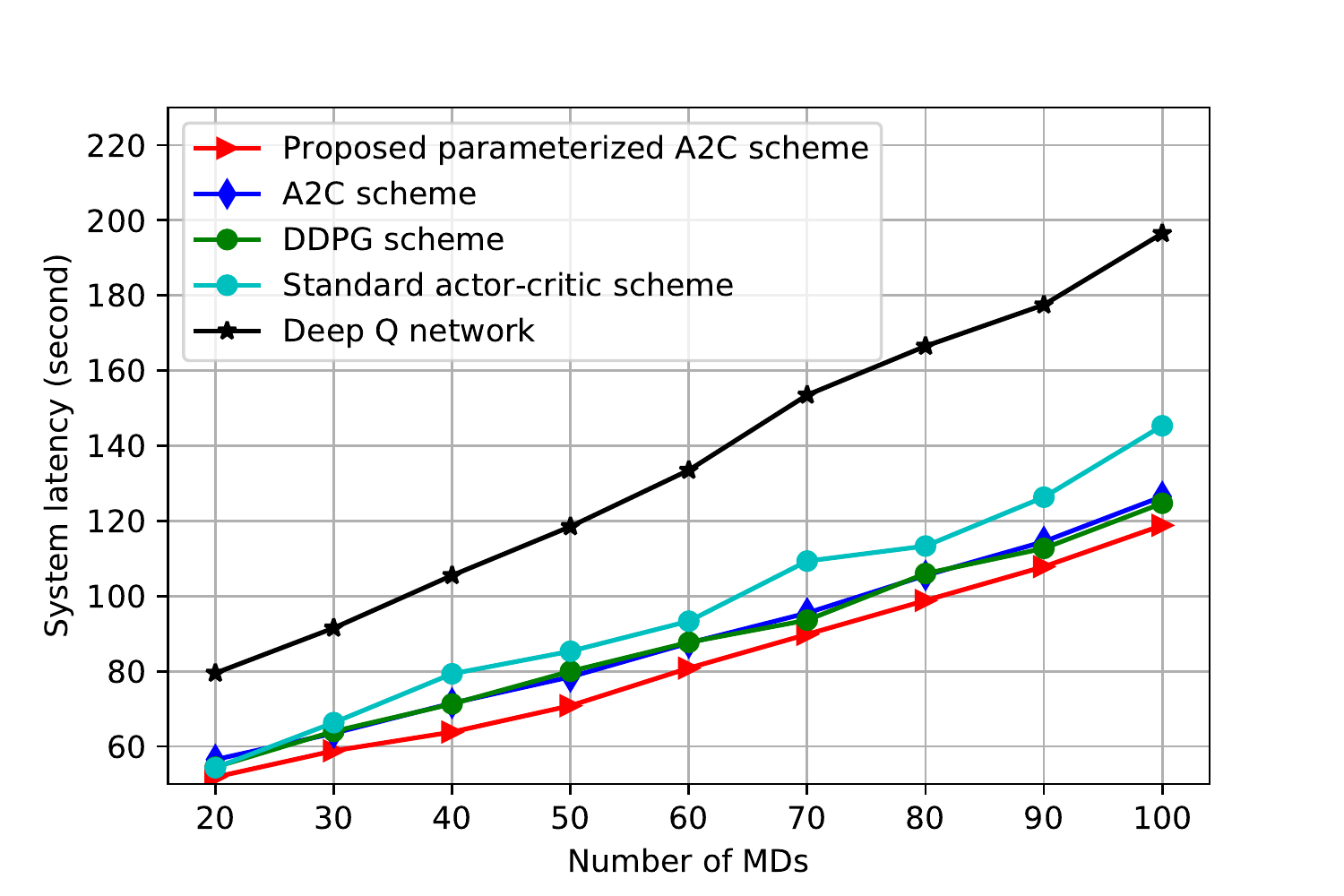} 
		\caption{System latency with different numbers of MDs. }
	\end{subfigure}%
	~
	\begin{subfigure}[t]{0.25\textwidth}
		\centering
		\includegraphics[width=0.99\linewidth]{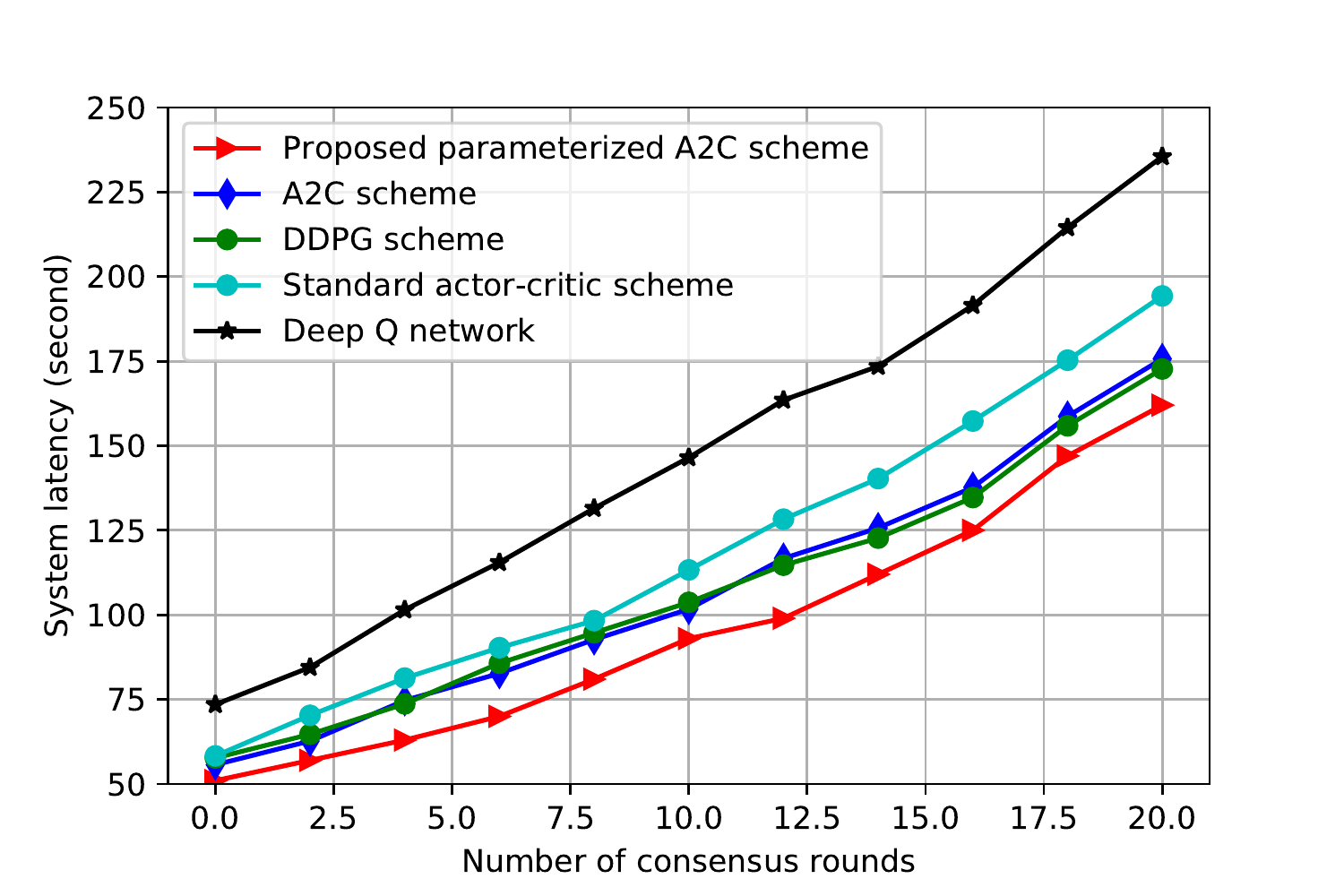} 
		\caption{System latency versus consensus rounds. }
	\end{subfigure}
	\caption{\textcolor{black}{Comparison of system latency between different schemes. }}
	\label{Fig:latency_consensus}
	\vspace{-0.1in}
\end{figure}
\textcolor{black}{Next, we compare the reward performance  obtained by our proposed parameterized A2C algorithm against two baseline schemes:} A2C with relaxed action space \cite{nguyen2021cooperative} and A2C with approximated action space \cite{30}. \textit{For the first baseline scheme}, we first relax the discrete offloading vector into a continuous set by defining a relaxed space as $\mathcal{A}_a = \{f(x_1,), f(x_2),..., f(x_N)\}$, where $f(.)$ is a probability softmax function, and then re-normalize them to approximate discrete offloading vectors for execution.  Compared with our proposed parameterized scheme, this method significantly increases the sampling complexity on the joint action space. \textit{For the second baseline scheme}, we discretize each of continuous allocation vectors into a discrete subset. For example, the transmit power variable of  MD $n$ is discretized into $Z$ levels as $\boldsymbol{P}_n = \left[0, p_{min}, p_{min}. \left(\frac{p_{max}}{p_{min}}\right) ^{\frac{1}{|Z|-2} },...,p_{max}\right]$,  where $p_n=0$  implies  the local execution mode. This discretization process creates a large number of quantization levels for convenient action sampling but also results in quantization noise.
\begin{figure*}[t!]
	\centering
	\begin{subfigure}[t]{0.32\textwidth}
		\centering
		\includegraphics[width=0.99\linewidth]{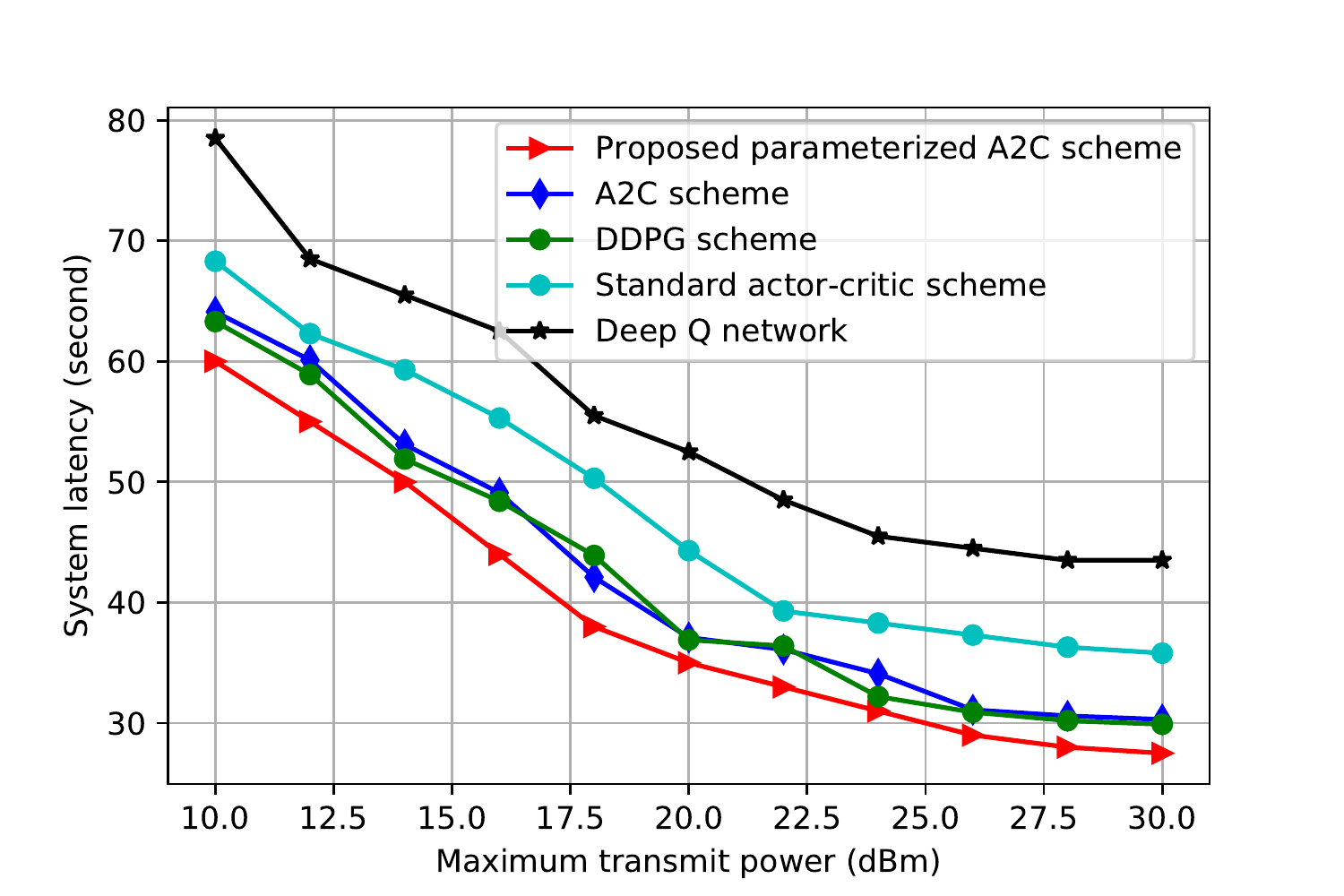} 
		\caption{System latency versus transmit power allocation.}
	\end{subfigure}%
	~
	\begin{subfigure}[t]{0.32\textwidth}
		\centering
		\includegraphics[width=0.99\linewidth]{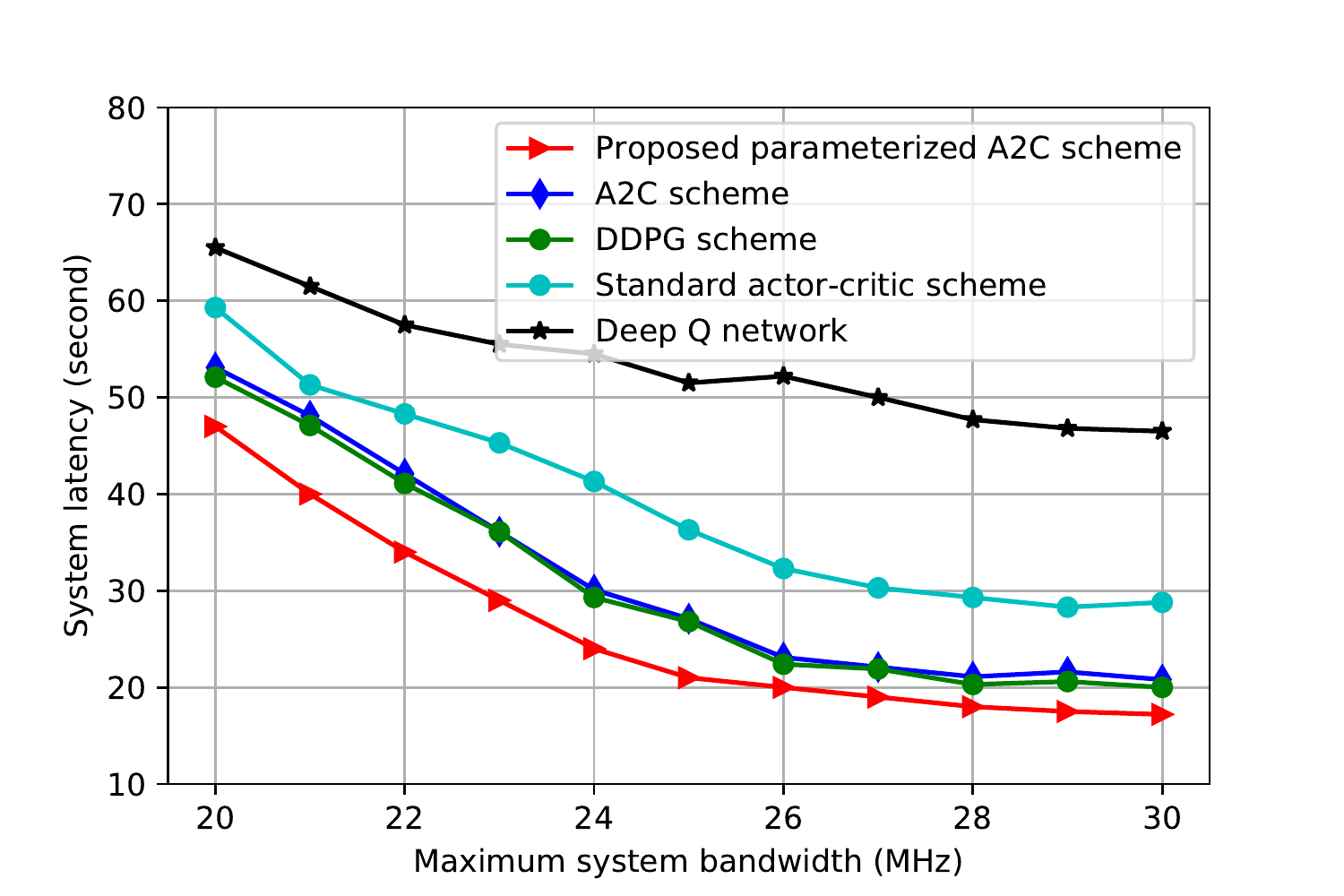} 
		\caption{System latency versus bandwidth allocation.   }
	\end{subfigure}
	~
	\begin{subfigure}[t]{0.32\textwidth}
		\centering
		\includegraphics[width=0.99\linewidth]{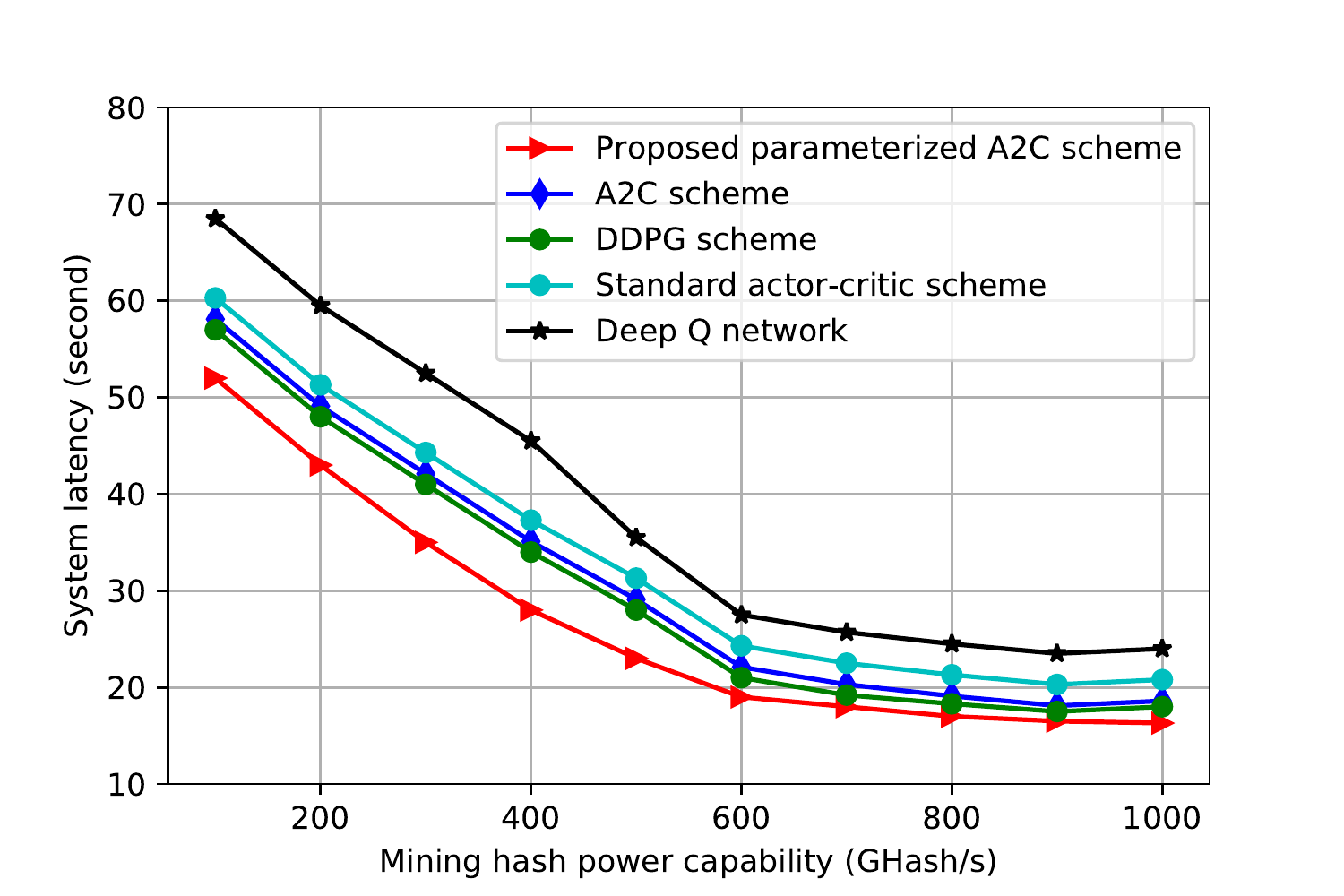} 
		\caption{System latency versus mining hash allocation.   }
	\end{subfigure}
	\caption{\textcolor{black}{Comparison of system latency with different resource allocation settings.}}
	\label{Fig:latency_allocation}
	\vspace{-6mm}
\end{figure*}

\textcolor{black}{From Fig.~\ref{Fig:training_relaxed}(a), our proposed scheme can achieve the best reward performance compared with baseline approaches,}  thanks to a flexible parameterized action sampling solution where both discrete offloading decisions and allocation variables are directly trained without relaxation or approximation. Meanwhile, the A2C scheme with relaxed action space suffers a reward decrease with high training variance since the offloading action selection must be converted into a continuous space of allocation vectors, leading to an extremely high complexity in the action sampling and thus making the training inefficient. Moreover, its complex action sampling requires longer time to reach convergence, i.e., after 4000 episodes compared to 2500 episodes in our proposed scheme. The lowest reward gain is observed at the approximated scheme, where the approximation of action space introduces the quantization error which degrades the policy training. 

\textcolor{black}{We then compare our scheme with state-of-the-art DRL approaches: (i) A2C \cite{14}, as in our method but only using the vanilla gradient update at the actor without policy optimization improvement; (ii) deep deterministic policy gradient (DDPG), using actor-critic with deterministic policy training over the continuous action space \cite{nguyen2021cooperative}; (iii) standard actor-critic \cite{37}, i.e., without using advantage function in the policy estimation; and (iv) deep Q-network \cite{23}, which uses action sampling approximation.} From Fig.~\ref{Fig:training_relaxed}(b), we find that our proposed parameterized A2C method is better than baselines in terms of system reward and convergence rate and stability. This is due to the efficient action sampling and improved policy optimization based on trust region enforcement which helps avoiding possible gradient divergence. Thus, a faster and more reliable policy search is achieved toward an optimal solution. \textcolor{black}{The DDPG scheme has a lower reward convergence speed due to the relaxation of the discrete offloading action space. This increases action sampling complexity and thus makes the policy training less efficient compared with our parameterized A2C. On the other hand, we see that DDPG outperforms standard A2C, which demonstrates the importance of the policy optimization improvement step employed in our scheme over a hybrid continuous and discrete action space.} Although the standard A2C scheme achieves a relatively stable reward performance, its reward gain is lower than that of our approach due to the natural gradient descent with greedy policy update.  


\vspace{-4mm}
\subsection{Evaluation of System Latency Performance}

In this subsection, we evaluate the system latency performance  under networking scenarios. Based on the system reward (utility) computed via the above training process, it is straightforward to calculate the system latency via their mathematical relation as mentioned in \eqref{MiningUtility} and~\eqref{Equa:Optimization1}. 

\textcolor{black}{We investigate the latency performance when varying the numbers of MDs from 20 to 100 in Fig.~\ref{Fig:latency_consensus}(a). The system latency obtained by our proposed parameterized A2C scheme is the lowest across the considered methods.  As expected, the latency of each method increases with the number of MDs, due to a higher offloading and mining latency caused by a higher competition on bandwidth and hash allocation among MDs, our proposed scheme still achieves the best latency performance when the number of MDs increases. For instance, with 100 MDs, the system latency of our scheme is 11\%, 13\%, 25\% and 38\% lower than that of the DDPG, A2C, actor-critic, and deep Q-network schemes, respectively. }


We then investigate the impact of P2P rounds on the system latency, as indicated in Fig.~\ref{Fig:latency_consensus}(b). We change the number of P2P rounds from $0$ to $20$, where 0 implies the traditional BFL scheme without consensus. Although  the use of consensus procedure at the edge layer enhances the model training performance, it potentially increases the system latency \textcolor{black}{due to the additional delay of P2P model aggregation among ESs. However, with our advanced DRL design, the proposed parameterized A2C scheme achieves the minimal latency compared with other baselines. The increase of P2P rounds requires more time} for model aggregation, leading to a higher system latency, but our approach still has the best performance.

We also investigate the latency performance under different resource allocation scenarios in Fig.~\ref{Fig:latency_allocation} in a BFL environment with 5 ESs and 30 MDs. We first vary the maximum transmit power $P_n$ at each MD $n$ between 10 and 30 dBm and then measure the latency of offloading and mining. As shown in Fig.~\ref{Fig:latency_allocation}(a), the latency of all schemes decreases with respect to the increasing amount of allocated transmit power. This is because a larger transmit power improves the data transmission rate that helps reduce the data offloading latency. Notably, compared with other schemes, our method achieves the most significant and stable latency decrease due to our efficient and robust parameterized policy training to obtain the better offloading trajectory. Another interesting observation is that when the maximum transmit power exceeds 20 dBm, the system latency reduces at a slower rate. This is due to the less impact of such a large power allocation on the offloading latency savings given a certain number of MDs and data sizes.

\textcolor{black}{Moreover, we investigate the latency trends of different algorithms when the maximum system bandwidth $W$ varies from 20 to 30 MHz in Fig.~\ref{Fig:latency_allocation}(b). Similar to the power allocation scenario, more bandwidth would help mitigate the offloading latency. In our simulation setting, the impact of bandwidth on the system latency is significant before the threshold of 25 MHz, where our scheme shows the best latency savings. Compared with DDPG, A2C and actor-critic schemes, our parameterized A2C scheme can reduce the latency by 7\%, 8\% and 17\%, respectively, and achieves a 60\% lower latency compared with the deep Q-network baseline. }

\textcolor{black}{Finally, we compare the latency performance with respect to the changes of mining hash power capability $\Psi_{\mathsf{max}}$. A higher hash allocation would help reduce the block generation latency for a lower mining and system latency.} By increasing the hash budget, each MD has a higher chance to obtain  sufficient hash power to run the block mining. As can be seen in  Fig.~\ref{Fig:latency_allocation}(c), our proposed scheme outperforms other baselines in terms of system latency in each hash allocation setting. \textcolor{black}{The simulation results thus demonstrate the efficiency of our proposed A2C algorithm design in the system latency minimization.}

\textcolor{black}{\subsection{Attack Evaluation}}

\textcolor{black}{\subsubsection{Attack Model and Security Analysis with Blockchain}
A major concern in conventional FL with a single centralized aggregation server is that the server may become compromised via model poisoning attacks launched by adversaries. In this section, we investigate how blockchain can help mitigate  impact of model poisoning attacks on the ML model training performance, which as discussed in Section~\ref{SectionI-motivate} is one of our motivations for integrating blockchain with FL.  Model poisoning attacks mainly consist of untargeted model poisoning attacks and targeted model poisoning attacks \cite{fang2020local}. The former category aims to degrade the accuracy of the global model training, whereas the latter aims to control the model deviation towards their target. Here, we focus on untargeted model poisoning attacks on our BFL system. }

\textcolor{black}{We assume that at a certain global FL round $k$, an ES can be compromised by a  model poisoning attack following two steps. First, the attacker injects certain random noise  $v^{(k)}$ to the aggregated global model $ \boldsymbol{w}^{(k)}$ obtained via~\ref{equa:global_update-final} to manipulate the global model update  $\boldsymbol{w}^{(k)} \leftarrow \boldsymbol{w}^{(k)} + v^{(k)}$. Here we adopt Gaussian noise, i.e., $v^{(k)} = - \varpi \boldsymbol{q}^{(k)}$, where $\varpi$ is a scaling factor which characterizes the magnitude of the compromised model, and $\boldsymbol{q}^{(k)}$ is a random vector sampled from the Gaussian distribution $\mathcal{N}(\boldsymbol{0},\boldsymbol{I})$. Next, the attacked ES broadcasts the compromised global model to local MDs.} 

\textcolor{black}{In our BFL system, blockchain replaces the centralized authority in FL with a decentralized tamper-proof data ledger to monitor the model consensus process as well as mitigate single-point-of-failure. By deploying a blockchain over the edge layer, any model update event over consensus rounds at a certain ES is automatically traced by other ESs. If a poisoning attack takes place at an ES, other ESs can detect this behavior via transaction logs. Here, we adopt the attack detection score metric \cite{mondal2022beas}, which characterizes the abnormal gradient deviation caused by the poisoning attacker at a certain FL round, to detect the occurrence of a  model attack at a certain ES. Since all gradient update information including the compromised gradient is recorded on the digital ledger, blockchain can identify the compromised ES and temporarily disregard it from the ES network, while the global model aggregation process continues.}

\subsubsection{\textcolor{black}{Attack Simulation}}
\textcolor{black}{We investigate the training performance of different FL methods under model poisoning attacks on the two considered datasets. We consider an attack scenario where an adversary compromizes an ES and deploys model poisoning attacks by injecting random noise at the global rounds of 20 and 60 during the model consensus process among ESs.  Our proposed consensus-based BFL scheme with blockchain is compared with other three baselines: consensus-based FL without blockchain, BFL without consensus, and traditional FL. For the consensus-based schemes, we consider that there are 5 consensus rounds and the attacker poisons the model at round 3. For the BFL scheme without consensus, the attacker poisons one of the ESs during the global model aggregation process. For the traditional FL scheme, the attacker poisons the centralized aggregator. }
 
 \textcolor{black}{As illustrated in Fig.~\ref{Fig:attack-SVHN} for the SVHN dataset, the accuracy performance of all schemes is dropped when the attack occurs (i.e., at aggregation rounds 20 and 60). However, our consensus-based BFL scheme achieves the best accuracy rate and highest robustness against the poisoning attack in both IID and non-IID data distribution settings. For instance, from Fig.~\ref{Fig:attack-SVHN}(a), we see that the accuracy of our scheme at the onset of an attack only drops by 3.7\%, as opposed to 7.6\%, 16.9\%, and 32.1\% in the consensus-based FL without blockchain, BFL without consensus, and traditional FL schemes, respectively. The traceable data ledger of blockchain records all the global updating behaviors of ESs across consensus rounds, which allows blockchain to rapidly detect the poisoning attack.  Since all ESs are connected on the shared ledger, the blockchain disregards this ES from the consensus process to protect model training. The consensus-based FL scheme without blockchain has the second highest performance. The poisoned model is involved in the consensus process, which degrades the overall model aggregation, though the impact is dampened since the poisoned model is only one participating in the consensus process. The other two non-consensus-based methods show lower accuracy and less robustness against the attack. The BFL scheme without consensus shows lower training robustness since the attack at one of the ESs significantly affects the global model aggregation at that particular server and its associated MDs. In the traditional FL scheme with a centralized server, the model training is degraded after the attack since the attacker directly poisons the only model aggregator.  We repeat these experiments on the Fashion-MNIST dataset  in Fig.~\ref{Fig:attack-MNIST} which indicates qualitatively similar results. For example, in Fig.~\ref{Fig:attack-MNIST}(a), the accuracy rate of our BFL scheme with blockchain only reduces by 2.67\%, compared with 5.6\% in the consensus-based FL without blockchain, 6.8\% in the BFL without consensus, and 16.7\% in the traditional FL scheme at the onset of poisoning attack.   }
\begin{figure}[t!]
	\centering
	\begin{subfigure}[t]{0.25\textwidth}
		\centering
		\includegraphics[width=0.99\linewidth]{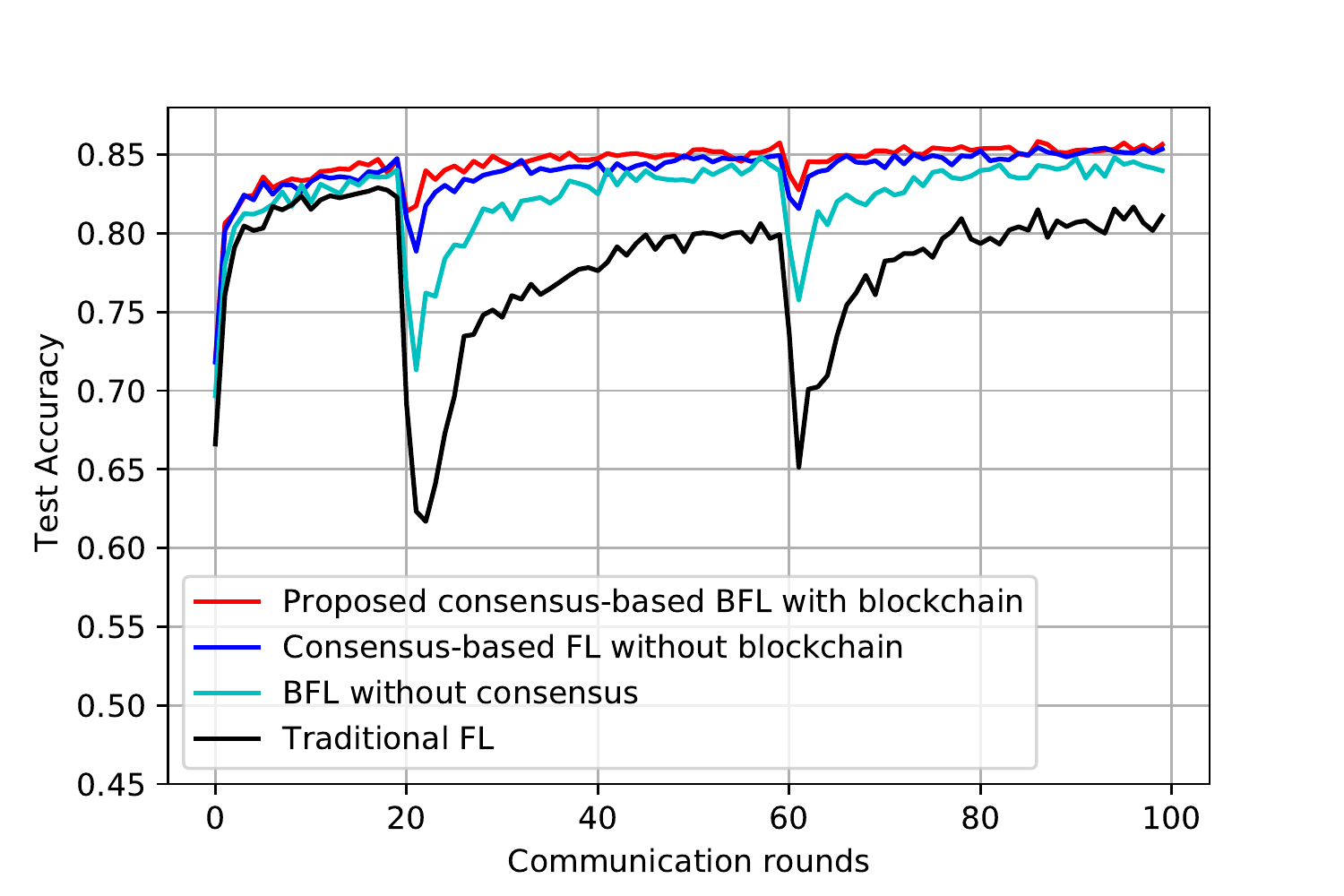} 
		\caption{Accuracy under IID setting. }
	\end{subfigure}%
	~
	\begin{subfigure}[t]{0.25\textwidth}
		\centering
		\includegraphics[width=0.99\linewidth]{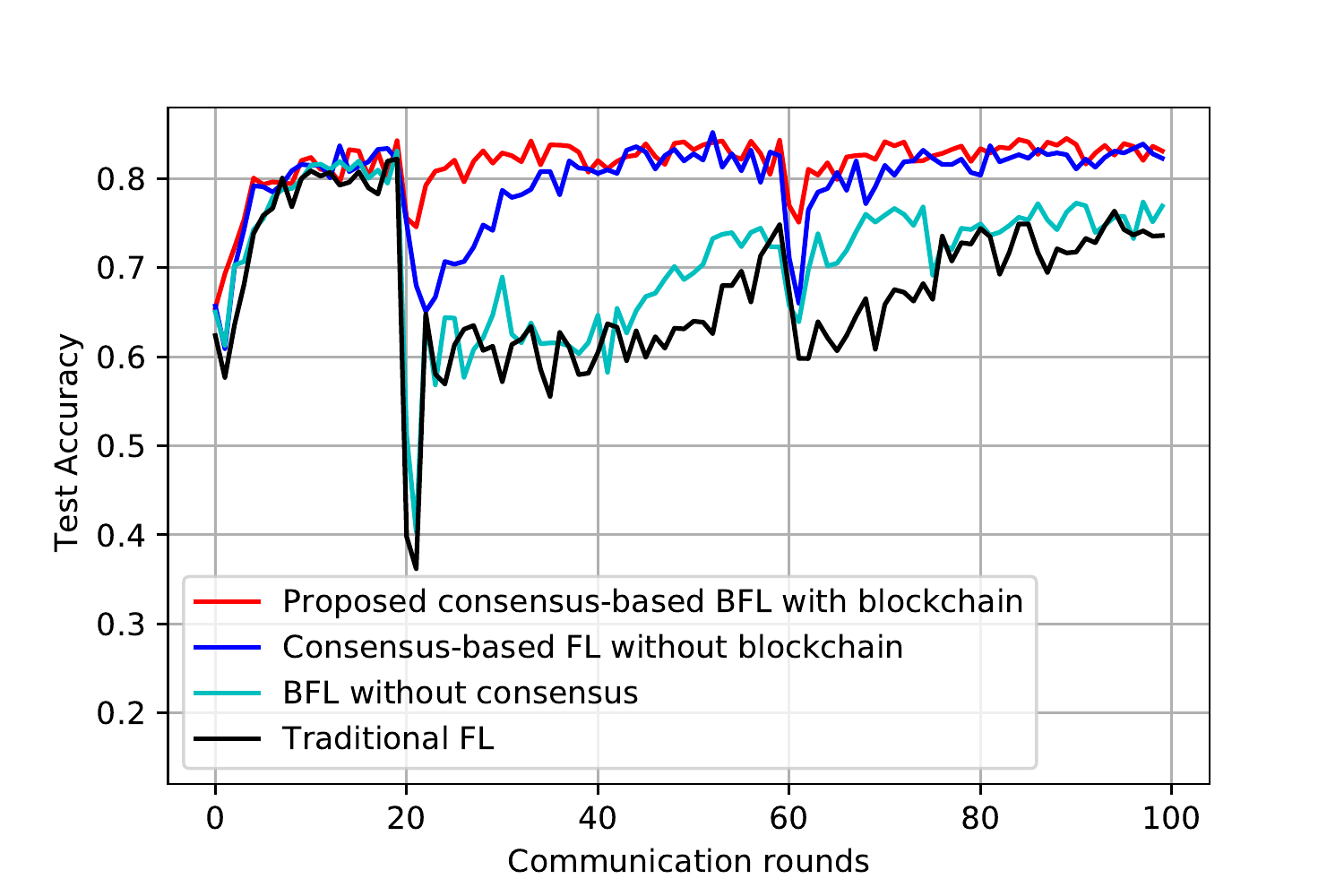} 
		\caption{Accuracy under non-IID setting. }
	\end{subfigure}
	\caption{\textcolor{black}{Comparison of different FL approaches under model poisoning attacks (SVHN dataset). }}
	\label{Fig:attack-SVHN}
	\vspace{-0.1in}
\end{figure}

\begin{figure}[t!]
	\centering
	\begin{subfigure}[t]{0.25\textwidth}
		\centering
		\includegraphics[width=0.99\linewidth]{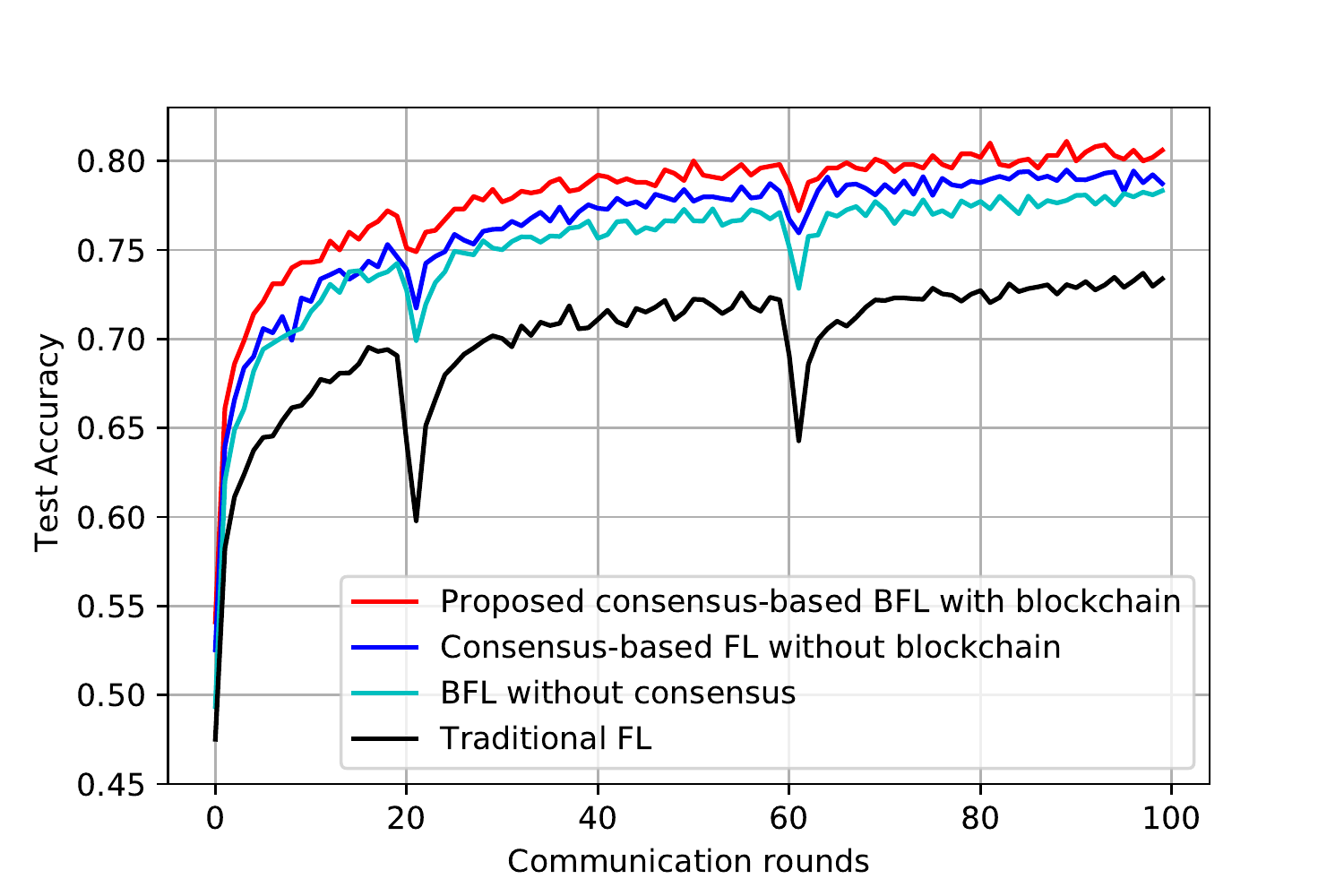} 
		\caption{Accuracy under IID setting. }
	\end{subfigure}%
	~
	\begin{subfigure}[t]{0.25\textwidth}
		\centering
		\includegraphics[width=0.99\linewidth]{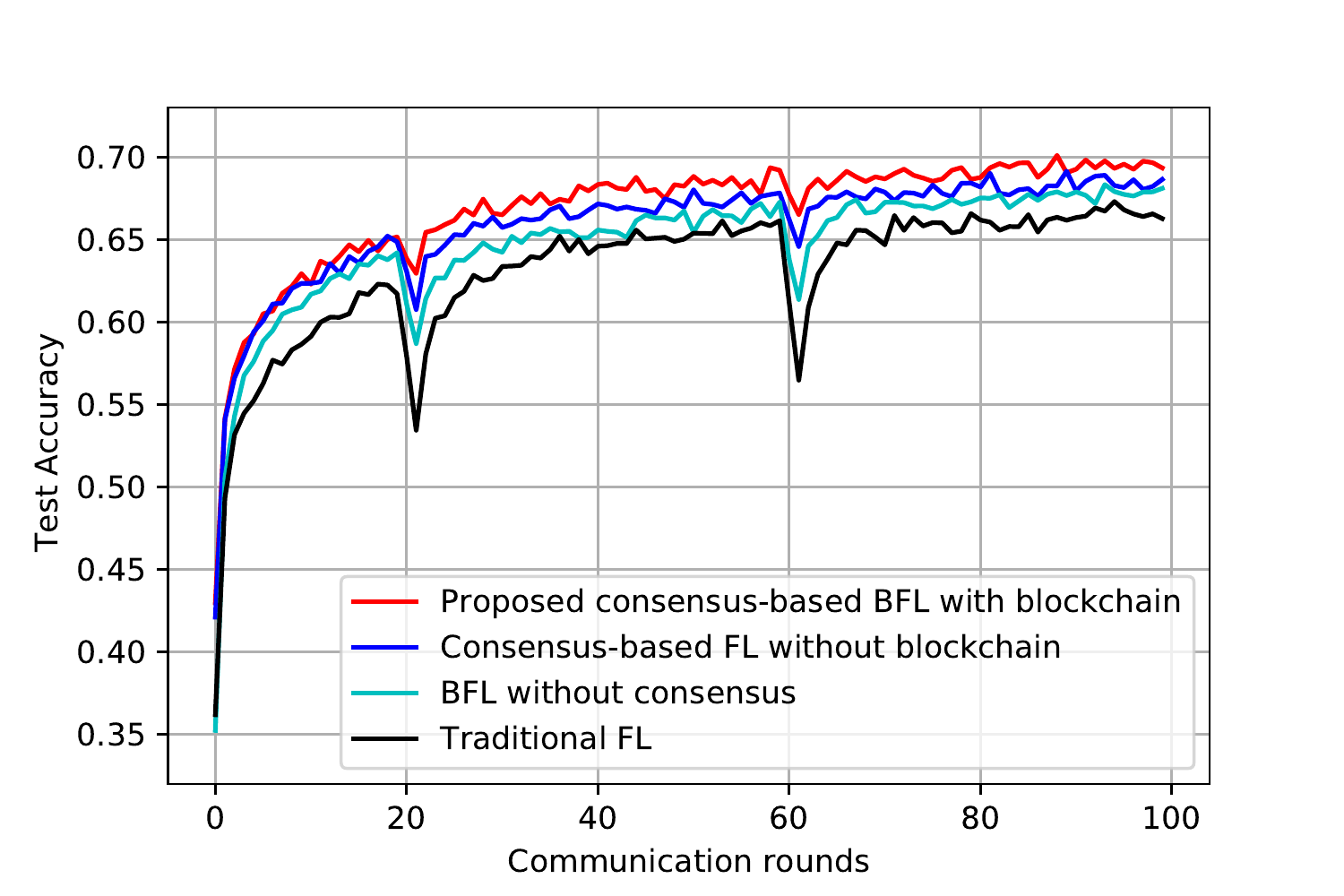} 
		\caption{Accuracy under non-IID setting. }
	\end{subfigure}
	\caption{\textcolor{black}{Comparison of different FL approaches under model poisoning attacks (Fashion-MNIST dataset). }}
	\label{Fig:attack-MNIST}
	\vspace{-0.1in}
\end{figure}

\section{Conclusion}
\label{Section:Conclude}
This paper studied a decentralized BFL system in multi-server edge computing with a holistic design of both \textcolor{black}{offloading-assisted ML model training and mobile block mining schemes.} A model aggregation solution has been developed via P2P-based consensus among ESs to build a global model that is shared with MDs via Blockchain for reliable model learning empowered by block mining. \textcolor{black}{We aimed to minimize the system latency by a  parameterized A2C algorithm with a careful  design of actor and critic. A comprehensive analysis of the convergence properties of our proposed BFL model was given.} Numerical simulations verified the superior performance of our proposed consensus-based BFL scheme over state-of-the-art schemes in terms of higher accuracy and lower loss. The proposed parameterized A2C algorithm  exhibited the faster convergence rate and lower system latency, compared with the existing DRL schemes. \textcolor{black}{Our blockchain-empowered BFL scheme also achieved  high robustness against model poisoning attacks.}
\balance
\bibliography{Ref}
\bibliographystyle{IEEEtran}
\begingroup
\onecolumn

\appendices

\section{Proof of Theorem~\ref{th:main}}\label{app:th}
\label{subsection:Appen-Theorem1}
\subsection{Brief Recap of ML Model Training}
Let set $\mathcal{N}_m^{(k)}$ contain the devices associated with ES $m$ during global aggregation $k$ and also the ES $m$ itself. Also, let $y$ denote the index of an arbitrary ES/MD. Each ES $m$ at global aggregation $k$ acquires its local \textit{aggregated gradient}, which is then normalized by the total number of data points $D^{(k)}$ to form $\nabla {F}^{\mathsf{A},(k)}_{m}$ according to \eqref{equation:modelconsensus}, which can be written in the compact form as
\begin{equation}
      {\nabla {F}}_{m}^{\mathsf{A},(k)}= 
   \sum_{y\in \mathcal{N}_m^{(k)}}\frac{{{D}}^{(k)}_y}{{D}^{(k)} e^{(k)}_y}{\nabla {F}_y^{\mathsf{C},k}}.
\end{equation}
 

Then, the ESs engage in consensus process, where the final parameter at each ES $m$ can be expressed as
\begin{equation}\label{eq:proof26}
    {\nabla {F}}_m^{\mathsf{L},(k)}= \sum_{m\in\mathcal{M}}     {\nabla {F}}_{m}^{\mathsf{A},(k)} + \bm{c}^{(k)}_m= \sum_{m\in\mathcal{M}}
   \sum_{y\in \mathcal{N}_m^{(k)}}\frac{{{D}}^{(k)}_y}{{D}^{(k)} e^{(k)}_y}{\nabla {F}}_y^{\mathsf{C},(k)} + \bm{c}^{(k)}_m,
\end{equation}
where $\bm{c}^{(k)}_m$ denotes the error of consensus, which is caused by finite rounds of P2P communications. The selected ES at aggregation round $k$, denoted by $m_k$, then adds a boosting coefficient $\sum_{m\in\mathcal{M}}\sum_{y'\in \mathcal{N}^{(k)}_m}\frac{{{D}}^{(k)}_{y'}e^{(k)}_{y'}}{{D}^{(k)} }$ to its local gradient to form the following vector:
   \begin{equation}
    \overline{\nabla {F}}_{m_k}^{(k)}=\sum_{m\in\mathcal{M}}\sum_{y'\in \mathcal{N}^{(k)}_m}\frac{{{D}}^{(k)}_{y'}e^{(k)}_{y'}}{{D}^{(k)}}{\nabla {F}}_{m_k}^{\mathsf{L},(k)} = \sum_{m\in\mathcal{M}}\sum_{y'\in \mathcal{N}^{(k)}_m}\frac{{{D}}^{(k)}_{y'}e^{(k)}_{y'}}{{D}^{(k)} }\sum_{m\in\mathcal{M}}
   \sum_{y\in \mathcal{N}_m^{(k)}}\frac{{{D}}^{(k)}_y}{{D}^{(k)} e^{(k)}_y}{\nabla {F}}_y^{\mathsf{C},(k)} + \sum_{m\in\mathcal{M}}\sum_{y'\in \mathcal{N}^{(k)}_m}\frac{{{D}}^{(k)}_{y'}e^{(k)}_{y'}}{{D}^{(k)}}\bm{c}^{(k)}_{m_k}.
\end{equation}
The aggregator ES $m_k$ then updates the global model parameter as follows:
\begin{equation}
    \mathbf{w}^{(k+1)}=\mathbf{w}^{(k)}-\eta_{k} \overline{\nabla {F}}_{m_k}^{(k)}.
    \end{equation}

\subsection{Convergence Analysis}
Using the $\beta$-smoothness of the global loss function (Assumption~\ref{Assup:lossFun}), we have: 
\begin{equation}
 F^{({k})}(\mathbf{w}^{(k+1)}) \leq F^{({k})}(\mathbf{w}^{(k)}) +  \nabla{F^{({k})}(\mathbf{w}^{(k)})}^\top \left( \mathbf{w}^{(k+1)} - \mathbf{w}^{(k)}\right)+ \frac{\beta}{2} \left\Vert \mathbf{w}^{(k+1)} - \mathbf{w}^{(k)}\right\Vert^2.
\end{equation}

Using the updating rule of $\mathbf{w}^{(k+1)}$ and taking the conditional expectation (with respect to data sampling conducted via mini-batch SGD at the last aggregation instance), we have
\begin{equation}\label{eq:res1}
\begin{aligned}
 &\mathbb E_k \left[F^{(k)}(\mathbf{w}^{(k+1)})\right] \leq F^{(k)}(\mathbf{w}^{(k)}) -\\&  \nabla{F^{(k)}(\mathbf{w}^{(k)})}^\top \mathbb{E}_k\left[ \eta_{k}\sum_{m\in\mathcal{M}}\sum_{y'\in \mathcal{N}^{(k)}_m}\frac{{{D}}^{(k)}_{y'}e^{(k)}_{y'}}{{D}^{(k)} }\sum_{m\in\mathcal{M}}
   \sum_{y\in \mathcal{N}_m^{(k)}}\frac{{{D}}^{(k)}_y}{{D}^{(k)} e^{(k)}_y}{\nabla {F}}_y^{\mathsf{C},(k)} + \eta_k\sum_{m\in\mathcal{M}}\sum_{y'\in \mathcal{N}^{(k)}_m}\frac{{{D}}^{(k)}_{y'}e^{(k)}_{y'}}{{D}^{(k)}}\bm{c}^{(k)}_{m_k} \right]\\&+ \frac{\beta}{2} \mathbb{E}_k\left[\left\Vert  \eta_{k}\sum_{m\in\mathcal{M}}\sum_{y'\in \mathcal{N}^{(k)}_m}\frac{{{D}}^{(k)}_{y'}e^{(k)}_{y'}}{{D}^{(k)} }\sum_{m\in\mathcal{M}}
   \sum_{y\in \mathcal{N}_m^{(k)}}\frac{{{D}}^{(k)}_y}{{D}^{(k)} e^{(k)}_y}{\nabla {F}}_y^{\mathsf{C},(k)} + \eta_k\sum_{m\in\mathcal{M}}\sum_{y'\in \mathcal{N}^{(k)}_m}\frac{{{D}}^{(k)}_{y'}e^{(k)}_{y'}}{{D}^{(k)}} \bm{c}^{(k)}_{m_k}\right\Vert^2\right].
    \end{aligned}
\end{equation}

Since ${\nabla {F}}_y^{\mathsf{C},(k)} =-\left( \mathbf{w}_y^{(k),e^{(k)}_y}-\mathbf{w}^{(k)}\right) \Big/ {\eta_{k}}$,  we have
\begin{equation}\label{eq:nablabarFEXP}
    {\nabla {F}}_y^{\mathsf{C},(k)} = {\frac{1}{{{B}}^{(k)}_y}} \sum_{e=1}^{e^{(k)}_y} \sum_{d\in \mathcal{B}^{(k),e}_{y}} \hspace{-1mm} {\nabla  f(\mathbf{w}^{(k),e-1}_{y},d)},
\end{equation}
where $\mathcal{B}_y^{(k),e}$ denotes the set of data points selected to form the mini-batch at the $e$-th local SGD iteration at ES/MD $y$. Since the mini-batch size is fixed during local SGD iterations at each global aggregation round, we defined $B_y^{(k)}=|\mathcal{B}^{(k),e}_{y}|,~\forall e$. Noting that SGD is unbiased it can be easily shown that
\begin{equation}\label{eq:nablabarF}
    \mathbb E_k\left[{\nabla {F}}_y^{\mathsf{C},(k)}\right] =  \sum_{e=1}^{e^{(k)}_y} {\nabla F^{(k)}_y(\mathbf{w}^{(k),e-1}_{y})}.
\end{equation}
Replacing the above result in~\eqref{eq:res1}, we get
\begin{equation}
\begin{aligned}
 \mathbb E_k \left[F^{(k)}(\mathbf{w}^{(k+1)})\right] \leq& F^{(k)}(\mathbf{w}^{(k)}) -  \nabla{F^{(k)}(\mathbf{w}^{(k)})}^\top \mathbb{E}_k\Big[ \eta_{k}\sum_{m\in\mathcal{M}}\sum_{y'\in \mathcal{N}^{(k)}_m}\frac{{{D}}^{(k)}_{y'}e^{(k)}_{y'}}{{D}^{(k)} }
    \sum_{m\in\mathcal{M}}
   \sum_{y\in \mathcal{N}_m^{(k)}}\frac{{{D}}^{(k)}_y}{{D}^{(k)} e^{(k)}_y} \sum_{e=1}^{e^{(k)}_y}  {\nabla  F^{(k)}_y(\mathbf{w}^{(k),e-1}_{y})}\\&
   +\eta_k \sum_{m\in\mathcal{M}}\sum_{y'\in \mathcal{N}^{(k)}_m}\frac{{{D}}^{(k)}_{y'}e^{(k)}_{y'}}{{D}^{(k)}}\bm{c}_{m_k}^{(k)}\Big]\\&+ \frac{\beta}{2} \mathbb{E}_k\left[\left\Vert  \eta_{k}\sum_{m\in\mathcal{M}}\sum_{y'\in \mathcal{N}^{(k)}_m}\frac{{{D}}^{(k)}_{y'}e^{(k)}_{y'}}{{D}^{(k)} }
   \sum_{m\in\mathcal{M}}
   \sum_{y\in \mathcal{N}_m^{(k)}}\frac{{{D}}^{(k)}_y}{{D}^{(k)} e^{(k)}_y}{\nabla {F}}_y^{\mathsf{C},(k)}+\eta_k \sum_{m\in\mathcal{M}}\sum_{y'\in \mathcal{N}^{(k)}_m}\frac{{{D}}^{(k)}_{y'}e^{(k)}_{y'}}{{D}^{(k)}} \bm{c}_{m_k}^{(k)}\right\Vert^2\right].
    \end{aligned}
\end{equation}
Using the linearity of expectation and inner product, we can simplify the above expression as follows:
\begin{equation}\label{eq:proof32}
\begin{aligned}
 \mathbb E_k \left[F^{(k)}(\mathbf{w}^{(k+1)})\right] \leq& F^{(k)}(\mathbf{w}^{(k)}) -  \nabla{F^{(k)}(\mathbf{w}^{(k)})}^\top \mathbb{E}_k\left[ \eta_{k}\sum_{m\in\mathcal{M}}\sum_{y'\in \mathcal{N}^{(k)}_m}\frac{{{D}}^{(k)}_{y'}e^{(k)}_{y'}}{{D}^{(k)} }
    \sum_{m\in\mathcal{M}}
   \sum_{y\in \mathcal{N}_m^{(k)}}\frac{{{D}}^{(k)}_y}{{D}^{(k)} e^{(k)}_y} \sum_{e=1}^{e^{(k)}_y}  {\nabla  F^{(k)}_y(\mathbf{w}^{(k),e-1}_{y})}\right] \\&+\eta_k\sum_{m\in\mathcal{M}}\sum_{y'\in \mathcal{N}^{(k)}_m}\frac{{{D}}^{(k)}_{y'}e^{(k)}_{y'}}{{D}^{(k)}}\nabla{F^{(k)}(\mathbf{w}^{(k)})}^\top \mathbb{E}_k\left[-  \bm{c}_{m_k}^{(k)}\right]\\&+ \frac{\beta}{2} \mathbb{E}_k\left[\left\Vert  \eta_{k}\sum_{m\in\mathcal{M}}\sum_{y'\in \mathcal{N}^{(k)}_m}\frac{{{D}}^{(k)}_{y'}e^{(k)}_{y'}}{{D}^{(k)} }
   \sum_{m\in\mathcal{M}}
   \sum_{y\in \mathcal{N}_m^{(k)}}\frac{{{D}}^{(k)}_y}{{D}^{(k)} e^{(k)}_y}{\nabla {F}}_y^{\mathsf{C},(k)}+\eta_k\sum_{m\in\mathcal{M}}\sum_{y'\in \mathcal{N}^{(k)}_m}\frac{{{D}}^{(k)}_{y'}e^{(k)}_{y'}}{{D}^{(k)}} \bm{c}_{m_k}^{(k)}\right\Vert^2\right].
    \end{aligned}
\end{equation}
Using  Cauchy-Schwartz  and  Young’s inequalities, we have 
\begin{equation}
    \bm{a}^\top \bm{b} \leq \frac{\alpha}{2} \Vert \bm{a} \Vert^2 + \frac{1}{2\alpha} \Vert\bm{b} \Vert^2,~\alpha\in\mathbb{R}^{++}
\end{equation}
for two real valued vectors $\bm{a}$ and $\bm{b}$, implying $\bm{a}^\top \bm{b} \leq \Vert \bm{a} \Vert^2 + \frac{1}{4} \Vert\bm{b} \Vert^2$. Using this inequality in~\eqref{eq:proof32} yields
\begin{equation}\label{eq:proof32-2}
\begin{aligned}
 \mathbb E_k \left[F^{(k)}(\mathbf{w}^{(k+1)})\right] \leq& F^{(k)}(\mathbf{w}^{(k)}) -  \nabla{F^{(k)}(\mathbf{w}^{(k)})}^\top \mathbb{E}_k\left[ \eta_{k}\sum_{m\in\mathcal{M}}\sum_{y'\in \mathcal{N}^{(k)}_m}\frac{{{D}}^{(k)}_{y'}e^{(k)}_{y'}}{{D}^{(k)} }
    \sum_{m\in\mathcal{M}}
   \sum_{y\in \mathcal{N}_m^{(k)}}\frac{{{D}}^{(k)}_y}{{D}^{(k)} e^{(k)}_y} \sum_{e=1}^{e^{(k)}_y}  {\nabla  F^{(k)}_y(\mathbf{w}^{(k),e-1}_{y})}\right] \\&+\frac{\eta_k }{4} \sum_{m\in\mathcal{M}}\sum_{y'\in \mathcal{N}^{(k)}_m}\frac{{{D}}^{(k)}_{y'}e^{(k)}_{y'}}{{D}^{(k)}}\left\Vert\nabla{F^{(k)}(\mathbf{w}^{(k)})}\right\Vert^2 + \eta_k\sum_{m\in\mathcal{M}}\sum_{y'\in \mathcal{N}^{(k)}_m}\frac{{{D}}^{(k)}_{y'}e^{(k)}_{y'}}{{D}^{(k)}} \mathbb{E}_k\left[\left\Vert\bm{c}_{m_k}^{(k)}\right\Vert^2\right]\\&+ \frac{\beta}{2} \mathbb{E}_k\left[\left\Vert  \eta_{k}\sum_{m\in\mathcal{M}}\sum_{y'\in \mathcal{N}^{(k)}_m}\frac{{{D}}^{(k)}_{y'}e^{(k)}_{y'}}{{D}^{(k)} }
   \sum_{m\in\mathcal{M}}
   \sum_{y\in \mathcal{N}_m^{(k)}}\frac{{{D}}^{(k)}_y}{{D}^{(k)} e^{(k)}_y}{\nabla {F}}_y^{\mathsf{C},(k)}+\eta_k\sum_{m\in\mathcal{M}}\sum_{y'\in \mathcal{N}^{(k)}_m}\frac{{{D}}^{(k)}_{y'}e^{(k)}_{y'}}{{D}^{(k)}} \bm{c}_{m_k}^{(k)}\right\Vert^2\right].
    \end{aligned}
\end{equation}
Further, for any two real valued vectors $\bm{a}$ and $\bm{b}$, we have: $2\bm{a}^\top\bm{b}=\Vert \bm{a}\Vert^2+\Vert \bm{b}\Vert^2-\Vert \bm{a}-\bm{b}\Vert^2$. Using this equality in~\eqref{eq:proof32-2} results in the following bound:
\begin{equation}\label{eq:firstlossIneq}
\begin{aligned}
 &\mathbb E_k \left[F^{(k)}(\mathbf{w}^{(k+1)})\right] \leq F^{(k)}(\mathbf{w}^{(k)}) - \frac{\eta_{k}}{2}\sum_{m\in\mathcal{M}}\sum_{y'\in \mathcal{N}^{(k)}_m}\frac{{D}^{(k)}_{y'}e^{(k)}_{y'}}{{D}^{(k)} } \mathbb{E}_k\vast[
 \left\Vert\nabla{F^{(k)}(\mathbf{w}^{(k)})}\right\Vert^2 +\left\Vert 
    \sum_{m\in\mathcal{M}}
   \sum_{y\in \mathcal{N}_m^{(k)}}\frac{{D}^{(k)}_y}{{D}^{(k)} e^{(k)}_y} \sum_{e=1}^{e^{(k)}_y}  {\nabla  F^{(k)}_y(\mathbf{w}^{(k),e-1}_{y})}\right\Vert^2 \hspace{-14mm}\\&-  \left\Vert
    \nabla{F^{(k)}(\mathbf{w}^{(k)})}-
   \sum_{m\in\mathcal{M}}
   \sum_{y\in \mathcal{N}_m^{(k)}}\frac{{D}^{(k)}_y}{{D}^{(k)} e^{(k)}_y} \sum_{e=1}^{e^{(k)}_y}  {\nabla  F^{(k)}_y(\mathbf{w}^{(k),e-1}_{y})}
    \right\Vert^2
    \vast]+\frac{\eta_k}{4} \sum_{m\in\mathcal{M}}\sum_{y'\in \mathcal{N}^{(k)}_m}\frac{{{D}}^{(k)}_{y'}e^{(k)}_{y'}}{{D}^{(k)}}\left\Vert\nabla{F^{(k)}(\mathbf{w}^{(k)})}\right\Vert^2 \\&
    + \eta_k\sum_{m\in\mathcal{M}}\sum_{y'\in \mathcal{N}^{(k)}_m}\frac{{{D}}^{(k)}_{y'}e^{(k)}_{y'}}{{D}^{(k)}}\mathbb{E}_k\left[\left\Vert\bm{c}_{m_k}^{(k)}\right\Vert^2\right]+\\& \frac{\beta \eta^2_{k}}{2} \mathbb{E}_k\left[\left\Vert \sum_{m\in\mathcal{M}}\sum_{y'\in \mathcal{N}^{(k)}_m}\frac{{D}^{(k)}_{y'}e^{(k)}_{y'}}{{D}^{(k)} }
    \sum_{m\in\mathcal{M}}
   \sum_{y\in \mathcal{N}_m^{(k)}}\frac{{D}^{(k)}_y}{{D}^{(k)} e^{(k)}_y}{\nabla {F}}_y^{\mathsf{C},(k)}+ \sum_{m\in\mathcal{M}}\sum_{y'\in \mathcal{N}^{(k)}_m}\frac{{{D}}^{(k)}_{y'}e^{(k)}_{y'}}{{D}^{(k)}}\bm{c}_{m_k}^{(k)}\right\Vert^2\right].\hspace{-4mm}
    \end{aligned}
\end{equation}
Applying the Cauchy-Schwarz inequality (i.e., $\Vert \bm{a}+\bm{b}\Vert^2 \leq 2\Vert \bm{a}\Vert^2 +2\Vert \bm{b} \Vert^2$) on the last term of the above bound yields
\begin{equation}\label{eq:firstlossIneq37}
\begin{aligned}
 &\mathbb E_k \left[F^{(k)}(\mathbf{w}^{(k+1)})\right] \leq F^{(k)}(\mathbf{w}^{(k)}) - \frac{\eta_{k}}{2}\sum_{m\in\mathcal{M}}\sum_{y'\in \mathcal{N}^{(k)}_m}\frac{{D}^{(k)}_{y'}e^{(k)}_{y'}}{{D}^{(k)} } \mathbb{E}_k\vast[
 \left\Vert\nabla{F^{(k)}(\mathbf{w}^{(k)})}\right\Vert^2 +\left\Vert 
    \sum_{m\in\mathcal{M}}
   \sum_{y\in \mathcal{N}_m^{(k)}}\frac{{D}^{(k)}_y}{{D}^{(k)} e^{(k)}_y} \sum_{e=1}^{e^{(k)}_y}  {\nabla  F^{(k)}_y(\mathbf{w}^{(k),e-1}_{y})}\right\Vert^2\hspace{-8mm}\\&-  \left\Vert
    \nabla{F^{(k)}(\mathbf{w}^{(k)})}-
   \sum_{m\in\mathcal{M}}
   \sum_{y\in \mathcal{N}_m^{(k)}}\frac{{D}^{(k)}_y}{{D}^{(k)} e^{(k)}_y} \sum_{e=1}^{e^{(k)}_y}  {\nabla  F^{(k)}_y(\mathbf{w}^{(k),e-1}_{y})}
    \right\Vert^2
    \vast]+\frac{\eta_k}{4}\sum_{m\in\mathcal{M}}\sum_{y'\in \mathcal{N}^{(k)}_m}\frac{{{D}}^{(k)}_{y'}e^{(k)}_{y'}}{{D}^{(k)}} \left\Vert\nabla{F^{(k)}(\mathbf{w}^{(k)})}\right\Vert^2 \\&+ \eta_k\sum_{m\in\mathcal{M}}\sum_{y'\in \mathcal{N}^{(k)}_m}\frac{{{D}}^{(k)}_{y'}e^{(k)}_{y'}}{{D}^{(k)}}\mathbb{E}_k\left[\left\Vert\bm{c}_{m_k}^{(k)}\right\Vert^2\right]\hspace{-4mm}\\&+ {\beta \eta^2_{k}} \underbrace{\mathbb{E}_k\left[\left\Vert \sum_{m\in\mathcal{M}}\sum_{y'\in \mathcal{N}^{(k)}_m}\frac{{D}^{(k)}_{y'}e^{(k)}_{y'}}{{D}^{(k)} }
    \sum_{m\in\mathcal{M}}
   \sum_{y\in \mathcal{N}_m^{(k)}}\frac{{D}^{(k)}_y}{{D}^{(k)} e^{(k)}_y}{\nabla {F}}_y^{\mathsf{C},(k)}\right\Vert^2\right]}_{(a)}+{\beta \eta^2_{k}\left(\sum_{m\in\mathcal{M}}\sum_{y'\in \mathcal{N}^{(k)}_m}\frac{{{D}}^{(k)}_{y'}e^{(k)}_{y'}}{{D}^{(k)}}\right)^2} \mathbb{E}_k\left[\left\Vert \bm{c}_{m_k}^{(k)}\right\Vert^2\right].
    \end{aligned}
\end{equation}
Focusing on term $(a)$ in~\eqref{eq:firstlossIneq37}, using~\eqref{eq:nablabarFEXP}, we upper bound it as follows:
\begin{equation}\label{eq:proof38}
    \begin{aligned}
    &\mathbb{E}_k\left[\left\Vert \sum_{m\in\mathcal{M}}\sum_{y'\in \mathcal{N}^{(k)}_m}\frac{{D}^{(k)}_{y'}e^{(k)}_{y'}}{{D}^{(k)} }
    \sum_{m\in\mathcal{M}}
   \sum_{y\in \mathcal{N}_m^{(k)}}\frac{{D}^{(k)}_y}{{D}^{(k)} e^{(k)}_y}{\nabla {F}}_y^{\mathsf{C},(k)}\right\Vert^2\right]\\&=\left(\sum_{m\in\mathcal{M}}\sum_{y'\in \mathcal{N}^{(k)}_m}\frac{{D}^{(k)}_{y'}e^{(k)}_{y'}}{{D}^{(k)} }\right)^2\mathbb{E}_k\left[\left\Vert 
    \sum_{m\in\mathcal{M}}
   \sum_{y\in \mathcal{N}_m^{(k)}}\frac{{D}^{(k)}_y}{{D}^{(k)} e^{(k)}_y} \sum_{e=1}^{e^{(k)}_y} \sum_{d\in \mathcal{B}^{(k),e}_{y}} \hspace{-1mm} \frac{\nabla  f(\mathbf{w}^{(k),e-1}_{y},d)}{{{B}}^{(k)}_y}\right\Vert^2\right]
   \\&=\left(\sum_{m\in\mathcal{M}}\sum_{y'\in \mathcal{N}^{(k)}_m}\frac{{D}^{(k)}_{y'}e^{(k)}_{y'}}{{D}^{(k)} }\right)^2\mathbb{E}_k\left[\left\Vert 
    \sum_{m\in\mathcal{M}}
   \sum_{y\in \mathcal{N}_m^{(k)}}\frac{{D}^{(k)}_y}{{D}^{(k)} e^{(k)}_y} \sum_{e=1}^{e^{(k)}_y} \left(\sum_{d\in \mathcal{B}^{(k),e}_{y}} \hspace{-1mm} \frac{\nabla  f(\mathbf{w}^{(k),e-1}_{y},d)}{{{B}}^{(k)}_y}-\nabla  F^{(k)}_y(\mathbf{w}^{(k),e-1}_{y})+\nabla  F^{(k)}_y(\mathbf{w}^{(k),e-1}_{y})\right)\right\Vert^2\right]
   \\&=\left(\sum_{m\in\mathcal{M}}\sum_{y'\in \mathcal{N}^{(k)}_m}\frac{{D}^{(k)}_{y'}e^{(k)}_{y'}}{{D}^{(k)} }\right)^2\mathbb{E}_k\left[\left\Vert 
    \sum_{m\in\mathcal{M}}
   \sum_{y\in \mathcal{N}_m^{(k)}}\frac{{D}^{(k)}_y}{{D}^{(k)} e^{(k)}_y} \sum_{e=1}^{e^{(k)}_y} \left(\sum_{d\in \mathcal{B}^{(k),e}_{y}} \hspace{-1mm} \frac{\nabla  f(\mathbf{w}^{(k),e-1}_{y},d)}{{{B}}^{(k)}_y}-\nabla  F^{(k)}_y(\mathbf{w}^{(k),e-1}_{y})+\nabla  F^{(k)}_y(\mathbf{w}^{(k),e-1}_{y})\right)\right\Vert^2\right]\\&
   \leq 2\left(\sum_{m\in\mathcal{M}}\sum_{y'\in \mathcal{N}^{(k)}_m}\frac{{D}^{(k)}_{y'}e^{(k)}_{y'}}{{D}^{(k)} }\right)^2\underbrace{\mathbb{E}_k\left[\left\Vert 
    \sum_{m\in\mathcal{M}}
   \sum_{y\in \mathcal{N}_m^{(k)}}\frac{{D}^{(k)}_y}{{D}^{(k)} e^{(k)}_y} \sum_{e=1}^{e^{(k)}_y} \left(\sum_{d\in \mathcal{B}^{(k),e}_{y}} \hspace{-1mm} \frac{\nabla  f(\mathbf{w}^{(k),e-1}_{y},d)}{{{B}}^{(k)}_y}-\nabla  F^{(k)}_y(\mathbf{w}^{(k),e-1}_{y})\right)\right\Vert^2\right]}_{(a)}
   \\&+2\left(\sum_{m\in\mathcal{M}}\sum_{y'\in \mathcal{N}^{(k)}_m}\frac{{D}^{(k)}_{y'}e^{(k)}_{y'}}{{D}^{(k)} }\right)^2\mathbb{E}_k\left[\left\Vert 
    \sum_{m\in\mathcal{M}}
   \sum_{y\in \mathcal{N}_m^{(k)}}\frac{{D}^{(k)}_y}{{D}^{(k)} e^{(k)}_y} \sum_{e=1}^{e^{(k)}_y}\nabla  F^{(k)}_y(\mathbf{w}^{(k),e-1}_{y})\right\Vert^2\right].
    \end{aligned}
\end{equation}
where the last inequality is obtained via Cauchy-Schwarz inequality. Note that $\sum_{d\in \mathcal{B}^{(k),e}_{y}} \hspace{-1mm} \frac{\nabla  f(\mathbf{w}^{(k),e-1}_{y},d)}{{{B}}^{(k)}_y}-\nabla  F^{(k)}_y(\mathbf{w}^{(k),e-1}_{y})$ in the above expression corresponds to the noise of SGD at local iteration $i$ at device $y$, which is zero mean and iid across the devices and the iterations. Using these facts in term $(a)$ in~\eqref{eq:proof38} implies
\begin{equation}\label{eq:proof40}
    \begin{aligned}
    &\mathbb{E}_k\left[\left\Vert \sum_{m\in\mathcal{M}}\sum_{y'\in \mathcal{N}^{(k)}_m}\frac{{D}^{(k)}_{y'}e^{(k)}_{y'}}{{D}^{(k)} }
    \sum_{m\in\mathcal{M}}
   \sum_{y\in \mathcal{N}_m^{(k)}}\frac{{D}^{(k)}_y}{{D}^{(k)} e^{(k)}_y}{\nabla {F}}_y^{\mathsf{C},(k)}\right\Vert^2\right]\\&=\left(\sum_{m\in\mathcal{M}}\sum_{y'\in \mathcal{N}^{(k)}_m}\frac{{D}^{(k)}_{y'}e^{(k)}_{y'}}{{D}^{(k)} }\right)^2 \sum_{m\in\mathcal{M}}
   \sum_{y\in \mathcal{N}_m^{(k)}}\left(\frac{{D}^{(k)}_y}{{D}^{(k)} e^{(k)}_y}\right)^2\underbrace{\mathbb{E}_k\left[\left\Vert 
     \sum_{e=1}^{e^{(k)}_y} \left(\sum_{d\in \mathcal{B}^{(k),e}_{y}} \hspace{-1mm} \frac{\nabla  f(\mathbf{w}^{(k),e-1}_{y},d)}{{{B}}^{(k)}_y}-\nabla  F^{(k)}_y(\mathbf{w}^{(k),e-1}_{y})\right)\right\Vert^2\right]}_{(a)}\\&+
    \left(\sum_{m\in\mathcal{M}}\sum_{y'\in \mathcal{N}^{(k)}_m}\frac{{D}^{(k)}_{y'}e^{(k)}_{y'}}{{D}^{(k)} }\right)^2  \mathbb{E}_k\left[\left\Vert 
    \sum_{m\in\mathcal{M}}
   \sum_{y\in \mathcal{N}_m^{(k)}}\frac{{D}^{(k)}_y}{{D}^{(k)} e^{(k)}_y} \sum_{e=1}^{e^{(k)}_y}
   \nabla  F^{(k)}_y(\mathbf{w}^{(k),e-1}_{y})\right\Vert^2\right].
    \end{aligned}
\end{equation}
Using Lemma~\ref{lemma:SGDnoiseExp} (Appendix~\ref{app:SGDnoiseExp}) and the fact that SGD noise is zero mean, expanding term $(a)$ in~\eqref{eq:proof40} yields
\begin{equation}
    \begin{aligned}
    &\mathbb{E}_k\left[\left\Vert \sum_{m\in\mathcal{M}}\sum_{y'\in \mathcal{N}^{(k)}_m}\frac{{D}^{(k)}_{y'}e^{(k)}_{y'}}{{D}^{(k)} }
    \sum_{m\in\mathcal{M}}
   \sum_{y\in \mathcal{N}_m^{(k)}}\frac{{D}^{(k)}_y}{{D}^{(k)} e^{(k)}_y}{\nabla {F}}_y^{\mathsf{C},(k)}\right\Vert^2\right]\\&\leq \left(\sum_{m\in\mathcal{M}}\sum_{y'\in \mathcal{N}^{(k)}_m}\frac{{D}^{(k)}_{y'}e^{(k)}_{y'}}{{D}^{(k)} }\right)^2 \sum_{m\in\mathcal{M}}
   \sum_{y\in \mathcal{N}_m^{(k)}}\left(\frac{{D}^{(k)}_y}{{D}^{(k)} e^{(k)}_y}\right)^2\sum_{e=1}^{e^{(k)}_y}
2\left(1-\frac{{B}^{(k)}_y}{{D}^{(k)}_y}\right) \frac{({\sigma}_{y}^{(k)})^2}{{B}^{(k)}_y} \Theta^2_y\\&+
    \left(\sum_{m\in\mathcal{M}}\sum_{y'\in \mathcal{N}^{(k)}_m}\frac{{D}^{(k)}_{y'}e^{(k)}_{y'}}{{D}^{(k)} }\right)^2  \mathbb{E}_k\left[\left\Vert 
    \sum_{m\in\mathcal{M}}
   \sum_{y\in \mathcal{N}_m^{(k)}}\frac{{D}^{(k)}_y}{{D}^{(k)} e^{(k)}_y} \sum_{e=1}^{e^{(k)}_y}
   \nabla  F^{(k)}_y(\mathbf{w}^{(k),e-1}_{y})\right\Vert^2\right].
    \end{aligned}
\end{equation}
Replacing the above result back in~\eqref{eq:firstlossIneq37}, we get
\begin{equation}
\begin{aligned}
 &\mathbb E_k \left[F^{(k)}(\mathbf{w}^{(k+1)})\right] \leq F^{(k)}(\mathbf{w}^{(k)}) - \frac{\eta_{k}}{2}\sum_{m\in\mathcal{M}}\sum_{y'\in \mathcal{N}^{(k)}_m}\frac{{D}^{(k)}_{y'}e^{(k)}_{y'}}{{D}^{(k)} } \mathbb{E}_k\vast[
 \left\Vert\nabla{F^{(k)}(\mathbf{w}^{(k)})}\right\Vert^2 +\left\Vert 
    \sum_{m\in\mathcal{M}}
   \sum_{y\in \mathcal{N}_m^{(k)}}\frac{{D}^{(k)}_y}{{D}^{(k)} e^{(k)}_y} \sum_{e=1}^{e^{(k)}_y}  {\nabla  F^{(k)}_y(\mathbf{w}^{(k),e-1}_{y})}\right\Vert^2\\&-  \left\Vert
    \nabla{F^{(k)}(\mathbf{w}^{(k)})}-
   \sum_{m\in\mathcal{M}}
   \sum_{y\in \mathcal{N}_m^{(k)}}\frac{{D}^{(k)}_y}{{D}^{(k)} e^{(k)}_y} \sum_{e=1}^{e^{(k)}_y}  {\nabla  F^{(k)}_y(\mathbf{w}^{(k),e-1}_{y})}
    \right\Vert^2
    \vast]+\frac{\eta_k}{4}\sum_{m\in\mathcal{M}}\sum_{y'\in \mathcal{N}^{(k)}_m}\frac{{{D}}^{(k)}_{y'}e^{(k)}_{y'}}{{D}^{(k)}} \left\Vert\nabla{F^{(k)}(\mathbf{w}^{(k)})}\right\Vert^2 \\&+ \eta_k\sum_{m\in\mathcal{M}}\sum_{y'\in \mathcal{N}^{(k)}_m}\frac{{{D}}^{(k)}_{y'}e^{(k)}_{y'}}{{D}^{(k)}}\mathbb{E}_k\left[\left\Vert\bm{c}_{m_k}^{(k)}\right\Vert^2\right]+ {\beta \eta^2_{k}} 
    \left(\sum_{m\in\mathcal{M}}\sum_{y'\in \mathcal{N}^{(k)}_m}\frac{{D}^{(k)}_{y'}e^{(k)}_{y'}}{{D}^{(k)} }\right)^2 \sum_{m\in\mathcal{M}}
   \sum_{y\in \mathcal{N}_m^{(k)}}\left(\frac{{D}^{(k)}_y}{{D}^{(k)} e^{(k)}_y}\right)^2\sum_{e=1}^{e^{(k)}_y}
2\left(1-\frac{{B}^{(k)}_y}{{D}^{(k)}_y}\right) \frac{({\sigma}_{y}^{(k)})^2}{{B}^{(k)}_y} \Theta^2_y\\&+
  {\beta \eta^2_{k}} 
  \left(\sum_{m\in\mathcal{M}}\sum_{y'\in \mathcal{N}^{(k)}_m}\frac{{D}^{(k)}_{y'}e^{(k)}_{y'}}{{D}^{(k)} }\right)^2  \mathbb{E}_k\left[\left\Vert 
    \sum_{m\in\mathcal{M}}
   \sum_{y\in \mathcal{N}_m^{(k)}}\frac{{D}^{(k)}_y}{{D}^{(k)} e^{(k)}_y} \sum_{e=1}^{e^{(k)}_y}
   \nabla  F^{(k)}_y(\mathbf{w}^{(k),e-1}_{y})\right\Vert^2\right]
   \\&+{\beta \eta^2_{k}\left(\sum_{m\in\mathcal{M}}\sum_{y'\in \mathcal{N}^{(k)}_m}\frac{{{D}}^{(k)}_{y'}e^{(k)}_{y'}}{{D}^{(k)}}\right)^2} \mathbb{E}_k\left[\left\Vert \bm{c}_{m_k}^{(k)}\right\Vert^2\right].
    \end{aligned}
\end{equation}

With rearranging the terms in the above bound, we get
\begin{equation}\label{eq:firstlossIneq}
\begin{aligned}
 &\mathbb E_k \left[F^{(k)}(\mathbf{w}^{(k+1)})\right] \leq F^{(k)}(\mathbf{w}^{(k)}) -
 \left(\frac{\eta_k}{4}\sum_{m\in\mathcal{M}}\sum_{y'\in \mathcal{N}^{(k)}_m}\frac{{{D}}^{(k)}_{y'}e^{(k)}_{y'}}{{D}^{(k)}}\right)\left\Vert\nabla{F^{(k)}(\mathbf{w}^{(k)})}\right\Vert^2   \\&
 +\eta_k\left(\underbrace{\beta\eta_k\left(\sum_{m\in\mathcal{M}}\sum_{y'\in \mathcal{N}^{(k)}_m}\frac{{{D}}^{(k)}_{y'}e^{(k)}_{y'}}{{D}^{(k)}}\right)^2-\frac{1}{2}\sum_{m\in\mathcal{M}}\sum_{y'\in \mathcal{N}^{(k)}_m}\frac{{{D}}^{(k)}_{y'}e^{(k)}_{y'}}{{D}^{(k)}}}_{(a)}\right) \left\Vert 
    \sum_{m\in\mathcal{M}}
   \sum_{y\in \mathcal{N}_m^{(k)}}\frac{{D}^{(k)}_y}{{D}^{(k)} e^{(k)}_y} \sum_{e=1}^{e^{(k)}_y}  {\nabla  F^{(k)}_y(\mathbf{w}^{(k),e-1}_{y})}\right\Vert^2\\&
   +\left(\frac{\eta_k}{2}\sum_{m\in\mathcal{M}}\sum_{y'\in \mathcal{N}^{(k)}_m}\frac{{{D}}^{(k)}_{y'}e^{(k)}_{y'}}{{D}^{(k)}}\right) \left\Vert
    \nabla{F^{(k)}(\mathbf{w}^{(k)})}-
   \sum_{m\in\mathcal{M}}
   \sum_{y\in \mathcal{N}_m^{(k)}}\frac{{D}^{(k)}_y}{{D}^{(k)} e^{(k)}_y} \sum_{e=1}^{e^{(k)}_y}  {\nabla  F^{(k)}_y(\mathbf{w}^{(k),e-1}_{y})}
    \right\Vert^2\\&
   +{\eta_k}\sum_{m\in\mathcal{M}}\sum_{y'\in \mathcal{N}^{(k)}_m}\frac{{{D}}^{(k)}_{y'}e^{(k)}_{y'}}{{D}^{(k)}} \left(1+\beta{\eta_k}\sum_{m\in\mathcal{M}}\sum_{y'\in \mathcal{N}^{(k)}_m}\frac{{{D}}^{(k)}_{y'}e^{(k)}_{y'}}{{D}^{(k)}}\right) \mathbb{E}_k\left[\left\Vert \bm{c}_{m_k}^{(k)}\right\Vert^2\right]\\&
   +
   {\beta \eta^2_{k}} 
    \left(\sum_{m\in\mathcal{M}}\sum_{y'\in \mathcal{N}^{(k)}_m}\frac{{D}^{(k)}_{y'}e^{(k)}_{y'}}{{D}^{(k)} }\right)^2 \sum_{m\in\mathcal{M}}
   \sum_{y\in \mathcal{N}_m^{(k)}}\left(\frac{{D}^{(k)}_y}{{D}^{(k)} e^{(k)}_y}\right)^2\sum_{e=1}^{e^{(k)}_y}
2\left(1-\frac{{B}^{(k)}_y}{{D}^{(k)}_y}\right) \frac{({\sigma}_{y}^{(k)})^2}{{B}^{(k)}_y} \Theta^2_y.
    \end{aligned}
\end{equation}
Assuming $\eta_k \leq \frac{{D}^{(k)}}{2\beta\sum_{m\in\mathcal{M}}\sum_{y'\in \mathcal{N}^{(k)}_m}{{D}}^{(k)}_{y'}e^{(k)}_{y'}}$ makes term $(a)$ in the above expression negative and also implies $1+\beta{\eta_k}\sum_{m\in\mathcal{M}}\sum_{y'\in \mathcal{N}^{(k)}_m}\frac{{{D}}^{(k)}_{y'}e^{(k)}_{y'}}{{D}^{(k)}} \leq \frac{3}{2}$. Replacing these results in the above bound gives us
\begin{equation}\label{eq:proof44}
\begin{aligned}
 &\mathbb E_k \left[F^{(k)}(\mathbf{w}^{(k+1)})\right] \leq F^{(k)}(\mathbf{w}^{(k)}) -
 \left(\frac{\eta_k}{4}\sum_{m\in\mathcal{M}}\sum_{y'\in \mathcal{N}^{(k)}_m}\frac{{{D}}^{(k)}_{y'}e^{(k)}_{y'}}{{D}^{(k)}}\right)\left\Vert\nabla{F^{(k)}(\mathbf{w}^{(k)})}\right\Vert^2  \\&
   +\left(\frac{\eta_k}{2}\sum_{m\in\mathcal{M}}\sum_{y'\in \mathcal{N}^{(k)}_m}\frac{{{D}}^{(k)}_{y'}e^{(k)}_{y'}}{{D}^{(k)}}\right) \underbrace{\left\Vert
    \nabla{F^{(k)}(\mathbf{w}^{(k)})}-
   \sum_{m\in\mathcal{M}}
   \sum_{y\in \mathcal{N}_m^{(k)}}\frac{{D}^{(k)}_y}{{D}^{(k)} e^{(k)}_y} \sum_{e=1}^{e^{(k)}_y}  {\nabla  F^{(k)}_y(\mathbf{w}^{(k),e-1}_{y})}
    \right\Vert^2}_{(a)}\\&
   +\frac{3}{2}{\eta_k}\sum_{m\in\mathcal{M}}\sum_{y'\in \mathcal{N}^{(k)}_m}\frac{{{D}}^{(k)}_{y'}e^{(k)}_{y'}}{{D}^{(k)}} \mathbb{E}_k\left[\left\Vert \bm{c}_{m_k}^{(k)}\right\Vert^2\right]+2{\beta \eta^2_{k}} 
    \left(\sum_{m\in\mathcal{M}}\sum_{y'\in \mathcal{N}^{(k)}_m}\frac{{D}^{(k)}_{y'}e^{(k)}_{y'}}{{D}^{(k)} }\right)^2 \sum_{m\in\mathcal{M}}
   \sum_{y\in \mathcal{N}_m^{(k)}}\left(\frac{{D}^{(k)}_y}{{D}^{(k)} \sqrt{e^{(k)}_y}}\right)^2
\left(1-\frac{{B}^{(k)}_y}{{D}^{(k)}_y}\right) \frac{({\sigma}_{y}^{(k)})^2}{{B}^{(k)}_y} \Theta^2_y. \hspace{-5mm}
    \end{aligned}
\end{equation}
We next aim to bound term $(a)$ in~\eqref{eq:proof44} as follows:
\begin{equation}\label{eq:proof45}
    \begin{aligned}
    &\left\Vert
    \nabla{F^{(k)}(\mathbf{w}^{(k)})}-
   \sum_{m\in\mathcal{M}}
   \sum_{y\in \mathcal{N}_m^{(k)}}\frac{{D}^{(k)}_y}{{D}^{(k)} e^{(k)}_y} \sum_{e=1}^{e^{(k)}_y}  {\nabla  F^{(k)}_y(\mathbf{w}^{(k),e-1}_{y})}
    \right\Vert^2\\&=\left\Vert
  \sum_{m\in\mathcal{M}}
   \sum_{y\in \mathcal{N}_m^{(k)}}\frac{{D}^{(k)}_y}{{D}^{(k)}  } \nabla{F^{(k)}_y(\mathbf{w}^{(k)})}-
   \sum_{m\in\mathcal{M}}
   \sum_{y\in \mathcal{N}_m^{(k)}}\frac{{D}^{(k)}_y}{{D}^{(k)} e^{(k)}_y} \sum_{e=1}^{e^{(k)}_y}  {\nabla  F^{(k)}_y(\mathbf{w}^{(k),e-1}_{y})}
    \right\Vert^2
    \\&{\leq}   \sum_{m\in\mathcal{M}}
   \sum_{y\in \mathcal{N}_m^{(k)}}\frac{{D}^{(k)}_y}{{D}^{(k)} }\mathbb{E}_k\left[\left\Vert
    \nabla{F^{(k)}_y(\mathbf{w}^{(k)})}-
    \frac{1}{e^{(k)}_y} \sum_{e=1}^{e^{(k)}_y}  {\nabla  F^{(k)}_y(\mathbf{w}^{(k),e-1}_{y})}
    \right\Vert^2\right]
    \\& 
    {\leq} \sum_{m\in\mathcal{M}}
   \sum_{y\in \mathcal{N}_m^{(k)}}\frac{{D}^{(k)}_y}{D^{(k)} e_y^{(k)}}\sum_{e=1}^{e^{(k)}_y} \mathbb{E}_k\left[\left\Vert
   \nabla{F^{(k)}_y(\mathbf{w}^{(k)})}-
     {\nabla  F^{(k)}_y(\mathbf{w}^{(k),e-1}_{y})}
    \right\Vert^2\right]
    \leq \beta^2\sum_{m\in\mathcal{M}}
   \sum_{y\in \mathcal{N}_m^{(k)}}\frac{{D}^{(k)}_y}{D^{(k)} e_y^{(k)}}\sum_{e=1}^{e^{(k)}_y} \mathbb{E}_k\left[\left\Vert
   \mathbf{w}^{(k)}-
     \mathbf{w}^{(k),e-1}_{y}
    \right\Vert^2\right],\hspace{-10mm}
    \end{aligned}
\end{equation}
where we used Jensens's inequality and $\beta$-smoothness of the loss functions.
    
    We then bound the last term in the above bound as follows:
  \begin{align}
    &
    \mathbb{E}_k\left[\left\Vert
   \mathbf{w}^{(k)}-
     \mathbf{w}^{(k),e-1}_{y}
    \right\Vert^2\right]{=}\eta_k^2  \mathbb{E}_k\left[\left\Vert
   \sum_{e'=1}^{e-1} \sum_{d\in \mathcal{B}^{(k),e'}_{y}} \hspace{-1mm} {\frac{\nabla  f(\mathbf{w}^{(k),e'-1}_{y},d)}{{B}^{(k)}_{y}}}
    \right\Vert^2 \right]
    \nonumber\\&= \eta_k^2 \mathbb{E}_k\vast[\Bigg\Vert
    \sum_{e'=1}^{e-1} \Big(\sum_{d\in \mathcal{B}^{(k),e'}_{y}} \hspace{-1mm} {\frac{\nabla  f(\mathbf{w}^{(k),e'-1}_{y},d)}{{B}^{(k)}_{y}}}
-\nabla  F^{(k)}_y(\mathbf{w}^{(k),e'-1}_{y})+\nabla  F^{(k)}_y(\mathbf{w}^{(k),e'-1}_{y})
    \Big)\Bigg\Vert^2 \vast]
    \nonumber \\& {\leq }2 \eta_k^2  \mathbb{E}_k\left[\Bigg\Vert
   \sum_{e'=1}^{e-1} \sum_{d\in \mathcal{B}^{(k),e'}_{y}} \hspace{-1mm} {\frac{\nabla  f(\mathbf{w}^{(k),e'-1}_{y},d)}{{B}^{(k)}_{y}}}
-\nabla  F^{(k)}_y(\mathbf{w}^{(k),e'-1}_{y})\Bigg\Vert^2\right]+2 \eta_k^2 \mathbb{E}_k\left[\Bigg\Vert \sum_{e'=1}^{e-1} \nabla  F^{(k)}_y(\mathbf{w}^{(k),e'-1}_{y}) \Bigg\Vert^2\right]
   \nonumber \\& \nonumber 
  {=} {2 \eta_k^2 \sum_{e'=1}^{e-1}\mathbb{E}_k\left[\Bigg\Vert
   \sum_{d\in \mathcal{B}^{(k),e'}_{y}} \hspace{-1mm} {\frac{\nabla  f(\mathbf{w}^{(k),e'-1}_{y},d)}{{B}^{(k)}_{y}}}
-\nabla  F^{(k)}_y(\mathbf{w}^{(k),e'-1}_{y})\Bigg\Vert^2\right]}+{2 \eta_k^2 \mathbb{E}_k\left[\Bigg\Vert  \sum_{e'=1}^{e-1}\nabla  F^{(k)}_y(\mathbf{w}^{(k),e'-1}_{y}) 
    \Bigg\Vert^2\right]}
    \\& \leq 4 (e-1)\eta_k^2 \left(1-\frac{{B}^{(k)}_y}{{D}^{(k)}_y}\right) \frac{({\sigma}_{y}^{(k)})^2}{{B}^{(k)}_y} \Theta^2_y +\underbrace{2 \eta_k^2 \mathbb{E}_k\left[\Bigg\Vert  \sum_{e'=1}^{e-1}\nabla  F^{(k)}_y(\mathbf{w}^{(k),e'-1}_{y}) 
    \Bigg\Vert^2\right]}_{(a)},\label{eq:proof46}
    \end{align}
where  we used  Cauchy–Schwarz inequality, the fact that noise of SGD  is  zero mean, and Lemma~\ref{lemma:SGDnoiseExp}. We then bound term $(a)$ in~\eqref{eq:proof46} as follows:
\begin{align}\label{eq:B2}
   &2 \eta_k^2 \mathbb{E}_k\left[\Bigg\Vert  \sum_{e'=1}^{e-1}\nabla  F^{(k)}_y(\mathbf{w}^{(k),e'-1}_{y}) 
    \Bigg\Vert^2\right] \leq  2 \eta_k^2 (e-1)\sum_{e'=1}^{e-1}\mathbb{E}_k\left[\Bigg\Vert \nabla  F^{(k)}_y(\mathbf{w}^{(k),e'-1}_{y}) - \nabla F^{(k)}_y(\mathbf{w}^{(k)})+ \nabla F^{(k)}_y(\mathbf{w}^{(k)})
    \Bigg\Vert^2 \right]
   \nonumber \\
    &{\leq} 4 \eta_k^2 (e-1)\sum_{e'=1}^{e-1}\mathbb{E}_k\left[\Bigg\Vert \nabla  F^{(k)}_y(\mathbf{w}^{(k),e'-1}_{y}) - \nabla F^{(k)}_y(\mathbf{w}^{(k)})\Bigg\Vert^2\right] + 4 \eta_k^2 (e-1)\sum_{e'=1}^{e-1} \Bigg\Vert\nabla F^{(k)}_y(\mathbf{w}^{(k)})
    \Bigg\Vert^2 
    \nonumber \\&
    \leq 4 \eta_k^2 (e-1)\sum_{e'=1}^{e-1}\mathbb{E}_k\left[\Bigg\Vert \nabla  F^{(k)}_y(\mathbf{w}^{(k),e'-1}_{y}) - \nabla F^{(k)}_y(\mathbf{w}^{(k)})\Bigg\Vert^2\right] + 4 \eta_k^2 (e-1)\sum_{e'=1}^{e-1} \Bigg\Vert\nabla F^{(k)}_y(\mathbf{w}^{(k)})
    \Bigg\Vert^2 
    \nonumber \\&
    \leq 4\eta_k^2\beta^2 (e-1) \sum_{e'=1}^{e-1}\mathbb{E}_k\left[\Bigg\Vert \mathbf{w}^{(k),e'-1}_{y} - \mathbf{w}^{(k)}\Bigg\Vert^2\right]+ 4 \eta_k^2 (e-1)\sum_{e'=1}^{e-1} \Bigg\Vert\nabla F^{(k)}_y(\mathbf{w}^{(k)})
    \Bigg\Vert^2 ,
\end{align}
where we used Cauchy-Schwarz inequality repeatedly and applied the $\beta$-smoothness of the loss function. Replacing the above expression in~\eqref{eq:proof46} yields 
\begin{align}
  \mathbb{E}_k\left[  \left\Vert
  \mathbf{w}^{(k)}-
     \mathbf{w}^{(k),e-1}_{y}
    \right\Vert^2\right] \leq& 4 (e-1)\eta_k^2 \left(1-\frac{{B}^{(k)}_y}{{D}^{(k)}_y}\right) \frac{({\sigma}_{y}^{(k)})^2}{{B}^{(k)}_y} \Theta^2_y\nonumber \\
     & + 4\eta_k^2\beta^2 (e-1) \sum_{e'=1}^{e-1}\mathbb{E}_k\left[\Bigg\Vert \mathbf{w}^{(k),e'-1}_{y} - \mathbf{w}^{(k)}\Bigg\Vert^2\right]+ 4 \eta_k^2 (e-1)\sum_{e'=1}^{e-1} \Bigg\Vert\nabla F^{(k)}_y(\mathbf{w}^{(k)})
    \Bigg\Vert^2, 
\end{align}
which implies:
\begin{align}
   &\sum_{e=1}^{e_y^{(k)}} \mathbb{E}_k\left[\left\Vert
   \mathbf{w}^{(k)}-
     \mathbf{w}^{(k),e-1}_{y}
    \right\Vert^2\right] \leq 4  \eta_k^2\sum_{e=1}^{e_y^{(k)}}  (e-1) \left(1-\frac{{B}^{(k)}_y}{{D}^{(k)}_y}\right) \frac{({\sigma}_{y}^{(k)})^2}{{B}^{(k)}_y}\Theta_y^2 \nonumber \\
     & + 4\eta_k^2\beta^2 \sum_{e=1}^{e_y^{(k)}}(e-1) \sum_{e'=1}^{e-1}\mathbb{E}_k\left[\Bigg\Vert \mathbf{w}^{(k),e'-1}_{y} - \mathbf{w}^{(k)}\Bigg\Vert^2\right]+ 4 \eta_k^2 \sum_{e=1}^{e_y^{(k)}}(e-1)\sum_{e'=1}^{e-1} \Bigg\Vert\nabla F^{(k)}_y(\mathbf{w}^{(k)})
    \Bigg\Vert^2
   \nonumber \\&\leq
    4  \eta_k^2 \left(e_y^{(k)}\right)\left(e_y^{(k)}-1\right)\left(1-\frac{{B}^{(k)}_y}{{D}^{(k)}_y}\right) \frac{({\sigma}_{y}^{(k)})^2}{{B}^{(k)}_y}\Theta_y^2 \nonumber \\
     & + 4\eta_k^2\beta^2 \left(e_y^{(k)}\right)\left(e_y^{(k)}-1\right) \sum_{e=1}^{e_y^{(k)}}\mathbb{E}_k\left[\Bigg\Vert \mathbf{w}^{(k),e-1}_{y} - \mathbf{w}^{(k)}\Bigg\Vert^2\right]+ 4 \eta_k^2 \left(e_y^{(k)}\right)\left(e_y^{(k)}-1\right) \sum_{e=1}^{e_y^{(k)}} \Bigg\Vert\nabla F^{(k)}_y(\mathbf{w}^{(k)})
    \Bigg\Vert^2,
\end{align}
Assuming $\eta_k \leq \left(2\beta\sqrt{e_y^{(k)}(e_y^{(k)}-1)}\right)^{-1},\forall n$, we get
\begin{align}
    \nonumber\sum_{e=1}^{e_y^{(k)}} \mathbb{E}_k\left[\left\Vert
   \mathbf{w}^{(k)}-
     \mathbf{w}^{(k),e-1}_{y}
    \right\Vert^2\right] \leq& 
    \frac{4  \eta_k^2 \left(e_y^{(k)}\right)\left(e_y^{(k)}-1\right)\left(1-\frac{{B}^{(k)}_y}{{D}^{(k)}_y}\right) \frac{({\sigma}_{y}^{(k)})^2}{{B}^{(k)}_y}\Theta_y^2}{1- 4\eta_k^2\beta^2 e_y^{(k)}\left(e_y^{(k)}-1\right)}\\& + \frac{4 \eta_k^2 \left(e_y^{(k)}\right)^2\left(e_y^{(k)}-1\right)}{1- 4\eta_k^2\beta^2 e_y^{(k)}\left(e_y^{(k)}-1\right)}  \Bigg\Vert\nabla F^{(k)}_y(\mathbf{w}^{(k)})
    \Bigg\Vert^2.
\end{align}
Replacing the above expression in~\eqref{eq:proof45}, we get:
\begin{equation}
    \begin{aligned}
    &\left\Vert
    \nabla{F^{(k)}(\mathbf{w}^{(k)})}-
   \sum_{m\in\mathcal{M}}
   \sum_{y\in \mathcal{N}_m^{(k)}}\frac{{D}^{(k)}_y}{{D}^{(k)} e^{(k)}_y} \sum_{e=1}^{e^{(k)}_y}  {\nabla  F^{(k)}_y(\mathbf{w}^{(k),e-1}_{y})}
    \right\Vert^2
    \leq\beta^2\sum_{m\in\mathcal{M}}
   \sum_{y\in \mathcal{N}_m^{(k)}}\frac{{D}^{(k)}_y}{D^{(k)} e_y^{(k)}}\sum_{e=1}^{e^{(k)}_y} \mathbb{E}_k\left[\left\Vert
   \mathbf{w}^{(k)}-
     \mathbf{w}^{(k),e-1}_{y}
    \right\Vert^2\right]
    \\&
    \leq \beta^2\sum_{m\in\mathcal{M}}
   \sum_{y\in \mathcal{N}_m^{(k)}}\frac{{D}^{(k)}_y}{D^{(k)} e_y^{(k)}}\vast[   \frac{4  \eta_k^2 \left(e_y^{(k)}\right)\left(e_y^{(k)}-1\right)\left(1-\frac{{B}^{(k)}_y}{{D}^{(k)}_y}\right) \frac{({\sigma}_{y}^{(k)})^2}{{B}^{(k)}_y}\Theta_y^2}{1- 4\eta_k^2\beta^2 e_y^{(k)}\left(e_y^{(k)}-1\right)}\\& + \frac{4 \eta_k^2 \left(e_y^{(k)}\right)^2\left(e_y^{(k)}-1\right)}{1- 4\eta_k^2\beta^2 e_y^{(k)}\left(e_y^{(k)}-1\right)}  \Bigg\Vert\nabla F^{(k)}_y(\mathbf{w}^{(k)})
    \Bigg\Vert^2\vast]
    \\&=\sum_{m\in\mathcal{M}}
   \sum_{y\in \mathcal{N}_m^{(k)}}\frac{{D}^{(k)}_y}{D^{(k)}e_y^{(k)}}  \frac{4 \beta^2 \eta_k^2 e_y^{(k)}\left(e_y^{(k)}-1\right)\left(1-\frac{{B}^{(k)}_y}{{D}^{(k)}_y}\right) \frac{({\sigma}_{y}^{(k)})^2}{{B}^{(k)}_y}\Theta_y^2}{1- 4\eta_k^2\beta^2 e_y^{(k)}\left(e_y^{(k)}-1\right)}
   \\&+ \sum_{m\in\mathcal{M}}
   \sum_{y\in \mathcal{N}_m^{(k)}}\frac{{D}^{(k)}_y}{D^{(k)}}\frac{4 \beta^2\eta_k^2 e_y^{(k)}\left(e_y^{(k)}-1\right)}{1- 4\eta_k^2\beta^2 e_y^{(k)}\left(e_y^{(k)}-1\right)}  \Bigg\Vert\nabla F^{(k)}_y(\mathbf{w}^{(k)})
    \Bigg\Vert^2
    \\&
    \leq \sum_{m\in\mathcal{M}}
   \sum_{y\in \mathcal{N}_m^{(k)}}\frac{{D}^{(k)}_y}{D^{(k)}e_y^{(k)}}  \frac{4 \beta^2 \eta_k^2 e_y^{(k)}\left(e_y^{(k)}-1\right)\left(1-\frac{{B}^{(k)}_y}{{D}^{(k)}_y}\right) \frac{({\sigma}_{y}^{(k)})^2}{{B}^{(k)}_y}\Theta_y^2}{1- 4\eta_k^2\beta^2 e_y^{(k)}\left(e_y^{(k)}-1\right)}
   \\&+\frac{4 \beta^2\eta_k^2 e^{(k)}_{\mathsf{max}}\left(e^{(k)}_{\mathsf{max}}-1\right)}{1- 4\eta_k^2\beta^2 e^{(k)}_{\mathsf{max}}\left(e^{(k)}_{\mathsf{max}}-1\right)}  \sum_{m\in\mathcal{M}}
   \sum_{y\in \mathcal{N}_m^{(k)}}\frac{{D}^{(k)}_y}{D^{(k)}} \Bigg\Vert\nabla F^{(k)}_y(\mathbf{w}^{(k)})
    \Bigg\Vert^2\\&
    \leq \sum_{m\in\mathcal{M}}
   \sum_{y\in \mathcal{N}_m^{(k)}}\frac{{D}^{(k)}_y}{D^{(k)}}  \frac{4 \beta^2 \eta_k^2 \left(e_y^{(k)}-1\right)\left(1-\frac{{B}^{(k)}_y}{{D}^{(k)}_y}\right) \frac{({\sigma}_{y}^{(k)})^2}{{B}^{(k)}_y}\Theta_y^2}{1- 4\eta_k^2\beta^2 e_y^{(k)}\left(e_y^{(k)}-1\right)}
   \\&+\frac{4 \beta^2\eta_k^2 e^{(k)}_{\mathsf{max}}\left(e^{(k)}_{\mathsf{max}}-1\right)}{1- 4\eta_k^2\beta^2 e^{(k)}_{\mathsf{max}}\left(e^{(k)}_{\mathsf{max}}-1\right)}  \left( \zeta_1  \left\Vert \nabla F^{(k)}(\mathbf{w}^{(k)})
    \right\Vert^2 + \zeta_2 \right),
    \end{aligned}
\end{equation}
where we used
    $e^{(k)}_{\mathsf{max}}=\max_{n\in\mathcal{N}}\{e^{(k)}_y\}$, and the bounded dissimilarity of local gradients (Assumption~\ref{Assup:Dissimilarity}).
   Replacing the above expression in~\eqref{eq:proof44} gives us
   \begin{equation}
\begin{aligned}
 &\mathbb E_k \left[F^{(k)}(\mathbf{w}^{(k+1)})\right] \leq F^{(k)}(\mathbf{w}^{(k)}) -
 \left(\frac{\eta_k}{4}\sum_{m\in\mathcal{M}}\sum_{y'\in \mathcal{N}^{(k)}_m}\frac{{{D}}^{(k)}_{y'}e^{(k)}_{y'}}{{D}^{(k)}}\right)\left\Vert\nabla{F^{(k)}(\mathbf{w}^{(k)})}\right\Vert^2  \\&
   +\left(\frac{\eta_k}{2}\sum_{m\in\mathcal{M}}\sum_{y'\in \mathcal{N}^{(k)}_m}\frac{{{D}}^{(k)}_{y'}e^{(k)}_{y'}}{{D}^{(k)}}\right) \vast[ \sum_{m\in\mathcal{M}}
   \sum_{y\in \mathcal{N}_m^{(k)}}\frac{{D}^{(k)}_y}{D^{(k)}}  \frac{4 \beta^2 \eta_k^2 \left(e_y^{(k)}-1\right)\left(1-\frac{{B}^{(k)}_y}{{D}^{(k)}_y}\right) \frac{({\sigma}_{y}^{(k)})^2}{{B}^{(k)}_y}\Theta_y^2}{1- 4\eta_k^2\beta^2 e_y^{(k)}\left(e_y^{(k)}-1\right)}
   \\&+\frac{4 \beta^2\eta_k^2 e^{(k)}_{\mathsf{max}}\left(e^{(k)}_{\mathsf{max}}-1\right)}{1- 4\eta_k^2\beta^2 e^{(k)}_{\mathsf{max}}\left(e^{(k)}_{\mathsf{max}}-1\right)}  \left( \zeta_1  \left\Vert \nabla F^{(k)}(\mathbf{w}^{(k)})
    \right\Vert^2 + \zeta_2 \right)\vast]\\&
   +\frac{3}{2}{\eta_k}\sum_{m\in\mathcal{M}}\sum_{y'\in \mathcal{N}^{(k)}_m}\frac{{{D}}^{(k)}_{y'}e^{(k)}_{y'}}{{D}^{(k)}} \mathbb{E}_k\left[\left\Vert \bm{c}_{m_k}^{(k)}\right\Vert^2\right]+2{\beta \eta^2_{k}} 
    \left(\sum_{m\in\mathcal{M}}\sum_{y'\in \mathcal{N}^{(k)}_m}\frac{{D}^{(k)}_{y'}e^{(k)}_{y'}}{{D}^{(k)} }\right)^2 \sum_{m\in\mathcal{M}}
   \sum_{y\in \mathcal{N}_m^{(k)}}\left(\frac{{D}^{(k)}_y}{{D}^{(k)} \sqrt{e^{(k)}_y}}\right)^2
\left(1-\frac{{B}^{(k)}_y}{{D}^{(k)}_y}\right) \frac{({\sigma}_{y}^{(k)})^2}{{B}^{(k)}_y} \Theta^2_y, \hspace{-5mm}
    \end{aligned}
\end{equation}
which implies
 \begin{align}\label{eq:thirdlossIneq}
\hspace{-14mm}
 \mathbb E_k &\left[F^{(k)} (\mathbf{w}^{(k+1)})\right] \leq F^{(k)}(\mathbf{w}^{(k)}) + \frac{\eta_{k}}{2}\sum_{n\in \mathcal{N}}\frac{{D}^{(k)}_{y}e^{(k)}_{y}}{{D}^{(k)} } \left(\frac{4 \eta_k^2\beta^2 \left(e_{\mathsf{max}}^{(k)}\right)\left(e_{\mathsf{max}}^{(k)}-1\right)}{1- 4\eta_k^2\beta^2 e_{\mathsf{max}}^{(k)}\left(e_{\mathsf{max}}^{(k)}-1\right)}\zeta_1-\frac{1}{2} \right)
 \left\Vert\nabla{F^{(k)}(\mathbf{w}^{(k)})}\right\Vert^2 \nonumber\\&\hspace{-8mm}
    \nonumber + \frac{\eta_{k}}{2}\sum_{m\in\mathcal{M}}\sum_{y'\in \mathcal{N}^{(k)}_m}\frac{{D}^{(k)}_{y'}e^{(k)}_{y'}}{{D}^{(k)} }  \vast(\sum_{m\in\mathcal{M}}
   \sum_{y\in \mathcal{N}_m^{(k)}}\frac{{D}^{(k)}_y}{D^{(k)}}  \frac{4 \beta^2 \eta_k^2 \left(e_y^{(k)}-1\right)\left(1-\frac{{B}^{(k)}_y}{{D}^{(k)}_y}\right) \frac{({\sigma}_{y}^{(k)})^2}{{B}^{(k)}_y}\Theta_y^2}{1- 4\eta_k^2\beta^2 e_{\mathsf{max}}^{(k)}\left(e_{\mathsf{max}}^{(k)}-1\right)}
     + \frac{4\zeta_2 \eta_k^2\beta^2 \left(e_{\mathsf{max}}^{(k)}\right)\left(e_{\mathsf{max}}^{(k)}-1\right)}{1- 4\eta_k^2\beta^2 e_{\mathsf{max}}^{(k)}\left(e_{\mathsf{max}}^{(k)}-1\right)}
     \vast)
     \nonumber\\&\hspace{-8mm}
     +\frac{3}{2}{\eta_k}\sum_{m\in\mathcal{M}}\sum_{y'\in \mathcal{N}^{(k)}_m}\frac{{{D}}^{(k)}_{y'}e^{(k)}_{y'}}{{D}^{(k)}} \mathbb{E}_k\left[\left\Vert \bm{c}_{m_k}^{(k)}\right\Vert^2\right]+2{\beta \eta^2_{k}} 
    \left(\sum_{m\in\mathcal{M}}\sum_{y'\in \mathcal{N}^{(k)}_m}\frac{{D}^{(k)}_{y'}e^{(k)}_{y'}}{{D}^{(k)} }\right)^2 \sum_{m\in\mathcal{M}}
   \sum_{y\in \mathcal{N}_m^{(k)}}\left(\frac{{D}^{(k)}_y}{{D}^{(k)} \sqrt{e^{(k)}_y}}\right)^2
\left(1-\frac{{B}^{(k)}_y}{{D}^{(k)}_y}\right) \frac{({\sigma}_{y}^{(k)})^2}{{B}^{(k)}_y} \Theta^2_y.\hspace{-14mm}
    \end{align}
    
    Assuming $\frac{4 \eta_k^2\beta^2 \left(e_{\mathsf{max}}^{(k)}\right)\left(e_{\mathsf{max}}^{(k)}-1\right)}{1- 4\eta_k^2\beta^2 e_{\mathsf{max}}^{(k)}\left(e_{\mathsf{max}}^{(k)}-1\right)}\zeta_1 \leq \frac{1}{4} $, implying $\frac{1}{1- 4\eta_k^2\beta^2 e_{\mathsf{max}}^{(k)}\left(e_{\mathsf{max}}^{(k)}-1\right)} \leq \frac{1+4\zeta_1}{\zeta_1}\leq 5$, which can be obtained under the step size choice of $\eta_k \leq \frac{1}{2\beta} \sqrt{ \frac{1}{(4\zeta_1+1)\left( e_{\mathsf{max}}^{(k)}\left(e_{\mathsf{max}}^{(k)}-1\right)\right)}}$,  we get
    \begin{align}\label{eq:thirdlossIneq}
 \mathbb E_k &\left[F^{(k)} (\mathbf{w}^{(k+1)})\right] \leq F^{(k)}(\mathbf{w}^{(k)}) -\frac{1}{8}{\eta_{k}}\left\Vert\nabla{F^{(k)}(\mathbf{w}^{(k)})}\right\Vert^2\sum_{n\in \mathcal{N}}\frac{{D}^{(k)}_{y}e^{(k)}_{y}}{{D}^{(k)} }
  \nonumber\\&
    \nonumber + \frac{\eta_{k}}{2}\sum_{m\in\mathcal{M}}\sum_{y'\in \mathcal{N}^{(k)}_m}\frac{{D}^{(k)}_{y'}e^{(k)}_{y'}}{{D}^{(k)} }  \vast(\sum_{m\in\mathcal{M}}
   \sum_{y\in \mathcal{N}_m^{(k)}}\frac{{D}^{(k)}_y}{D^{(k)}} {20 \beta^2 \eta_k^2 \left(e_y^{(k)}-1\right)\left(1-\frac{{B}^{(k)}_y}{{D}^{(k)}_y}\right) \frac{({\sigma}_{y}^{(k)})^2}{{B}^{(k)}_y}\Theta_y^2}
     + {20\zeta_2 \eta_k^2\beta^2 \left(e_{\mathsf{max}}^{(k)}\right)\left(e_{\mathsf{max}}^{(k)}-1\right)}
     \vast)
     \nonumber\\&
     +\frac{3}{2}{\eta_k}\sum_{m\in\mathcal{M}}\sum_{y'\in \mathcal{N}^{(k)}_m}\frac{{{D}}^{(k)}_{y'}e^{(k)}_{y'}}{{D}^{(k)}} \mathbb{E}_k\left[\left\Vert \bm{c}_{m_k}^{(k)}\right\Vert^2\right]\nonumber \\&+2{\beta \eta^2_{k}} 
    \left(\sum_{m\in\mathcal{M}}\sum_{y'\in \mathcal{N}^{(k)}_m}\frac{{D}^{(k)}_{y'}e^{(k)}_{y'}}{{D}^{(k)} }\right)^2 \sum_{m\in\mathcal{M}}
   \sum_{y\in \mathcal{N}_m^{(k)}}\left(\frac{{D}^{(k)}_y}{{D}^{(k)} \sqrt{e^{(k)}_y}}\right)^2
\left(1-\frac{{B}^{(k)}_y}{{D}^{(k)}_y}\right) \frac{({\sigma}_{y}^{(k)})^2}{{B}^{(k)}_y} \Theta^2_y.\hspace{-14mm}
    \end{align}
    Rearranging the terms of the above bound gives the following  bound on the norm of gradient:
 \begin{align}\label{eq:final_one_mtep55}
 &\left\Vert\nabla{F^{(k)}(\mathbf{w}^{(k)})}\right\Vert^2  \leq 
      \frac{F^{(k)}(\mathbf{w}^{(k)}) - \mathbb E_k \left[F^{(k)} (\mathbf{w}^{(k+1)})\right]}{\eta_k\sum_{m\in\mathcal{M}}\sum_{y'\in \mathcal{N}^{(k)}_m}{{D}^{(k)}_{y'}e^{(k)}_{y'}}/(8D^{(k)})}
  \nonumber\\&\hspace{-6mm}+
   \sum_{m\in\mathcal{M}}
   \sum_{y\in \mathcal{N}_m^{(k)}}\frac{{D}^{(k)}_y}{D^{(k)}} {80 \beta^2 \eta_k^2 \left(e_y^{(k)}-1\right)\left(1-\frac{{B}^{(k)}_y}{{D}^{(k)}_y}\right) \frac{({\sigma}_{y}^{(k)})^2}{{B}^{(k)}_y}\Theta_y^2}
     + {80\zeta_2 \eta_k^2\beta^2 \left(e_{\mathsf{max}}^{(k)}\right)\left(e_{\mathsf{max}}^{(k)}-1\right)}
     \nonumber\\&\hspace{-8mm}
     +24\mathbb{E}_k\left[\left\Vert \bm{c}_{m_k}^{(k)}\right\Vert^2\right]+16{\beta \eta_{k}} 
    \left(\sum_{m\in\mathcal{M}}\sum_{y'\in \mathcal{N}^{(k)}_m}\frac{{D}^{(k)}_{y'}e^{(k)}_{y'}}{{D}^{(k)} }\right) \sum_{m\in\mathcal{M}}
   \sum_{y\in \mathcal{N}_m^{(k)}}\left(\frac{{D}^{(k)}_y}{{D}^{(k)} \sqrt{e^{(k)}_y}}\right)^2
\left(1-\frac{{B}^{(k)}_y}{{D}^{(k)}_y}\right) \frac{({\sigma}_{y}^{(k)})^2}{{B}^{(k)}_y} \Theta^2_y.
 \end{align}
 

 
Note that $F^{(k)}(\mathbf{w}^{(k)})$ in the first term on the right hand side of the above bound is in fact the loss under which the global aggregation started from, i.e., $F^{(k)}(\mathbf{w}^{(k)} |, {\mathcal{D}}^{(k)})$ and  $F^{(k)}(\mathbf{w}^{(k+1)})$ is the loss under which the global aggregation concludes, i.e., $F^{(k)}(\mathbf{w}^{(k+1)} | {\mathcal{D}}^{(k)})$. We next find the connection between 
$F^{(k)}(\mathbf{w}^{(k)} | {\mathcal{D}}^{(k)})$ and the actual loss under which iteration $k-1$ has been concluded, i.e., $F^{(k-1)}(\mathbf{w}^{(k)} | {\mathcal{D}}^{(k-1)})$. Using the definition of model/concept drift in Definition~\ref{def:conceptdrift}, since the global loss is defined as the loss per data point we have:
\begin{align}\label{eq:driftLoss}
    F^{(k)}(\mathbf{w}^{(k)} | {\mathcal{D}}^{(k)})&
    = \sum_{m\in\mathcal{M}}\sum_{y \in \mathcal{N}^{(k)}_m} \Bigg[\frac{{D}_y^{(k)}}{{D}^{(k)}}F_{y}^{(k)}(\mathbf{w}^{(k)}|{\mathcal{D}}_y^{(k)})
   -
   \frac{{D}_y^{(k-1)}}{{D}^{(k-1)}}
   F^{(k-1)}_{y}(\mathbf{w}^{(k)}|{{D}}_y^{(k-1)})+\frac{{D}_y^{(k-1)}}{{D}^{(k-1)}}
   F^{(k-1)}_{y}(\mathbf{w}^{(k)}|{{D}}_y^{(k-1)})\Bigg]
   \nonumber 
   \\&\leq \sum_{m\in\mathcal{M}}\sum_{y \in \mathcal{N}^{(k)}_m} \Delta_y^{(k)} + \sum_{m\in\mathcal{M}}\sum_{y \in \mathcal{N}^{(k)}_m}
   \frac{{D}_y^{(k-1)}}{{D}^{(k-1)}}
   F^{(k-1)}_{y}(\mathbf{w}^{(k)}|{{D}}_y^{(k-1)})\nonumber
   \\&=\sum_{m\in\mathcal{M}}\sum_{y \in \mathcal{N}^{(k)}_m} \Delta_y^{(k)}+ F^{(k-1)}(\mathbf{w}^{(k)}|{\mathcal{D}}^{(k-1)})= \Delta^{(k)} + F^{(k-1)}(\mathbf{w}^{(k)}),~k\geq 1.
   \end{align}

 Taking total expectation from both hand sides of~\eqref{eq:final_one_mtep55} and taking the summation over global aggregation index implies
 \begin{align}\label{eq:th1proof}
 &\hspace{-16mm}\frac{1}{K} \sum_{k=0}^{K-1} \mathbb{E} \left[\Vert \nabla F^{(k)}(\mathbf{w}^{({k})}) \Vert^2\right] \leq \frac{1}{K}
   \frac{F^{(0)}(\mathbf{w}^{(0)}) - F^{(0)}(\mathbf{w}^{(1)})}{\eta_k\sum_{m\in\mathcal{M}}\sum_{y'\in \mathcal{N}^{(k)}_m}{{D}^{(k)}_{y'}e^{(k)}_{y'}}/(8D^{(k)})}+ \frac{1}{K}\sum_{k=1}^{K-1}
   \frac{F^{(k-1)}(\mathbf{w}^{(k)}) - F^{(k)}(\mathbf{w}^{(k+1)})}{\eta_k\sum_{m\in\mathcal{M}}\sum_{y'\in \mathcal{N}^{(k)}_m}{{D}^{(k)}_{y'}e^{(k)}_{y'}}/(8D^{(k)})}
  \nonumber  \\&
   +\frac{1}{K}\sum_{k=1}^{K-1}\frac{  \Delta^{(k)}}{\eta_k\sum_{m\in\mathcal{M}}\sum_{y'\in \mathcal{N}^{(k)}_m}{{D}^{(k)}_{y'}e^{(k)}_{y'}}/(8D^{(k)})}
  \nonumber\\&\hspace{-6mm}+\frac{1}{K}\sum_{k=0}^{K-1}
   \sum_{m\in\mathcal{M}}
   \sum_{y\in \mathcal{N}_m^{(k)}}\frac{{D}^{(k)}_y}{D^{(k)}} {80 \beta^2 \eta_k^2 \left(e_y^{(k)}-1\right)\left(1-\frac{{B}^{(k)}_y}{{D}^{(k)}_y}\right) \frac{({\sigma}_{y}^{(k)})^2}{{B}^{(k)}_y}\Theta_y^2}
     +\frac{1}{K}\sum_{k=0}^{K-1} {80\zeta_2 \eta_k^2\beta^2 \left(e_{\mathsf{max}}^{(k)}\right)\left(e_{\mathsf{max}}^{(k)}-1\right)}
     \nonumber\\&\hspace{-8mm}
     +\frac{1}{K}\sum_{k=0}^{K-1} 24\mathbb{E}_k\left[\left\Vert \bm{c}_{m_k}^{(k)}\right\Vert^2\right]+\frac{1}{K}\sum_{k=0}^{K-1}16{\beta \eta_{k}} 
    \left(\sum_{m\in\mathcal{M}}\sum_{y'\in \mathcal{N}^{(k)}_m}\frac{{D}^{(k)}_{y'}e^{(k)}_{y'}}{{D}^{(k)} }\right) \sum_{m\in\mathcal{M}}
   \sum_{y\in \mathcal{N}_m^{(k)}}\left(\frac{{D}^{(k)}_y}{{D}^{(k)} \sqrt{e^{(k)}_y}}\right)^2
\left(1-\frac{{B}^{(k)}_y}{{D}^{(k)}_y}\right) \frac{({\sigma}_{y}^{(k)})^2}{{B}^{(k)}_y} \Theta^2_y.\hspace{-6mm}
 \end{align}
Assuming the choice of step size $\eta_k =\frac{\alpha}{{\sqrt{K e^{(k)}_{\mathsf{avg}}}}}$, where $e^{(k)}_{\mathsf{avg}}$ denotes the average number of local iterations across the participating devices, we get:
 \begin{align}\label{eq:th1proof58}
 &\hspace{-16mm}\frac{1}{K} \sum_{k=0}^{K-1} \mathbb{E} \left[\Vert \nabla F^{(k)}(\mathbf{w}^{({k})}) \Vert^2\right] \leq \frac{8 \sqrt{{e}^{\mathsf{max}}_{\mathsf{avg}}}}{\alpha \hat{e}^{\mathsf{min}}_{\mathsf{avg}}\sqrt{K}}
   \left({F^{(0)}(\mathbf{w}^{(0)}) - F^{{(k)}^\star}}\right)+\frac{8 \sqrt{{e}^{\mathsf{max}}_{\mathsf{avg}}}}{\alpha\hat{e}^{\mathsf{min}}_{\mathsf{avg}}\sqrt{K}}\sum_{k=1}^{K-1}{  \Delta^{(k)}}
  \nonumber\\&\hspace{-6mm}+\frac{80 \beta^2 \alpha^2}{K^2{e}^{\mathsf{min}}_{\mathsf{avg}}}\sum_{k=0}^{K-1}
   \sum_{m\in\mathcal{M}}
   \sum_{y\in \mathcal{N}_m^{(k)}}\frac{{D}^{(k)}_y}{D^{(k)}} { \left(e_y^{(k)}-1\right)\left(1-\frac{{B}^{(k)}_y}{{D}^{(k)}_y}\right) \frac{({\sigma}_{y}^{(k)})^2}{{B}^{(k)}_y}\Theta_y^2}
     +\frac{80 \beta^2 \alpha^2}{K^2{e}^{\mathsf{min}}_{\mathsf{avg}}}\sum_{k=0}^{K-1} {\zeta_2 \left(e_{\mathsf{max}}^{(k)}\right)\left(e_{\mathsf{max}}^{(k)}-1\right)}
     \nonumber\\&\hspace{-8mm}
     +\frac{1}{K}\sum_{k=0}^{K-1} 24\underbrace{\mathbb{E}_k\left[\left\Vert \bm{c}_{m_k}^{(k)}\right\Vert^2\right]}_{(a)}+\frac{16 \beta \alpha \hat{e}^{\mathsf{max}}_{\mathsf{avg}}}{K\sqrt{K}\sqrt{{e}^{\mathsf{min}}_{\mathsf{avg}}}}\sum_{k=0}^{K-1}
   \sum_{m\in\mathcal{M}}
   \sum_{y\in \mathcal{N}_m^{(k)}}\left(\frac{{D}^{(k)}_y}{{D}^{(k)} \sqrt{e^{(k)}_y}}\right)^2
\left(1-\frac{{B}^{(k)}_y}{{D}^{(k)}_y}\right) \frac{({\sigma}_{y}^{(k)})^2}{{B}^{(k)}_y} \Theta^2_y.\hspace{-6mm}
 \end{align}
 
 We next aim to bound term $(a)$ in~\eqref{eq:th1proof58}. Let $\widetilde{\nabla F}_m^{(k)}$ denote the initial local aggregated gradient at ES $m$, containing its own gradient and the gradient of its associated users, and $\widehat{\nabla F}_m^{(k)}$ denote the gradient at ES $m$ after the consensus through P2P communications concludes. 
 
 As explained in~\eqref{eq:proof26}, we have
 \begin{equation}
    {\nabla {F}}_m^{\mathsf{L},(k)}= \sum_{m\in\mathcal{M}}     {\nabla {F}}_m^{\mathsf{A},(k)} + \bm{e}^{(k)}_m.
\end{equation}
 On the other hand, the evolution of the local gradient during the consensus process can be described as
 \begin{equation}\label{eq:con2s}
      {\nabla {F}}^{\mathsf{L},(k)}= \left(\mathbf{\Lambda}^{(k)}\right)^{\varphi^{(k)}} {\nabla {F}}^{\mathsf{A},(k)},
  \end{equation}
 where ${\nabla {F}}^{\mathsf{L},(k)}\in \mathbb{R}^{d \times d}$ denotes a matrix,\footnote{$d$ denotes the size/length of the gradient vector.} with its row $m$ containing the final vector of gradient at ES $m$ (i.e.,  $ {\nabla {F}}_m^{\mathsf{L},(k)}$), and $ \nabla {F}^{\mathsf{A},(k)} \in \mathbb{R}^{d \times d}$ denotes a matrix, with its row $m$ containing the initial gradient vector at ES $m$ (i.e., ${\nabla {F}}_m^{\mathsf{A},(k)}$). In~\eqref{eq:con2s}, $\bm{\Lambda}^{(k)}=[\lambda^{(k)}_{i,j}]_{i,j\in\mathcal{M}}$ is the consensus matrix across the ES network and $\varphi^{(k)}$ denotes the rounds of P2P communications at the aggregation round $k$.
 
  Let matrix ${{\nabla {F}}^{\mathsf{P},(k)}}$ denote
 the matrix of true/perfect average of the gradient vectors across the ESs, which is given by
 \begin{equation}\label{eq:aveDefProof}
     {{\nabla {F}}^{\mathsf{P},(k)}}= \frac{\textbf{1}_{M} \textbf{1}_{M}^\top {\nabla {F}}_m^{\mathsf{A},(k)}}{M}.
 \end{equation}
 Also, let ${{\nabla {F}}^{\mathsf{P},(k)}}_m$ denote the $m$-th row of  ${{\nabla {F}}^{\mathsf{P},(k)}}$.
 Since the consensus error is zero mean \cite{xiao2004fast} across the ESs, we have $ \mathbf{1}^\top \left({\nabla {F}}^{\mathsf{L},(k)} -{\nabla {F}}^{\mathsf{A},(k)}\right)=\mathbf{0}$ implying that $(\mathbf{1} \mathbf{1}^\top)\left({\nabla {F}}^{\mathsf{L},(k)} -{\nabla {F}}^{\mathsf{A},(k)}\right)=\mathbf{0}$. Thus, we get
    \begin{equation}
  \begin{aligned}
      {\nabla {F}}^{\mathsf{L},(k)} -{{\nabla {F}}^{\mathsf{P},(k)}}&=\left(\mathbf{I}-\frac{\mathbf{1} \mathbf{1}^\top}{M}\right)\left({\nabla {F}}^{\mathsf{L},(k)} -{{\nabla {F}}^{\mathsf{P},(k)}}\right)
      =\left(\mathbf{I}-\frac{\mathbf{1} \mathbf{1}^\top}{M}\right)\left( \left(\mathbf{\Lambda}^{(k)}\right)^{\varphi^{(k)}}{\nabla {F}}^{\mathsf{A},(k)}-{{\nabla {F}}^{\mathsf{P},(k)}}\right)\\
      &=\left(\mathbf{I}-\frac{\mathbf{1} \mathbf{1}^\top}{M}\right)\left(\left(\mathbf{\Lambda}^{(k)}\right)^{\varphi^{(k)}} {\nabla {F}}^{\mathsf{A},(k)}-\left(\mathbf{\Lambda}^{(k)}\right)^{\varphi^{(k)}}{{\nabla {F}}^{\mathsf{P},(k)}}\right)
      \\&
      =\left(\left(\mathbf{\Lambda}^{(k)}\right)^{\varphi^{(k)}}-\frac{\mathbf{1} \mathbf{1}^\top}{M}\right)\left({\nabla {F}}^{\mathsf{A},(k)}-{{\nabla {F}}^{\mathsf{P},(k)}}\right),
      \end{aligned}
  \end{equation}
where the above derivations are obtained based on the following properties: (i) $\left(\mathbf{\Lambda}^{(k)}\right)^{\varphi^{(k)}} {{\nabla {F}}^{\mathsf{P},(k)}}={{\nabla {F}}^{\mathsf{P},(k)}}$, and (ii) $\frac{\mathbf{1} \mathbf{1}^\top}{M} \left(\mathbf{\Lambda}^{(k)}\right)^{\varphi^{(k)}}=\frac{\mathbf{1} \mathbf{1}^\top}{M}$.
We finally get
\begin{equation}
\begin{aligned}
    \big\Vert\bm{c}^{(k)}_m\big\Vert ^2 &\leq \textrm{trace}\left(\left({\nabla {F}}^{\mathsf{A},(k)}-{{\nabla {F}}^{\mathsf{P},(k)}}\right)^\top\left(\left(\mathbf{\Lambda}^{(k)}\right)^{\varphi^{(k)}}-\frac{\mathbf{1} \mathbf{1}^\top}{M}\right)^2\left({\nabla {F}}^{\mathsf{A},(k)}-{{\nabla {F}}^{\mathsf{P},(k)}}\right)\right)\\&
    =\textrm{trace}\left(\left({\nabla {F}}^{\mathsf{A},(k)}-{{\nabla {F}}^{\mathsf{P},(k)}}\right)^\top\left(\left(\mathbf{\Lambda}^{(k)}\right)^{\varphi^{(k)}}\left[\mathbf{I}-\frac{\mathbf{1} \mathbf{1}^\top}{M}\right]\right)^2\left({\nabla {F}}^{\mathsf{A},(k)}-{{\nabla {F}}^{\mathsf{P},(k)}}\right)\right)
    \\&
     =\textrm{trace}\left(\left({\nabla {F}}^{\mathsf{A},(k)}-{{\nabla {F}}^{\mathsf{P},(k)}}\right)^\top\left(\left(\mathbf{\Lambda}^{(k)}\right)^{\varphi^{(k)}}\left[\mathbf{I}-\frac{\mathbf{1} \mathbf{1}^\top}{M}\right]^{\varphi^{(k)}}\right)^2\left({\nabla {F}}^{\mathsf{A},(k)}-{{\nabla {F}}^{\mathsf{P},(k)}}\right)\right)
    \\&
    \leq (\lambda^{(k)})^{2\varphi^{(k)}} \sum_{m'\in\mathcal{M}} \Vert {\nabla {F}}^{\mathsf{A},(k)}_{m'} -\nabla {F}^{\mathsf{P},(k)}_{m'} \big\Vert^2
    \\&\leq  \frac{1}{M} (\lambda^{(k)})^{2\varphi^{(k)}} \sum_{m',m''\in\mathcal{M}} \Vert {\nabla {F}}^{\mathsf{A},(k)}_{m'} -\nabla {F}^{\mathsf{A},(k)}_{m''} \big\Vert^2\\&\leq  {M} (\lambda^{(k)})^{2\varphi^{(k)}} \max_{m',m''\in\mathcal{M}} \Vert {\nabla {F}}^{\mathsf{A},(k)}_{m'} -\nabla {F}^{\mathsf{A},(k)}_{m''} \big\Vert^2\leq {M} (\lambda^{(k)})^{2\varphi^{(k)}} \left(\Xi^{(k)}\right)^2.
    \end{aligned}
\end{equation}  

Replacing the above result back in~\eqref{eq:th1proof58}, we get the final result
  \begin{align}\label{eq:finalResTheo}
 &\hspace{-16mm}\frac{1}{K} \sum_{k=0}^{K-1} \mathbb{E} \left[\Vert \nabla F^{(k)}(\mathbf{w}^{({k})}) \Vert^2\right] \leq \frac{8 \sqrt{{e}^{\mathsf{max}}_{\mathsf{avg}}}}{\alpha \hat{e}^{\mathsf{min}}_{\mathsf{avg}}\sqrt{K}}
   \left({F^{(0)}(\mathbf{w}^{(0)}) - F^{{(k)}^\star}}\right)+\frac{8 \sqrt{{e}^{\mathsf{max}}_{\mathsf{avg}}}}{\alpha\hat{e}^{\mathsf{min}}_{\mathsf{avg}}\sqrt{K}}\sum_{k=1}^{K-1}{  \Delta^{(k)}}
  \nonumber\\&\hspace{-6mm}+\frac{80 \beta^2 \alpha^2}{K^2{e}^{\mathsf{min}}_{\mathsf{avg}}}\sum_{k=0}^{K-1}
   \sum_{m\in\mathcal{M}}
   \sum_{y\in \mathcal{N}_m^{(k)}}\frac{{D}^{(k)}_y}{D^{(k)}} { \left(e_y^{(k)}-1\right)\left(1-\frac{{B}^{(k)}_y}{{D}^{(k)}_y}\right) \frac{({\sigma}_{y}^{(k)})^2}{{B}^{(k)}_y}\Theta_y^2}
     +\frac{80 \beta^2 \alpha^2}{K^2{e}^{\mathsf{min}}_{\mathsf{avg}}}\sum_{k=0}^{K-1} {\zeta_2 \left(e_{\mathsf{max}}^{(k)}\right)\left(e_{\mathsf{max}}^{(k)}-1\right)}
     \nonumber\\&\hspace{-8mm}
     +\frac{1}{K}\sum_{k=0}^{K-1} 24{M} (\lambda^{(k)})^{2\varphi^{(k)}} \left(\Xi^{(k)}\right)^2+\frac{16 \beta \alpha \hat{e}^{\mathsf{max}}_{\mathsf{avg}}}{K\sqrt{K}\sqrt{{e}^{\mathsf{min}}_{\mathsf{avg}}}}\sum_{k=0}^{K-1}
   \sum_{m\in\mathcal{M}}
   \sum_{y\in \mathcal{N}_m^{(k)}}\left(\frac{{D}^{(k)}_y}{{D}^{(k)} \sqrt{e^{(k)}_y}}\right)^2
\left(1-\frac{{B}^{(k)}_y}{{D}^{(k)}_y}\right) \frac{({\sigma}_{y}^{(k)})^2}{{B}^{(k)}_y} \Theta^2_y.\hspace{-6mm}
 \end{align}

 \section{Proof of Corollary~\ref{co:main}}\label{app:co}
\label{subsection:Appen-Corollary}

\noindent Assuming $\Delta^{(k)}\leq \frac{\Upsilon}{K}$ and considering 
the assumption $\left(1-\frac{{B}^{(k)}_y}{{D}^{(k)}_y}\right) \frac{({\sigma}_{y}^{(k)})^2}{{B}^{(k)}_y}\Theta_y^2\leq \vartheta$ simplifies the bound in~\eqref{eq:finalResTheoMain} as follows:
\begin{align}\label{eq:co1}
 &\hspace{-16mm}\frac{1}{K} \sum_{k=0}^{K-1} \mathbb{E} \left[\Vert \nabla F^{(k)}(\mathbf{w}^{({k})}) \Vert^2\right] \leq \frac{8 \sqrt{{e}^{\mathsf{max}}_{\mathsf{avg}}}}{\alpha \hat{e}^{\mathsf{min}}_{\mathsf{avg}}\sqrt{K}}
   \left({F^{(0)}(\mathbf{w}^{(0)}) - F^{{(k)}^\star}}\right)+\frac{8\Upsilon \sqrt{{e}^{\mathsf{max}}_{\mathsf{avg}}}}{\alpha\hat{e}^{\mathsf{min}}_{\mathsf{avg}}\sqrt{K}}
  \nonumber\\&\hspace{-6mm}+\frac{80 \beta^2 \alpha^2}{K^2{e}^{\mathsf{min}}_{\mathsf{avg}}}
K \left(e_{\mathsf{max}}-1\right)\vartheta
     +\frac{80 \beta^2 \alpha^2}{K^2{e}^{\mathsf{min}}_{\mathsf{avg}}}K {\zeta_2 \left(e_{\mathsf{max}}\right)\left(e_{\mathsf{max}}-1\right)}
     \nonumber\\&\hspace{-8mm}
     +\frac{1}{K}\sum_{k=0}^{K-1} 24{M} (\lambda^{(k)})^{2\varphi^{(k)}} \left(\Xi^{(k)}\right)^2+\frac{16 \beta \alpha \hat{e}^{\mathsf{max}}_{\mathsf{avg}}}{K\sqrt{K}\sqrt{{e}^{\mathsf{min}}_{\mathsf{avg}}}}K\vartheta.\hspace{-6mm}
 \end{align}
Assuming ${\varphi^{(k)}} \geq \frac{1}{2} \left[\log_{\lambda^{(k)}}\left( \frac{\xi}{\sqrt{K}\left(\Xi^{(k)}\right)^2{M}^{(k)}}\right)\right]^+$ implies that (note that $\lambda^{(k)}<1$)
\begin{align}
&\nonumber {2\varphi^{(k)}} \geq \log_{\lambda^{(k)}}\left( \frac{\xi}{\sqrt{K}\left(\Xi^{(k)}\right)^2{M}^{(k)}}\right)\\
&\Rightarrow (\lambda^{(k)})^{2\varphi^{(k)}} \leq \frac{\xi}{\sqrt{K}\left(\Xi^{(k)}\right)^2{M}^{(k)}}\\
&\Rightarrow \nonumber {M}^{(k)} (\lambda^{(k)})^{2\varphi^{(k)}} \left(\Xi^{(k)}\right)^2 \leq\xi/\sqrt{K}.
\end{align}
Replacing this result back in~\eqref{eq:co1} yields
\begin{align}
&\hspace{-16mm}\frac{1}{K} \sum_{k=0}^{K-1} \mathbb{E} \left[\Vert \nabla F^{(k)}(\mathbf{w}^{({k})}) \Vert^2\right] \leq \frac{8 \sqrt{{e}^{\mathsf{max}}_{\mathsf{avg}}}}{\alpha \hat{e}^{\mathsf{min}}_{\mathsf{avg}}\sqrt{K}}
   \left({F^{(0)}(\mathbf{w}^{(0)}) - F^{{(k)}^\star}}\right)+\frac{8\Upsilon \sqrt{{e}^{\mathsf{max}}_{\mathsf{avg}}}}{\alpha\hat{e}^{\mathsf{min}}_{\mathsf{avg}}\sqrt{K}}
  \nonumber\\& \nonumber\hspace{-6mm}+\frac{80 \beta^2 \alpha^2}{K{e}^{\mathsf{min}}_{\mathsf{avg}}}
 \left(e_{\mathsf{max}}-1\right)\vartheta
     +\frac{80 \beta^2 \alpha^2}{K{e}^{\mathsf{min}}_{\mathsf{avg}}} {\zeta_2 \left(e_{\mathsf{max}}\right)\left(e_{\mathsf{max}}-1\right)}
     + 24\frac{\xi}{\sqrt{K}}+\frac{16 \beta \alpha \hat{e}^{\mathsf{max}}_{\mathsf{avg}}}{\sqrt{K}\sqrt{{e}^{\mathsf{min}}_{\mathsf{avg}}}}\vartheta,
      \end{align}
     and thus
     \begin{align}
&\frac{1}{K} \sum_{k=0}^{K-1} \mathbb{E} \left[\Vert \nabla F^{(k)}(\mathbf{w}^{({k})}) \Vert^2\right] =\mathcal{O}\left(\frac{ \sqrt{{e}^{\mathsf{max}}_{\mathsf{avg}}}}{\alpha \hat{e}^{\mathsf{min}}_{\mathsf{avg}}\sqrt{K}}\right)+\mathcal{O}\left(\frac{\Upsilon \sqrt{{e}^{\mathsf{max}}_{\mathsf{avg}}}}{\alpha\hat{e}^{\mathsf{min}}_{\mathsf{avg}}\sqrt{K}} \right)+ \mathcal{O}\left(\frac{ \beta^2 \alpha^2}{K{e}^{\mathsf{min}}_{\mathsf{avg}}}
 \left(e_{\mathsf{max}}-1\right)\vartheta\right) \nonumber\\&
 + \mathcal{O}\left(
     \frac{\beta^2 \alpha^2}{K{e}^{\mathsf{min}}_{\mathsf{avg}}} {\zeta_2 \left(e_{\mathsf{max}}\right)\left(e_{\mathsf{max}}-1\right)}\right)+ \mathcal{O}\left(\frac{\xi}{\sqrt{K}}\right)+\mathcal{O}\left(\frac{\beta \alpha \hat{e}^{\mathsf{max}}_{\mathsf{avg}}}{\sqrt{K}\sqrt{{e}^{\mathsf{min}}_{\mathsf{avg}}}}\vartheta \right)\nonumber \\&
     = \mathcal{O}\left(1/\sqrt{K}\right), \hspace{-6mm}
 \end{align}
which concludes the proof.

\section{Characterization of the Gradient of the Critic's Loss}
\label{subsection:Appen-LossCritic}
Inspired by \cite{wai2020provably}, our proof for the Lipschitz continuity property of the MSPBE loss function's gradient is presented as below. \textcolor{black}{First, the loss function in \eqref{Equa:loss} can be re-written as
\begin{equation}
E_\theta(\omega) = \frac{1}{2} || V_\omega -\Lambda (B_\theta V_\omega)||^2 = \frac{1}{2} || \Lambda (V_\omega - B_\theta V_\omega)  ||^2, (since~V_\omega =\Lambda V_\omega),
\end{equation}
which is upper-bounded by its expectation value given by the loss across the entire system state vector $H$: 
\begin{equation}
\label{Equation:Loss_define}
 E_\theta(\omega) \leq \frac{1}{2} || \Lambda (V_\omega - B_\theta V_\omega)  ||^2_H, 
\end{equation}
where $H$ is the diagonal matrix whose elements are within the stationary system state distribution $d^{\pi_\theta(s)}$, $\forall s \in \mathcal{S}$ as defined in section~\ref{subsection:CriticDesign}. Under this stationary condition, \eqref{Equation:Loss_define} can be further analyzed as:
\begin{align}
&\hspace{-15mm} E_\theta(\omega) = \frac{1}{2} \left[ \Lambda(V_\omega - B_\theta V_\omega) \right]^\top H  \left[ \Lambda(V_\omega - B_\theta V_\omega) \right]
 \\&= \frac{1}{2} (V_\omega - B_\theta V_\omega)^\top \Lambda^\top \Eta  \Lambda (V_\omega - B_\theta V_\omega) \\&= \frac{1}{2} || \bar{\Lambda}\sqrt{\Eta}  (V_\omega - B_\theta V_\omega) ||_2^2,
\hspace{-6mm}\end{align}
where $\bar{\Lambda} = \sqrt{\Eta} \upphi^\top (\upphi \Eta \upphi^\top)^{-1} \upphi \sqrt{\Eta} $ is an orthogonal projector of a column element of $\sqrt{\Eta} \upphi^\top$.} Then, the Hessian product of the loss function is given by
\begin{equation}
\mathds{H}(\omega)= \upphi (I - \gamma \Rho)^\top \sqrt{\Eta} \bar{\Lambda} \sqrt{\Eta} (I - \gamma \Rho)\upphi ^\top,
\end{equation}
where $I \in \mathds{R}^{|\mathcal{S}| \times |\mathcal{S}|}$  is the identity matrix, and $\Rho$ is the transition probability as defined in Section~\ref{Subsection:DRLformulate}. Since  $ E_\theta(\omega)$ is convex, its gradient is $\ell$-Lipschitz if 
\begin{equation}
\label{cond1}
\omega^\top \mathds{H}(\omega)  \omega \leq l||\omega||^2_2
\end{equation}
 holds. Indeed, we have
\begin{dmath}
\omega^\top \mathds{H}(\omega) \omega  = \omega^\top \upphi (I - \gamma \Rho)^\top \sqrt{\Eta} \bar{\Lambda} \sqrt{\Eta} (I - \gamma \Rho)\upphi^\top \omega \\=
||\bar{\Lambda}\sqrt{\Eta}(I - \gamma \Rho)\upphi^\top \omega||^2_2 \\=
||\sqrt{\Eta}\upphi  \omega||_2^2 - 2\gamma \omega^\top \upphi  \Eta \Rho \upphi^\top \omega  +
\gamma^2 ||\bar{\Lambda} \sqrt{\Eta} \Rho  \upphi^\top \omega||^2_2 \\ \leq
||\sqrt{\Eta}\upphi  \omega||_2^2 - 2\gamma \omega^\top \upphi  \Eta \Rho \upphi^\top \omega  +
\gamma^2 ||\sqrt{\Eta} \Rho  \upphi^\top \omega||^2_2 \\=
||\sqrt{\Eta} (I - \gamma \Rho)\upphi^\top \omega||^2_2. 
\end{dmath}
Using the Cauchy-Schwartz inequality, we have 
\begin{dmath}
\omega^\top \mathds{H}(\omega) \omega \leq \left( ||\sqrt{\Eta} \upphi^\top \omega||_2 +\gamma||\sqrt{\Eta} \Rho  \upphi^\top \omega||_2 \right)^2 \leq
\left(  ||\sqrt{\Eta} \upphi^\top \omega||_2^2 + \gamma||\sqrt{\Eta} \upphi^\top \omega||_2 \right)^2 = 
(1+\gamma)^2||\sqrt{\Eta} \upphi^\top \omega||_2^2.
 \end{dmath}
Here, we have
\begin{dmath}
 ||\sqrt{\Eta} \upphi^\top \omega||_2^2 = ||\upphi^\top\omega||^2_\Rho = \sum_{i \in |\mathcal{S}|} d^i \left({(\upphi^i)}^\top \omega\right)^2 \\ \leq 
 \sum_{i \in |\mathcal{S}|} d^i ||\omega^i||^2_2 ||\theta^i||^2_2 \leq \left(\underset{i}{\max}||\upphi^i||^2_2 \right) ||\theta^i||^2_2.
\end{dmath}
Thus, we get
\begin{equation}
\omega^\top \mathds{H}(\omega) \omega \leq (1+\gamma)^2\left(\underset{i}{\max}||\upphi^i||^2_2 \right) ||\theta^i||^2_2,
\end{equation}
which leads to \ref{cond1}, and completes our proof.

\section{Mini-Batch SGD Noise Characterization}
\label{app:SGDnoiseExp}
\begin{lemma}\label{lemma:SGDnoiseExp}
During each aggregation round $t$, for each MD/ES $y \in \mathcal{N} \cup \mathcal{M}$, the variance of stochastic gradient during mini-batch gradient descent iteration $k$, i.e., $\widetilde{\nabla} F_y^{(t)}(\mathbf{x})$ is upper-bounded by
\begin{align}
    \mathbb{E}\left[ \left\Vert \sum_{d\in \mathcal{B}^{(k),e}_{y}} \hspace{-1mm} \frac{\nabla  f(\mathbf{w}^{(k),e-1}_{y},d)}{{{B}}^{(k)}_y}-\nabla  F^{(k)}_y(\mathbf{w}^{(k),e-1}_{y}) \right\Vert^2 \right] & \leq 2\left(1-\frac{{B}^{(k)}_y}{{D}^{(k)}_y}\right) \frac{({\sigma}_{y}^{(k)})^2}{{B}^{(k)}_y} \Theta^2_y,~\forall e,
\end{align}
where $\sigma_y^{(k)}$ denotes the sampled variance of data at the respective node.
\end{lemma}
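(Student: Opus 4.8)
The plan is to recognize the left-hand side as the variance of the sample mean of the per-datapoint gradients under mini-batch sampling \emph{without replacement} from the finite local dataset $\mathcal{D}^{(k)}_y$, and to bound it in two stages: first via the classical finite-population sampling variance, then via the Local Data Variability assumption (Definition~\ref{Assump:DataVariabilit}). To set up, fix $y$, $k$, $e$ and abbreviate $g_d := \nabla f(\mathbf{w}^{(k),e-1}_y, d)$ for $d\in\mathcal{D}^{(k)}_y$, with population mean $\bar{g} := \frac{1}{D^{(k)}_y}\sum_{d\in\mathcal{D}^{(k)}_y} g_d = \nabla F^{(k)}_y(\mathbf{w}^{(k),e-1}_y)$, and let $\hat{g} := \frac{1}{B^{(k)}_y}\sum_{d\in\mathcal{B}^{(k),e}_y} g_d$ denote the mini-batch average. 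The quantity to bound is then $\mathbb{E}[\|\hat{g}-\bar{g}\|^2]$, where the expectation is over the uniform choice of the size-$B^{(k)}_y$ subset $\mathcal{B}^{(k),e}_y\subseteq\mathcal{D}^{(k)}_y$.

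First I would compute the sampling variance exactly. Writing $\mathbf{1}_d$ for the indicator that $d$ is drawn into the batch, one has $\mathbb{E}[\mathbf{1}_d]=B^{(k)}_y/D^{(k)}_y$ and, for $d\neq d'$, $\mathbb{E}[\mathbf{1}_d\mathbf{1}_{d'}]=\frac{B^{(k)}_y(B^{(k)}_y-1)}{D^{(k)}_y(D^{(k)}_y-1)}$, so the indicators are \emph{negatively} correlated. Expanding $\hat{g}-\bar{g}=\frac{1}{B^{(k)}_y}\sum_d(\mathbf{1}_d-B^{(k)}_y/D^{(k)}_y)(g_d-\bar{g})$ and using $\sum_d(g_d-\bar{g})=\mathbf{0}$ to collapse the cross terms, a short computation of the indicator covariances yields the finite-population formula
\begin{equation}
\mathbb{E}\!\left[\|\hat{g}-\bar{g}\|^2\right] = \frac{D^{(k)}_y-B^{(k)}_y}{D^{(k)}_y-1}\cdot\frac{1}{B^{(k)}_y}\cdot\frac{1}{D^{(k)}_y}\sum_{d\in\mathcal{D}^{(k)}_y}\|g_d-\bar{g}\|^2.
\end{equation}
I would then apply the elementary bound $\frac{D^{(k)}_y-B^{(k)}_y}{D^{(k)}_y-1}\leq 2\,\frac{D^{(k)}_y-B^{(k)}_y}{D^{(k)}_y}=2\left(1-\frac{B^{(k)}_y}{D^{(k)}_y}\right)$, valid whenever $D^{(k)}_y\geq 2$, to turn the correction factor into the form appearing in the statement.

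It remains to control the gradient dispersion $\frac{1}{D^{(k)}_y}\sum_d\|g_d-\bar{g}\|^2$ by the data variance $({\sigma}^{(k)}_y)^2$. Here I would invoke the pairwise-difference identity $\frac{1}{D}\sum_d\|g_d-\bar{g}\|^2=\frac{1}{2D^2}\sum_{d,d'}\|g_d-g_{d'}\|^2$ and then the Lipschitz-in-data bound of Definition~\ref{Assump:DataVariabilit}, $\|g_d-g_{d'}\|=\|\nabla f_y(\mathbf{w},d)-\nabla f_y(\mathbf{w},d')\|\leq \Theta_y\|d-d'\|$, which gives $\frac{1}{D^{(k)}_y}\sum_d\|g_d-\bar{g}\|^2\leq \Theta_y^2\,\frac{1}{2(D^{(k)}_y)^2}\sum_{d,d'}\|d-d'\|^2=\Theta_y^2\,({\sigma}^{(k)}_y)^2$, the last equality being the same identity applied to the feature vectors whose variance is $({\sigma}^{(k)}_y)^2$. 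Chaining the three estimates produces exactly $2\left(1-\frac{B^{(k)}_y}{D^{(k)}_y}\right)\frac{({\sigma}^{(k)}_y)^2}{B^{(k)}_y}\Theta_y^2$.

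The main obstacle is the exact without-replacement variance computation: the negative covariance between the sampling indicators must be tracked carefully so that the cross terms combine (via $\sum_d(g_d-\bar{g})=\mathbf{0}$) into the clean finite-population-correction factor $\frac{D^{(k)}_y-B^{(k)}_y}{D^{(k)}_y-1}$, rather than the with-replacement factor $1$; everything after that is routine. One should also keep track of the $\frac{1}{D}$ versus $\frac{1}{D-1}$ variance convention, since it is precisely the slack between $\frac{D-B}{D-1}$ and $1-B/D$ that the factor of $2$ absorbs.
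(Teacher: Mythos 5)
Your proposal is correct and follows exactly the route the paper indicates for this lemma: the paper omits the details, stating only that the result follows from the variance of simple random sampling in~\cite{lohr2019sampling} together with Definition~\ref{Assump:DataVariabilit}, and your argument is precisely a careful writeup of those two ingredients (the without-replacement finite-population variance formula chained with the pairwise-difference identity and the data-variability Lipschitz bound). Your explicit tracking of the $\tfrac{1}{D}$ versus $\tfrac{1}{D-1}$ convention, with the factor of $2$ absorbing the slack, is a sound way to account for the constant in the stated bound.
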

\begin{proof}
The proof can be easily carried out using the result on the variance of  of simple random sampling in~\cite{lohr2019sampling} and our definition of the data variability in Definition~\ref{Assump:DataVariabilit}, and thus omitted for brevity.
\end{proof}
\end{document}